\newtheorem{mytheorem}{Theorem}
\newtheorem{myassumption}{Assumption}
\newtheorem{mylemma}{Lemma}
\newcommand{\eqbf}[1]{\boldsymbol{#1}}
\newcommand{\ie}{\emph{i.e.}}
\newcommand{\eg}{\emph{e.g.}}
\begin{document}

\title{Adaptive Sampled Softmax with Inverted Multi-Index: Methods, Theory and Applications}

\author{\name Jin Chen \email jinchen@ust.hk \\
       \addr Hong Kong University of Science and Technology \\
       Hong Kong
       \AND
       \name Jin Zhang \email jinzhang21@mail.ustc.edu.cn \\
       \name Xu Huang \email xuhuangcs@mail.ustc.edu.cn \\
       \addr University of Science and Technology of China\\
       \& State Key Laboratory of Cognitive Intelligence \\
       Hefei, China
       \AND
       \name Yi Yang \email imyiyang@ust.hk \\
       \addr Hong Kong University of Science and Technology \\
       Hong Kong
       \AND
       \name Defu Lian \Letter \email liandefu@ustc.edu.cn\\
       \name Enhong Chen \email cheneh@ustc.edu.cn \\
       \addr University of Science and Technology of China\\
       \& State Key Laboratory of Cognitive Intelligence  \\
       Hefei, China
       }


\maketitle

\begin{abstract}
The softmax function is a cornerstone of multi-class classification, integral to a wide range of machine learning applications, from large-scale retrieval and ranking models to advanced large language models. However, its computational cost grows linearly with the number of classes, which becomes prohibitively expensive in scenarios with millions or even billions of classes. 
The sampled softmax, which relies on self-normalized importance sampling, has emerged as a powerful alternative, significantly reducing computational complexity. Yet, its estimator remains unbiased only when the sampling distribution matches the true softmax distribution.
To improve both approximation accuracy and sampling efficiency, we propose the \texttt{MIDX} Sampler,  a novel adaptive sampling strategy based on an inverted multi-index approach.
Concretely, we decompose the softmax probability into several multinomial probabilities, each associated with a specific set of codewords and the last associated with the residual score of queries, thus reducing time complexity to the number of codewords instead of the number of classes. To further boost efficiency, we replace the query-specific residual probability with a simple uniform distribution, simplifying the computation while retaining high performance. Our method is backed by rigorous theoretical analysis, addressing key concerns such as sampling bias, gradient bias, convergence rates, and generalization error bounds. The results demonstrate that a smaller divergence from the ideal softmax distribution leads to faster convergence and improved generalization. This insight motivates our codeword learning strategy, which minimizes Kullback-Leibler (KL) divergence to enhance the overall learning process. Extensive experiments on large-scale language models, sequential recommenders, and extreme multi-class classification tasks confirm that the \texttt{MIDX}-Sampler delivers superior effectiveness and efficiency compared to existing approaches, making it a compelling solution for large-scale machine learning problems.
\end{abstract}

\begin{keywords}
  Softmax, Sampled Softmax, Sampling, Inverted Multi-Index, Multi-classification
\end{keywords}

\section{Introduction}
Machine learning techniques are widely used to extract patterns and capture the relationships between variables. One of the foundational problems in machine learning is the multi-class classification problem~\citep{aly2005survey,bishop2006pattern}, which arises in various domains such as image classification, natural language processing, information retrieval and recommender systems. The softmax function is commonly employed to estimate class probabilities based on computed logits from models. These predicted probabilities are then compared to true labels to compute the cross-entropy loss, and gradients over all candidate classes, derived from the partition function that sums over all logits, are computed to update model parameters.
However, this process introduces a significant computational burden, as the cost scales linearly with the number of classes. As a result, the softmax becomes computationally prohibitive when the number of classes grows very large.
Sampled softmax, which selects a fraction of samples to estimate the gradient over the whole corpus, garnered considerable attention due to its efficiency in handling large numbers of classes. The approach relies on self-normalized importance sampling to improve the approximation with lower variance, yet it still introduces a gradient bias compared to the original softmax. The only way to eliminate the bias is to use the true softmax distribution as the sampling distribution, which, however, is computationally inefficient.

Many previous studies have relied on simple, static sampling distributions, such as uniform or frequency-based distributions~\citep{mikolov2013distributed,bengio2008adaptive}, to reduce computational complexity. While these methods speed up the process, they often result in poor convergence, as the sampling distributions are significantly biased relative to the target softmax distribution. A more advanced approach, known as adaptive sampled softmax~\citep{bengio2008adaptive}, aims to address this by selecting a sampling distribution that closely approximates softmax, adapts to model changes, and prioritizes more challenging samples. However, this method becomes inefficient when applied to complex models, particularly in the case of millions or billions of classes. Numerous approaches have been proposed for dynamic negative sampling to enhance performance. MCMC-based methods~\citep{robert1999monte,bishop2006pattern}, for example, sample according to the long-term state transition matrix. However, these methods still suffer from high computational costs and the inevitable burn-in period. Other studies employ auxiliary networks to simulate the softmax distribution~\citep{salakhutdinov2015learning}, but the resulting sampling distributions are often intractable. More advanced techniques, such as those using quadratic kernels~\citep{blanc2018adaptive,yi2019sampling} to approximate softmax, provide poor approximations, limiting their effectiveness.

In light of these limitations, the need for more efficient sampling methods becomes clear.
In many machine learning models, logits are typically computed using a multi-layer perceptron (MLP), where the output layer’s dimensionality corresponds to the number of classes in the classification task. More broadly, the input to the softmax function can be viewed as the dot product between the query embedding and the candidate embeddings. The query embedding, often derived from sophisticated network architectures, captures intricate semantic or contextual information. The candidate embeddings, which represent individual classes, are usually stored in a cached embedding table for efficient retrieval.
By calculating the dot product between the query embedding and each candidate embedding, a similarity or relevance score is obtained, reflecting the relevance between the query and each class. These scores serve as the logits for the softmax function, which are later normalized as the class probabilities.
This concept reminds us of the maximum inner product search (MIPS) task~\citep{shrivastava2014asymmetric}, widely used in information retrieval and recommender systems. However, the key distinction is that MIPS algorithms deterministically return the same results for the same query, whereas sampling should introduce stochasticity based on sampling probabilities. One example is using the Gumbel-softmax trick ~\citep{jang2016categorical,lindgren2021efficient,mussmann2016learning} by adding random Guumbel variables to the embeddings to incorporate the stochasticity. But this approach still requires calculating over all classes for each sampling trial due to the change of the embeddings, which becomes inefficient for large-scale tasks.
Exploiting this connection with MIPS opens up possibilities for more efficient sampling by leveraging the similarities between class embeddings through efficient data index structures. Previous works have employed approximate nearest neighbor (ANN) methods, such as the Faiss library~\citep{douze2024faiss}, for quick retrieval and efficient sampling of similar class embeddings. Heuristic random sampling methods have also been incorporated to introduce uncertainty into the process. However, these methods do not derive the sampling distribution directly~\citep{spring2017new}, leading to inconsistencies in the self-normalized importance weights, which results in suboptimal performance.
Nonetheless, the objective of this paper is to reassess the relationship between indexing structures and sampling distributions, while achieving both efficiency and accurate approximation in sampling.


In this way, we propose a novel sampler, \texttt{MIDX} Sampler, based on the advanced MIPS index, inverted multi-index. This approach allows us to achieve a less biased approximation of the softmax distribution and enables efficient sampling for the training of models. The quantization-based structure of the inverted multi-index facilitates the decomposition of the softmax probability into multiplications of multinomial probabilities, which enables independent sampling from multiple individual stages.
More specifically, except for the last stage, we sample a codeword index based on the previously sampled codeword. In the final stage, we select samples from the group of classes that are quantized to the sampled codewords. To further enhance sampling efficiency, we replace the query-adaptive distribution in the final stage with a static uniform distribution, where the sampling probability is based on the softmax normalization of the scores between queries and codewords. This substitution eliminates the need for recomputation during each training epoch while maintaining a relatively accurate estimation of the softmax distribution.
To better understand the impact of our sampler, we conduct a detailed theoretical analysis, examining the divergence between the proposal distributions and the target softmax distribution, as well as the gradient bias between the full softmax and the sampled softmax. Additionally, we analyze the convergence rate and the generalization error bound. Our findings show that smaller biases, as indicated by the KL-divergence between the proposal and the true softmax distribution, lead to faster convergence and improved generalization performance. This motivates the adoption of a specialized learning strategy for codewords, guided by minimizing the KL-divergence. Finally, we apply the \texttt{MIDX}-based samplers to various tasks, including natural language processing tasks, top-k recommenders and extreme classification tasks. Experimental results demonstrate the effectiveness and efficiency of the proposed samplers in these tasks. The implementation is available at \url{https://github.com/XuHwang/MIDX_Journal}.

Our main contributions can be summarized as follows:
\begin{itemize}[leftmargin=*,itemsep=0.85pt]
    \item We introduce a novel, precise estimation of the full softmax using quantization-based inverted multi-indexes, and prove that this method provides an exact approximation of the softmax distribution.
    \item We design a highly efficient sampling distribution based on decomposition, adopting a uniform distribution to reduce the computational cost from linear dependence on the number of classes to sub-linear dependence on the number of codewords.
    \item We provide a comprehensive theoretical comparison of the proposed sampling distributions with the ideal softmax distribution, including bounds on KL-divergence, gradient bias, convergence rates, and generalization error. To the best of our knowledge, this is the first work to offer such extensive theoretical analysis for different sampling distributions.
    \item We conduct extensive experiments across various tasks, including extreme classification, language models, and sequential recommenders. Our empirical results demonstrate the effectiveness, efficiency, and scalability of the \texttt{MIDX} sampler in practical applications.
\end{itemize}


\section{Related Work} \label{sec:related_work}
In this section, we review related work, including the approaches for efficient softmax calculation, the maximum inner product search and the negative sampling techniques.

\subsection{Efficient Softmax}
Despite the appealing performance of softmax for classification problems, the bottleneck lies in the expensive computational cost of the partition function given the extremely significant number of classes. Hierarchical Softmax~\citep{goodman2001classes} is an early attempt to speed up the training by constructing tree structures, which reduces the time cost $\mathcal{O}(N)$ ($N$ the number of classes) to $\mathcal{O}(\log N)$.  Following that, deeper hierarchies~\citep{morin2005hierarchical, mnih2008scalable} are proposed to extend the approach and the Huffman coding based on word frequency is proposed for an optimal hierarchy~\citep{mikolov2013efficient}. Although these methods greatly reduce the computational complexity according to the tree structures, it is difficult to implement in the nowadays modern computations GPUs for fast matrix multiplications. Noise Contrastive Estimation~\citep{gutmann2010noise} avoids the softmax computation by converting the multi-classification problem into a binary logistic regression to distinguish the truth data from a generated noise one, and has been successfully adopted in language models~\citep{mnih2013learning}. 
Sparsemax~\citep{martins2016softmax} reduces the computational cost by assigning zero probabilities to some of the outputs and pushes the output to be more sparse. 

Sampling-based methods, which sample a fraction of classes to approximate the softmax function over all classes, have attained growing interest owing to their high efficiency~\citep{yang2024does}. Importance sampling, a Monte Carlo method, is popular to estimate the softmax function in the area of natural language processing~\citep{bengio2008adaptive} and formulates the sampled softmax. Tracing back to the sampling techniques in language models, the uniform distribution and unigram distribution are used to accelerate the training process~\citep{mikolov2013distributed}. With a deeper understanding of sampling methods, adaptive sampling methods, where more informative candidates are more likely to be chosen, have been proposed for better approximating the softmax. In recommender systems, the dynamic sampler is first proposed for Top-k collaborative filtering~\citep{zhang2013optimizing}, where the high-ranked items over a randomly sampled set would contribute more to the gradient descent.
Kernel-based methods choose kernel functions to estimate the exponential operation, \eg, quadratic kernel~\citep{blanc2018adaptive} and random fourier features kernel~\citep{rawat2019sampled}. This approach facilitates efficient computation over the entire output space through parallel-enabled tree structures.


\subsection{Maximum Inner Product Search}
Maximum Inner Product Search (MIPS) is extensively employed for conducting similarity searches, enabling rapid top-k recommendations and document retrievals. Previous studies have addressed the MIPS problem by approximating or precisely reducing it to the nearest neighbor search (NNS). Notably, when the vectors possess identical norms, solving the MIPS problem is equivalent to performing NNS. In lower-dimensional spaces, exact solutions like kd-trees~\citep{bentley1975multidimensional} and R-trees~\citep{guttman1984r} can provide an exact nearest neighbor solution. However, these methods suffer from inefficient indexing when the dimensionality exceeds 20. To strike a balance between retrieval accuracy and search time in significantly higher dimensions, researchers have proposed well-designed indexes, including hashing, quantization, and graph-based methods~\citep{li2019approximate}. Local sensitive hashing methods~\citep{andoni2008near} hash similar points into the same bucket, thereby offering near-optimal guarantees for the tradeoff between index size and search time. Learning to hash~\citep{kulis2009learning} focuses on acquiring hashing functions that map high-dimensional data to binary hash codes. Quantization-based methods~\citep{jegou2010product,gersho2012vector} enhance semantic preservation by grouping similar data points into common codewords. Graph-based methods, such as HNSW~\citep{malkov2018efficient} and NSG~\citep{fu12fast}, achieve superior recall performance on real-world datasets by constructing hierarchical navigable graphs.

\section{Preliminaries}
Let us start with the classification problem with $N$ classes. Each sample $\bm{x}_i \in \mathcal{X}$ has one or multi label $y_i \in \mathcal{Y} = \{1, 2,..., N \}$ or $Y_i \subset \mathcal{Y}$. Each class has a one-hot embedding vector $\bm{y}_i \in \{0,1\}^N$. The classifier aims to predict the class given the query sample $\bm{x}_i$. The model first employs a mapping function $\phi_q: \mathcal{X} \rightarrow \mathbb{R}^D$ to generate the query or context embedding $\bm{z} = \phi_q(\bm{x}_i)$ and another mapping function $\phi_c: \mathcal{Y} \rightarrow \mathbb{R}^D$ to obtain the class embedding $\bm{q}_i = \phi_c(y_i)$. The model parameters involved in the mapping functions are summarized and denoted as $\bm{\theta}$. The prediction similarity score is calculated through the dot product $o_i = \bm{z}^\top  \bm{q}_i$. According to the predicted scores, a ranking list of classes is induced.

\subsection{Full Softmax}
The class probability distribution $\bm{p} \in [0,1]^N$ is computed by an exponential function:
\begin{displaymath}
\footnotesize
    p_i = \frac{ \exp o_i}{\sum_{j=1}^N \exp o_j}
\end{displaymath}
This refers to the \texttt{Softmax} function. The denominator corresponds to the \texttt{partition function}, and its time complexity $\mathcal{O}(N)$ increases linearly with the number of classes, referred to as the \textsc{full} softmax. The similarity score $o_i$ is often referred to as the logit. The cross-entropy loss is used to minimize the difference between the label $\bm{y}$ and the class probability distribution $\bm{p} $:
\begin{displaymath}
\footnotesize
    \ell (\bm{y}, \bm{p}) = - \sum_{i=1}^N y_i \log p_i = \log \sum_{j=1}^N \exp(o_j) -  o_i 
\end{displaymath}
Specifically, in order to train the neural network models, the gradient of the loss with respect to the model parameters $\bm{\theta}$ for each iteration is required, \ie,
\begin{displaymath}
\footnotesize
    \nabla_{\bm{\theta}} \ell(\bm{y}, \bm{p}) = - \nabla_{\bm{\theta}} o_i + \sum_{j=1}^N p_j \cdot \nabla_{\bm{\theta}} o_j = - \nabla_{\bm{\theta}} o_i + \mathbb{E}_{k\sim p}[\nabla_{\bm{\theta}} o_k]
\end{displaymath}
where the expectation is calculated over the softmax distribution $p$ and requires the enumeration over all classes. When the number of classes grows considerably large, the computational cost of learning the full softmax becomes expensive.

\subsection{Sampled Softmax}
The concept of sampled softmax was initially introduced to accelerate the training of language models~\citep{bengio2008adaptive}. Instead of the computation over all $N$ classes, only a sample of $M$ negative classes concerning the positive class are considered to approximate the \textsc{full} softmax. Given a training sample $\left(\bm{x}_i, y_i\right)$, a sample of $M$ negative classes is drawn according to the sampling distribution $q$. Assuming that $\mathcal{S}=\{s_1, s_2, ..., s_M\} \subset [N] \backslash \{i\}$ denotes the sampled set where $s_i$ refers to the class index, with the adjusted logits $\bm{o}' = \{o'_1, o'_2,...,o'_{M+1} \}$ and $o'_1 = o_i$, 
\begin{equation}
o'_{i+1} = 
\begin{cases}
    o_{s_i} - \ln(M q_{s_i}) & \text{if } y_{s_i} = 0 \\
    o_i & \text{else}
\end{cases}
\label{eq:correct_logit}
\end{equation}
the sampled softmax is unbiased in the limit $M \rightarrow \infty$~\citep{bengio2008adaptive}. The corrected softmax probability $\bm{p}'$ is calculated over the corrected logits:
\begin{displaymath}
\footnotesize
    p'_i = \frac{\exp (o'_i)}{ \sum_{j=1}^{M+1} \exp (o'_j)}
\end{displaymath}
Similarly, we can get the adjusted label vector $\bm{y}'$ by the projection from the original labels. Thus, the sampled softmax loss follows as
\begin{displaymath}
\footnotesize
    \ell' (\bm{p}', \bm{y}') = - \sum_{j=1}^{M+1} y'_j \log p'_j = \log \sum_{j=1}^{M+1} \exp(o'_j) -  o_i 
\end{displaymath}
In contrast to the \texttt{Full} softmax, the sampled softmax depends only on $M+1$ classes, which considerably reduces the computational cost $\mathcal{O}(N)$ to $\mathcal{O}(M)$. The gradient of the sampled softmax then follows as:
\begin{displaymath}
\footnotesize
    \nabla_{\bm{\theta}} \ell'(\bm{p}', \bm{y}') = - \nabla_{\bm{\theta}} o'_i + \sum_{j=1}^{M+1} p'_j \cdot \nabla_{\bm{\theta}} o'_j = - \nabla_{\bm{\theta}} o_i + \sum_{j=1}^{M+1} p'_j \cdot \nabla_{\bm{\theta}} o_j 
\end{displaymath}
Despite the limited number of sampled negative classes, ideally, we would like to find a good sampling distribution to converge to the sample value as the \texttt{Full} softmax. If the sampled softmax is biased, \ie, 
\begin{displaymath}
\footnotesize 
    \mathbb{E}\left[ \nabla_{\bm{\theta}} \ell'(\bm{p}', \bm{y}') \right] \overset{?}{=} \nabla_{\bm{\theta}} \ell(\bm{y}, \bm{p})
    \quad \Rightarrow \quad
    \mathbb{E}\left[ \sum_{j=1}^{M+1} p'_j \cdot \nabla_{\bm{\theta}} o_j \right] \overset{?}{=} \sum_{j=1}^N p_j \cdot \nabla_{\bm{\theta}} o_j
\end{displaymath}
\cite{bengio2008adaptive} have highlighted that only the sampling distribution $q_i = p_i \propto \exp(o_i)$ achieves an unbiased estimator $\mathbb{E}\left[\nabla_{\bm{\theta}} \ell' \right] =\nabla_{\bm{\theta}} \ell$. Nevertheless, using such a sampling distribution remains computationally expensive during the training process, as it still requires $\mathcal{O}(N)$ time complexity for each training data point in every iteration. Increasing the number of negative samples $m$ can help alleviate sampling bias; however, an alternative distribution that is closer to the target distribution can be chosen to reduce the required sample size. Commonly used solutions include uniform and unigram distributions. However, these distributions are independent of the input and remain constant throughout the model training, leading to a significant bias in gradient estimation as they deviate considerably from the softmax distribution. Another noteworthy research direction focuses on the development of advanced adaptive samplers.
The dynamic sampler, which selects high-ranked items from a randomly sampled set, has demonstrated superior performance in top-K item recommendation tasks~\citep{zhang2013optimizing}. The LSH-based sampler~\citep{vijayanarasimhan2014deep} selects nearest neighbors based on hashing buckets to improve the approximation in the embedding space.
The kernel-based approaches~\citep{blanc2018adaptive,rawat2019sampled} offer an alternative solution by directly estimating the exponential computation through kernel functions. These approaches leverage the decomposed nature of the kernel function, enabling efficient parallel computation. However, the approximation based on the kernel function still suffers from significant deviations. For instance, the quadratic kernel may assign higher values to negative logits~\citep{blanc2018adaptive}, while the RFF kernel requires additional normalization operations on the logits~\citep{rawat2019sampled}. Therefore, it is worth exploring the design of an adaptive sampler for the training of the softmax that achieves a more accurate approximation while making minimal changes to the logits.

\section{Sampling with Inverted Multi-Index}
Examining the calculation of logits, which can be decomposed into the query embedding $\bm{z}$ and class embedding $\bm{q}$, according to the sampling principle, where higher scores correspond with a higher sampling probability, the class exhibiting a higher inner product value with the given query embedding has a higher probability of being sampled. This characteristic establishes a connection with Maximum Inner Product Search (MIPS), which is further reduced to the approximate nearest neighbor (ANN) problem to get the solution. However, it is important to note that while MIPS consistently returns the same outcomes for identical queries, samplers yield stochastic results with inherent randomness for the identical query. This distinction motivates the design of samplers based on MIPS structures, \ie, the inverted multi-index, which offers a sub-linear time complexity for efficient sampling.

\begin{figure}[ht]
    \centering
    \includegraphics[width=0.85\textwidth]{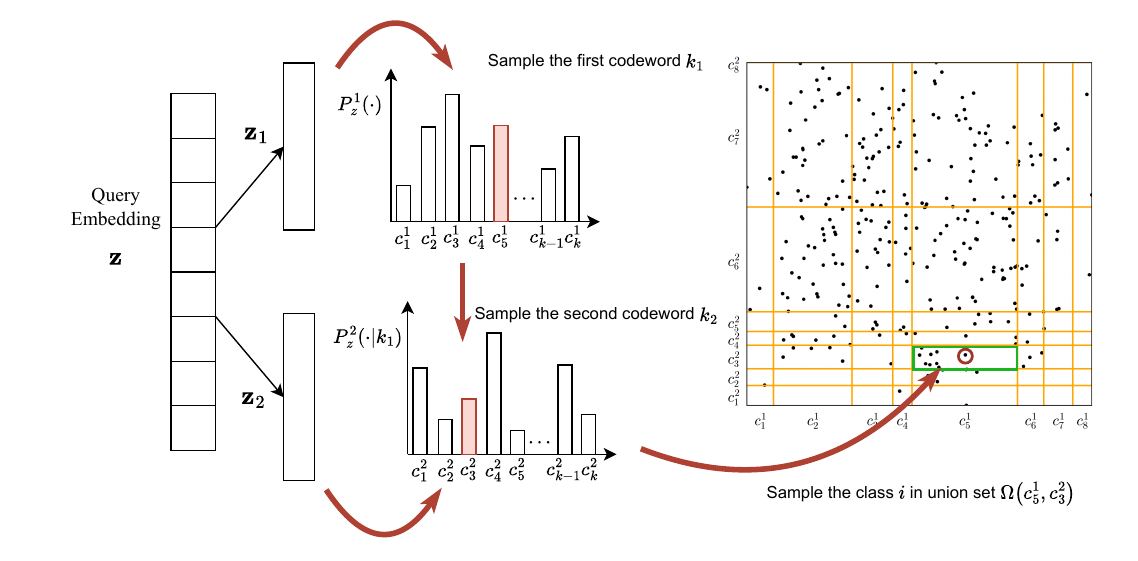}
    \vspace{-1.5em}
    \caption{Procedure of sampling a class given a query embedding through \texttt{MIDX} samplers. The subvectors $\bm{z}_1$ and $\bm{z}_2$ are derived depending on different quantizers, \eg, the product quantizers and the residual quantizers. The example follows as 1. Select the first codeword depends on the probability $P^1(\cdot)$; 2. Select the second codeword depends on the probability $P^2(\cdot|c_5^1)$; 3. Sample classes from the union set $\Omega(c_5^1, c_3^2)$, which includes the classes who are assigned to the 5-th codebook with the first subvector and to the 3-rd codebook with the second subvector.}
    \label{fig:illustration_midx_sampler}
\end{figure}

\subsection{Inverted Multi-Index}
The inverted index is a widely used structure in retrieval and ranking systems, particularly in scenarios involving a large number of candidates. Traditionally, this structure revolves around a \emph{codebook}—a collection of \emph{codewords} generated through clustering algorithms. The concept of the inverted multi-index~\citep{babenko2014inverted} builds upon this foundation by introducing \textit{multiple} codebooks, replacing the standard quantization method in the inverted index with the powerful technique of product quantization~\citep{jegou2010product}. This approach enables highly efficient similarity searches across divided subspaces, making it particularly well-suited for large-scale, high-dimensional data retrieval.

Constructing such an index involves two key steps: the choice of quantizer and the algorithm used to learn the codewords. Suppose there are $B$ codebooks, each containing $K$ codewords. Let $\mathcal{C}_l$ represent the $l$-th codebook, and let $\bm{c}_j^l$ denote the $j$-th codeword in $\mathcal{C}_l$. The quantizer aims to find the codeword closest to a given vector within each codebook. Given the vector $\bm{x}$, the quantizer function $f_q^l(\bm{x})$ is given by: $f_q^l(\bm{x}) = \mathop{\arg \min}_{\bm{c}_i^l \in \mathcal{C}_l} d (\bm{x}, \bm{c}_i^l)$, where $d (\bm{x}, \bm{c}_i^l)$ is the distance function, such as the Euclidean distance function. 
Regarding the learning of codewords, K-Means clustering is commonly employed, using all candidate vectors as input and producing the resulting clusters as codewords.

The samplers discussed aim to sample classes to estimate the softmax distribution, leveraging index structures built for the class embeddings $\{\bm{q}_i\}_{i=1}^N$. In the following, we explore two different quantizer methods in detail.
\begin{itemize}[leftmargin=*,itemsep=1pt,parsep=3pt]
    \item Product Quantization: This method forms the foundation of the standard inverted multi-index. It decomposes the vector space into $B$ subspaces and applies a codeword learning algorithm in each subspace to generate codewords. Specifically, the class embedding $\bm{q}$ is partitioned as $\bm{q} = [\bm{q}^1, \bm{q}^2, \dots, \bm{q}^B]$, where each $\bm{q}^l$ represents a subvector with dimensionality $\frac{D}{B}$, and $[,]$ denotes the concatenation operation. For the $l$-th codebook $\mathcal{C}_l$, the codewords $\{\bm{c}_j^l \in \mathbb{R}^{\frac{D}{B}}\}_{j=1}^K$ are generated using the K-Means clustering algorithm, with the subvectors $\{\bm{q}i^l\}_{i=1}^N$ as input.
    \item Residual Quantization: It improves the quantization quality by considering the residuals after the initial quantization. Concretely, to build the $l$-th codebook $\mathcal{C}_l$, the residual vectors are calculated as $\bm{q} -  \sum^{l-1}_{j=1} \bm{c}_{k_j}^j $, denoting the differences between the original vector $\bm{q}$ and the reconstructed vector. Then all residual vectors of classes are input into the K-Means clustering to get the codewords, where the codeword $\bm{c}_j^l$ is in $D$ dimension.
 \end{itemize}
The final reconstructed vector is expressed as $[\bm{c}{k_1}^1 \oplus \bm{c}{k_2}^2 \oplus \dots \oplus \bm{c}_{k_B}^B]$, where $\oplus$ represents concatenation in product quantization and addition in residual quantization.

After constructing the index for the inverted multi-index, the similarity search becomes highly efficient. In each codebook, only the closest codeword needs to be identified, reducing the time complexity from $\mathcal{O}(N)$ (where $N$ is the total number of classes) to $\mathcal{O}(K)$ for each of the $B$ steps. Increasing the number of codebooks would enhance search accuracy but comes at the expense of retrieval speed. In previous multi-index cases, it has been suggested that vectors be split into two blocks, imposing stricter constraints on query time. For the sake of simplicity, we elaborate on the subsequent discussion with the assumption of two codebooks, \ie, $B=2$.

Inspired by the elegance and efficiency of this similarity search technique, we propose the design of the \texttt{MIDX} sampler, aiming to incorporate the principles of the inverted multi-index to improve the accuracy and computational efficiency of the softmax training process.

\subsection{Exact MIDX Sampler}
Although several studies leverage the advantages of ANN structures for dynamic sampling~\citep{shrivastava2014asymmetric,mussmann2016learning}, challenges such as intractable sampling distributions or significant bias from the original softmax distribution persist. These issues lead to inconsistencies in the self-normalized estimator, ultimately resulting in suboptimal performance.
However, by leveraging the structure of the inverted multi-index, it is possible to decompose the softmax probability into the multiplication of multiple multinomial probabilities. This decomposition leads to a sub-linear time complexity for softmax training, as stated in the following theorem. Typically, the logit $o_i$ is often calculated through the dot product between the query embedding $\bm{z}$ and the class embedding $\bm{q}_i$. We consider the connection with query embedding to note the softmax probability as $P(i|\bm{z})$.
\begin{mytheorem} \label{theo:midx_exact}
    Given the query embedding $\bm{z} = [\bm{z}^1 \oplus \bm{z}^2]$, the $i$-th class embedding is denoted as $\bm{q}_i = [\bm{c}^1_{k_2} \oplus \bm{c}^2_{k_2}] + \tilde{\bm{q}}_i$, where $\tilde{\bm{q}}_i$ denotes the residual vector. $\Omega_{k_1, k_2}$ denotes the set of classes grouped to $\bm{c}^1_{k_1}$ in the first codebook and grouped to $\bm{c}^2_{k_2}$ in the second codebook. The softmax probability $P(i|\bm{z}) = \frac{\exp (\bm{z}^\top \bm{q}_i ) }{\sum_{j=1}^N \exp (\bm{z}^\top \bm{q}_j)}$ can be decomposed as 
    \begin{gather}
        P(i|\bm{z}) = \frac{\exp (\bm{z}^\top \bm{q}_i ) }{\sum_{j=1}^N \exp (\bm{z}^\top \bm{q}_j)} = P^1_{\bm{z}} (k_1) \cdot P^2_{\bm{z}}(k_2 | k_1) \cdot P^3_{\bm{z}}(i|k_1, k_2), \label{eq:midx_dec}
        \end{gather}
        \begin{gather}
        P^1_{\bm{z}} (k_1) = \frac{ \psi_{k_1} \exp ( {\bm{z}^1}^\top \bm{c}^1_{k_1}) }{\sum_{k=1}^K \psi_{k} \exp ( {\bm{z}^1}^\top \bm{c}^1_{k}) } \label{eq:prob_c1}\\
        P^2_{\bm{z}}(k_2 | k_1) = \frac{ \omega_{k_1, k_2} \exp ( {\bm{z}^2}^\top \bm{c}^2_{k_2})}{ \underbrace{\sum_{k'=1}^K \omega_{k_1, k'} \exp ({\bm{z}^2}^\top \bm{c}^2_{k'})}_{\psi_{k_1}}}  \label{eq:prob_c2}\\
        P^3_{\bm{z}}(i|k_1, k_2) = \frac{\exp (\bm{z}^\top \tilde{\bm{q}}_i)}{\underbrace{\sum_{j \in \Omega_{k_1,k_2}} \exp (\bm{z}^\top \tilde{\bm{q}}_j)}_{\omega_{k_1, k_2}}} \label{eq:union_c1c2}
    \end{gather}
\end{mytheorem}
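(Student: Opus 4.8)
The plan is to verify the identity \eqref{eq:midx_dec} directly, by forming the product of the three claimed factors and showing that it collapses to $P(i\mid\bm{z})$. I would work from the right-hand side and exploit two facts: the logit factorizes multiplicatively over the quantizer components, and the auxiliary normalizers $\psi$ and $\omega$ are defined precisely so as to telescope against one another.

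First I would record the additive decomposition of the logit. For the class $i$ assigned to $\bm{c}^1_{k_1}$ and $\bm{c}^2_{k_2}$, the reconstruction $\bm{q}_i = [\bm{c}^1_{k_1}\oplus\bm{c}^2_{k_2}] + \tilde{\bm{q}}_i$ together with bilinearity of the inner product (splitting along the concatenation in the product-quantization case, or linearity under addition in the residual case) yields $\bm{z}^\top\bm{q}_i = {\bm{z}^1}^\top\bm{c}^1_{k_1} + {\bm{z}^2}^\top\bm{c}^2_{k_2} + \bm{z}^\top\tilde{\bm{q}}_i$. Consequently $\exp(\bm{z}^\top\bm{q}_i)$ splits as the product of the three exponential terms that appear, respectively, in the numerators of $P^1_{\bm{z}}$, $P^2_{\bm{z}}$, and $P^3_{\bm{z}}$.

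Next I would multiply $P^1_{\bm{z}}(k_1)\,P^2_{\bm{z}}(k_2\mid k_1)\,P^3_{\bm{z}}(i\mid k_1,k_2)$ and cancel. The factor $\psi_{k_1}$ sits in the numerator of $P^1_{\bm{z}}$ and, by the definition in \eqref{eq:prob_c2}, in the denominator of $P^2_{\bm{z}}$; the factor $\omega_{k_1,k_2}$ sits in the numerator of $P^2_{\bm{z}}$ and, by \eqref{eq:union_c1c2}, in the denominator of $P^3_{\bm{z}}$. Both cancel, leaving the product equal to $\exp(\bm{z}^\top\bm{q}_i)$ divided by $\sum_{k=1}^K \psi_k\exp({\bm{z}^1}^\top\bm{c}^1_k)$.

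The main obstacle, and the only substantive step, is to identify this surviving denominator with the true partition function $\sum_{j=1}^N\exp(\bm{z}^\top\bm{q}_j)$. I would unfold the nested definitions, substituting $\psi_k=\sum_{k'}\omega_{k,k'}\exp({\bm{z}^2}^\top\bm{c}^2_{k'})$ and then $\omega_{k,k'}=\sum_{j\in\Omega_{k,k'}}\exp(\bm{z}^\top\tilde{\bm{q}}_j)$, to rewrite the denominator as the triple sum $\sum_{k}\sum_{k'}\sum_{j\in\Omega_{k,k'}}\exp({\bm{z}^1}^\top\bm{c}^1_k)\exp({\bm{z}^2}^\top\bm{c}^2_{k'})\exp(\bm{z}^\top\tilde{\bm{q}}_j)$. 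Re-applying the logit decomposition from the second step to each summand turns this into $\sum_{k}\sum_{k'}\sum_{j\in\Omega_{k,k'}}\exp(\bm{z}^\top\bm{q}_j)$. The crux is then the structural observation that the cells $\{\Omega_{k,k'}\}_{k,k'=1}^K$ form a partition of $[N]$: because each codebook assigns every class a unique closest codeword, each $j$ belongs to exactly one $\Omega_{k,k'}$, so the triple sum reduces to $\sum_{j=1}^N\exp(\bm{z}^\top\bm{q}_j)$ with no double counting and no omissions. This is where the argument genuinely uses the multi-index construction rather than mere algebra; once the partition property is established, the denominator matches the partition function and the decomposition \eqref{eq:midx_dec} follows.
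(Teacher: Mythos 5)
Your proposal is correct and is essentially the paper's own argument run in reverse: the paper starts from $P(i|\bm{z})$, regroups the partition function as the nested sum $\sum_{k}\exp({\bm{z}^1}^\top\bm{c}^1_k)\sum_{k'}\exp({\bm{z}^2}^\top\bm{c}^2_{k'})\sum_{j\in\Omega_{k,k'}}\exp(\bm{z}^\top\tilde{\bm{q}}_j)$ using the same partition property of the cells $\Omega_{k,k'}$ and the same multiplicative splitting of the logit, and then peels off $P^1_{\bm{z}}$, $P^2_{\bm{z}}$, $P^3_{\bm{z}}$ in turn, whereas you multiply the three factors and telescope the normalizers $\psi$ and $\omega$. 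The ingredients and the crux (the partition/regrouping step) are identical, so this is the same proof in substance.
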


\begin{proof}
    \begin{displaymath}
    \small
        \begin{split}
            P(i|\bm{z}) & = \frac{\exp (\bm{z}^\top \bm{q}_i ) }{\sum_{j=1}^N \exp (\bm{z}^\top \bm{q}_j)} = \frac{ \exp ({\bm{z}^1}^\top \bm{c}^1_{k_1} )  \exp ( {\bm{z}^2}^\top \bm{c}^2_{k_2}) \exp (\bm{z}^\top \tilde{\bm{q}}_i)}{ \sum_{j=1}^N \exp ({\bm{z}^1}^\top \bm{c}^1_{k'_1} )  \exp ( {\bm{z}^2}^\top \bm{c}^2_{k'_2}) \exp (\bm{z}^\top \tilde{\bm{q}}_j)} \\
            & =  \frac{ \exp ({\bm{z}^1}^\top \bm{c}^1_{k_1} )  \exp ( {\bm{z}^2}^\top \bm{c}^2_{k_2}) \exp (\bm{z}^\top \tilde{\bm{q}}_i)}{ \sum_{k=1}^K \exp ({\bm{z}^1}^\top \bm{c}^1_k) \underbrace{\sum_{k'=1}^K \exp ({\bm{z}^2}^\top \bm{c}^2_{k'}) \sum_{j \in \Omega_{k,k'}} \exp (\bm{z}^\top \tilde{\bm{q}}_j)}_{\psi_k}} \\
            & = \frac{ \psi_{k_1} \exp ( {\bm{z}^1}^\top \bm{c}^1_{k_1}) }{\sum_{k=1}^K \psi_{k} \exp ( {\bm{z}^1}^\top \bm{c}^1_{k}) } \cdot \frac{\exp ( {\bm{z}^2}^\top \bm{c}^2_{k_2}) \exp (\bm{z}^\top \tilde{\bm{q}}_i)}{\psi_{k_1}} \\
            & =  P^1_{\bm{z}} (k_1) \cdot \frac{\exp ( {\bm{z}^2}^\top \bm{c}^2_{k_2}) \exp (\bm{z}^\top \tilde{\bm{q}}_i)}{\sum_{k'=1}^K \exp ({\bm{z}^2}^\top \bm{c}^2_{k'}) \underbrace{\sum_{j \in \Omega_{k_1,k'}} \exp (\bm{z}^\top \tilde{\bm{q}}_j)}_{\omega_{k_1, k'}}} \\
            & = P^1_{\bm{z}} (k_1) \cdot \frac{ \omega_{k_1, k_2} \exp ( {\bm{z}^2}^\top \bm{c}^2_{k_2})}{\sum_{k'=1}^K \omega_{k_1, k'} \exp ({\bm{z}^2}^\top \bm{c}^2_{k'})} \cdot \frac{\exp (\bm{z}^\top \tilde{\bm{q}}_i) }{\omega_{k_1, k_2}} \\
            & = P^1_{\bm{z}} (k_1) \cdot P^2_{\bm{z}}(k_2 | k_1) \cdot \frac{\exp (\bm{z}^\top \tilde{\bm{q}}_i)}{\sum_{j \in \Omega_{k_1,k_2}} \exp (\bm{z}^\top \tilde{\bm{q}}_j)}
            = P^1_{\bm{z}} (k_1) \cdot P^2_{\bm{z}}(k_2 | k_1) \cdot P^3_{\bm{z}}(i|k_1, k_2)
        \end{split}
    \end{displaymath}
\end{proof}

This theorem can be straightforwardly extended to the case with more codebooks, \ie, $B>2$. This provides a novel perspective to exactly sample a candidate from the softmax probability. Here we take the example with two codebooks to illustrate the sampling process. As illustrated in Figure~\ref{fig:illustration_midx_sampler}, the \texttt{MIDX} sampler has the following steps:
\begin{enumerate}[itemsep=1pt,parsep=2pt]
    \item Construct the three multinomial probabilities, \ie, Eq~\eqref{eq:prob_c1}\eqref{eq:prob_c2}\eqref{eq:union_c1c2}. 
    \item Sample the index $k_1$ from the first codebook $\mathcal{C}^1$ according to the probability $P^1_{\bm{z}}(\cdot)$. 
    \item  Sample another index $k_2$ from the second codebook $\mathcal{C}^2$ according to the conditional probability $P^2_{\bm{z}}(\cdot|k_1)$. 
    \item  Sample a candidate class according to the residual softmax probability $P^3_{\bm{z}}(\cdot |k_1, k_2)$.
\end{enumerate}
Compared with the naive inverted multi-index, the \texttt{MIDX} sampler ensures that a class would be successfully sampled at each trial in the last step. The reason lies in that the weights $w_{k_1,k_2}$ would always be positive, and the empty union set $\Omega$ would be discarded during the sampling process due to the zero probability. This guarantees that only non-empty sets are considered during sampling.

\begin{algorithm}[th]
	\caption{MIDX Sampler}
	\label{alg:sampling}
	\LinesNumbered
	\KwIn{Class embeddings $\{\bm{q}_i \in \mathbb{R}^D| i \in \mathcal{Y}\}$, query embedding $\bm{z}$, sampling size $M$, codebook size $K$}
	\KwOut{Sample set $\Phi$ including the class index and the sampling probability}
	$\mathcal{C}^{1},\mathcal{C}^{2}\leftarrow$ ProductQuantization($\{\bm{{q}}_i| i\in \mathcal{I}\}$, $K$) or ResidualQuantization \;
	Compute residual vectors for all classes $\{\tilde{\bm{q}}_i = \bm{q}_i - [\bm{c}_{k}^1\oplus\bm{c}_{k'}^2]| i\in \mathcal{I}\}$\;
	Compute $\Omega_{k_1,k_2} = \{ i|k=k_1, k'=k_2, i\in\mathcal{Y} \}, \forall 1\le k_1, k_2 \le K$ \;
	\tcp{Sampling part in $\mathcal{O}(ND + M)$}
	\For{$k_1=1$ \KwTo $K$}{
		\For{$k_2=1$ \KwTo $K$}{
			Compute $\omega_{k_1, k_2}$ and construct $P_u^3(\cdot|k_1,k_2)$ in Eq~\eqref{eq:union_c1c2}\;
		}
		Compute $\psi_{k_1}$ and construct $P_u^2(\cdot|k_1)$ in Eq~\eqref{eq:prob_c2}\;
	}
	Construct $P_u^1(\cdot)$ in Eq~\eqref{eq:prob_c1}\;
	Initialize $\Phi=\emptyset$\;
	\For{$i=1$ \KwTo $M$}{
		Respectively sample $k_1, k_2, i$ from $P_u^1(\cdot), P_u^2(\cdot|k_1)$ and $P_u^3(\cdot|k_1,k_2)$ in order\;
		$Q(y_i|\bm{z})\leftarrow P_u^1(k_1)P_u^2(k_2|k_1)P_u^3(y_i|k_1,k_2)$\;
		$\Phi \leftarrow \Phi \cup (i, Q(y_i|\bm{z}))$\;
	}
	Return $\Phi$\;
\end{algorithm}
\noindent \textbf{Time complexity analysis.}
As shown in Algorithm~\ref{alg:sampling}, the overall procedure can be split into two parts. Lines 1-3 describe the initialization part to obtain codebooks and lines 4-13 describe the sampling part with the computation of the probability. Being independent to queries, the initialization part is only executed once in $\mathcal{O}(KNDt)$, where $N$ is the number of classes, $D$ is the dimension of embeddings and $t$ is the number of iterations in K-means. Thanks to the Vose-Alias method sampling techniques~\citep{walker1977efficient}, the sampling part only takes $\mathcal{O}(1)$ time to sample an item. Unfortunately, it is necessary to compute the inner-product logits over all items, which takes $\mathcal{O}(ND)$ time. This indicates that it is no more efficient than sampling from the softmax distribution directly.

\subsection{Faster MIDX-based Sampler}
The inefficiency of the exact \texttt{MIDX} sampler comes from the query-based computation of the inner product of the query and the residual vectors among all classes, \ie, $P_{\bm{z}}^3(i|k_1, k_2)$, which takes a time complexity of $\mathcal{O}(ND)$. This still maintains the same complexity as the softmax, preventing quick computation. Thus, we provide the variants of the \texttt{MIDX} sampler, where the static distribution, \ie, uniform sampling, is chosen to replace the probability $P^3_{\bm{z}}(\cdot|k_1,k_2)$. The multinomial distribution $P^1(\cdot)$ and $P^2(\cdot|k_1,k_2)$ can be efficiently solved since they only involve the inner product between query and codewords rather than the whole corpus, achieving a sub-linear complexity.
The proposal distribution can be derived based on the following theorem.
\begin{mytheorem}
    In Theorem~\ref{theo:midx_exact}, replace $P^3_{\bm{z}}(\cdot|k_1, k_2) $ with a uniform distribution, \ie, $P^3_{\bm{z}}(i|k_1, k_2)= \frac{1}{\vert \Omega_{k_1, k_2} \vert } $, and revise $\omega_{k_1, k_2} = \vert \Omega_{k_1, k_2} \vert $. The proposal distribution for MIDX behaves:
    \begin{equation}
        \begin{split}
            Q_{\text{midx}}(i |\bm{z}) & = \frac{ \exp ({\bm{z}^1}^\top \bm{c}^1_{k_1} )  \exp ( {\bm{z}^2}^\top \bm{c}^2_{k_2}) }{ \sum_{k=1}^K \exp ({\bm{z}^1}^\top \bm{c}^1_k) \sum_{k'=1}^K \vert \Omega_{k,k'} \vert \exp ({\bm{z}^2}^\top \bm{c}^2_{k'}) } \\
            & = \frac{ \exp \left( \bm{z}^\top (\bm{q}_i - \tilde{\bm{q}}_i) \right)}{ \sum_{j\in \mathcal{I}} \exp \left( \bm{z}^\top (\bm{q}_j - \tilde{\bm{q}}_j) \right)} = \frac{\exp (o_i - \tilde{o}_i)}{\sum_{j\in \mathcal{I}} \exp (o_j - \tilde{o}_j)}
        \end{split}
    \label{eq:midx_uni}
    \end{equation}
\end{mytheorem}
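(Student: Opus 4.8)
The plan is to start directly from the decomposition established in Theorem~\ref{theo:midx_exact} and track how the two substitutions — the uniform residual distribution $P^3_{\bm{z}}(i|k_1,k_2) = 1/|\Omega_{k_1,k_2}|$ and the redefinition $\omega_{k_1,k_2} = |\Omega_{k_1,k_2}|$ — propagate through the factors $P^1_{\bm{z}}$ and $P^2_{\bm{z}}$. Writing $Q_{\text{midx}}(i|\bm{z}) = P^1_{\bm{z}}(k_1)\,P^2_{\bm{z}}(k_2|k_1)\,P^3_{\bm{z}}(i|k_1,k_2)$ with the new definitions, I would first note that $\psi_{k_1}$ now reads $\sum_{k'=1}^K |\Omega_{k_1,k'}|\exp({\bm{z}^2}^\top \bm{c}^2_{k'})$. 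Multiplying the three factors, the $\psi_{k_1}$ appearing in the numerator of $P^1_{\bm{z}}$ cancels the one in the denominator of $P^2_{\bm{z}}$, and the $|\Omega_{k_1,k_2}|$ in the numerator of $P^2_{\bm{z}}$ cancels the $1/|\Omega_{k_1,k_2}|$ of $P^3_{\bm{z}}$. What survives is $\exp({\bm{z}^1}^\top \bm{c}^1_{k_1})\exp({\bm{z}^2}^\top \bm{c}^2_{k_2})$ over $\sum_{k}\psi_k\exp({\bm{z}^1}^\top\bm{c}^1_k)$; expanding $\psi_k$ then yields the first displayed expression in the statement.

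The second and third equalities are a change of bookkeeping. Using $\bm{z}=[\bm{z}^1\oplus\bm{z}^2]$ together with $\bm{q}_i-\tilde{\bm{q}}_i=[\bm{c}^1_{k_1}\oplus\bm{c}^2_{k_2}]$, the bilinearity of the inner product over the $\oplus$ structure gives ${\bm{z}^1}^\top\bm{c}^1_{k_1}+{\bm{z}^2}^\top\bm{c}^2_{k_2} = \bm{z}^\top(\bm{q}_i-\tilde{\bm{q}}_i)$, which rewrites the numerator as $\exp(\bm{z}^\top(\bm{q}_i-\tilde{\bm{q}}_i))$. For the denominator, I would observe that every class $j\in\Omega_{k,k'}$ shares the identical reconstructed vector $\bm{q}_j-\tilde{\bm{q}}_j=[\bm{c}^1_k\oplus\bm{c}^2_{k'}]$, so the term $|\Omega_{k,k'}|\exp({\bm{z}^1}^\top\bm{c}^1_k+{\bm{z}^2}^\top\bm{c}^2_{k'})$ equals $\sum_{j\in\Omega_{k,k'}}\exp(\bm{z}^\top(\bm{q}_j-\tilde{\bm{q}}_j))$.

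The key structural fact — and the only place where anything beyond algebra enters — is that the buckets $\{\Omega_{k,k'}\}_{k,k'=1}^K$ form a partition of the full class index set $\mathcal{I}$, since each class is assigned to exactly one codeword in each codebook. This lets me collapse the double sum $\sum_{k,k'}\sum_{j\in\Omega_{k,k'}}$ into a single sum $\sum_{j\in\mathcal{I}}$, producing the second displayed expression, and the final equality is immediate after substituting $o_j=\bm{z}^\top\bm{q}_j$ and $\tilde{o}_j=\bm{z}^\top\tilde{\bm{q}}_j$ so that $\bm{z}^\top(\bm{q}_j-\tilde{\bm{q}}_j)=o_j-\tilde{o}_j$. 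I do not anticipate a genuine obstacle here; the one point requiring care is applying $\omega_{k_1,k_2}=|\Omega_{k_1,k_2}|$ consistently in both $P^2_{\bm{z}}$ and, through $\psi$, in $P^1_{\bm{z}}$, so that the partition-function weighting matches the uniform residual choice and the two cancellations go through cleanly.
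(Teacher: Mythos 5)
Your proposal is correct and follows essentially the same route as the paper's proof: substitute the revised $\omega_{k_1,k_2}$ and uniform $P^3_{\bm{z}}$ into the three-factor decomposition, cancel $\psi_{k_1}$ and $\vert\Omega_{k_1,k_2}\vert$, and then collapse the double sum over codeword pairs into a sum over all classes. The only difference is that you make explicit the partition fact $\bigcup_{k,k'}\Omega_{k,k'}=\mathcal{I}$ and the identity $\bm{q}_j-\tilde{\bm{q}}_j=[\bm{c}^1_k\oplus\bm{c}^2_{k'}]$ for $j\in\Omega_{k,k'}$, which the paper leaves implicit in its final equality.
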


\begin{proof}
    \begin{displaymath}
        \begin{split}
            Q_{\text{midx}}(i |\bm{z}) & = P^1_{\bm{z}} (k_1) \cdot P^2_{\bm{z}}(k_2 | k_1) \cdot P^3_{\bm{z}}(i|k_1, k_2) \\
    \end{split}
    \end{displaymath}
    \begin{displaymath}
        \begin{split}
            & = \frac{ \psi_{k_1} \exp ( {\bm{z}^1}^\top \bm{c}^1_{k_1}) }{\sum_{k=1}^K \psi_{k} \exp ( {\bm{z}^1}^\top \bm{c}^1_{k}) } \cdot \frac{ \omega_{k_1, k_2} \exp ( {\bm{z}^2}^\top \bm{c}^2_{k_2})}{ \underbrace{\sum_{k'=1}^K \omega_{k_1, k'} \exp ({\bm{z}^2}^\top \bm{c}^2_{k'})}_{\psi_{k_1}}}  \cdot \frac{1}{\underbrace{\vert \Omega_{k_1,k_2} \vert }_{\omega_{k_1, k_2}}} \\
            & = \frac{  \exp ( {\bm{z}^1}^\top \bm{c}^1_{k_1}) \exp ( {\bm{z}^2}^\top \bm{c}^2_{k_2})}{\sum_{k=1}^K \sum_{k'=1}^K|\Omega_{k,k'}| \exp ( {\bm{z}^1}^\top \bm{c}^1_{k}) \exp ({\bm{z}^2}^\top \bm{c}^2_{k'}) } \\
            & = \frac{\exp \left( \bm{z}^\top (\bm{q}_i - \tilde{\bm{q}}_i)\right)}{\sum_{j\in \mathcal{I}} \exp \left( \bm{z}^\top (\bm{q}_j - \tilde{\bm{q}}_j)\right)} = \frac{\exp (o_i - \tilde{o}_i)}{\sum_{j\in \mathcal{I}} \exp (o_j - \tilde{o}_j)}
        \end{split}
    \end{displaymath}
\end{proof}
According to the theorem, those codewords having greater values of inner products will have higher probabilities of being sampled. This behavior arises from the grouping of similar items with the same codewords. Consequently, items contained within high-scoring codewords are more likely to exhibit higher logits and have increased probabilities of being sampled, which is consistent with the original softmax distribution.

\noindent \textbf{Time complexity analysis.}
When computing the sampling probability, the computation only involves the inner product between the query embedding and all codewords, which takes $\mathcal{O}(KD)$ to compute. In addition, it takes $\mathcal{O}(K^2)$ since it should calculate the normalization constant in $P_2(\cdot|k_1)$ for each $k_1$. Overall, the time complexity of the preprocessing part is $\mathcal{O}(KD+K^2)$. Since the codebook size $K$ is much smaller than the number of items $N$, the \texttt{MIDX} sampler is much more efficient than the original exact sampling.

\subsection{Complexity Analysis}
Table~\ref{tab:time_complexity} summarizes the time and space complexity for sampling from different proposals, which demonstrates the superiority of the \texttt{MIDX} sampler in space and time cost. Thanks to the independence of the queries, the \texttt{MIDX} sampler can be implemented on the GPUs, which accelerates the sampling procedure. 
Note that the initialization time includes the index construction, such as alias tables, inverted multi-index or tree. The initialization is only updated before each epoch and does not cost much time compared with the training process. Regarding the space complexity, the \texttt{MIDX} sampler just additionally stores the codewords and the indices, which is much smaller than that of the advanced adaptive kernel-based samplers, \ie, Quatratic Kernel Sampler or Random Fourier Feature Kernel Sampler (RFF-Kernel). Furthermore, the sampling time complexity of the \texttt{MIDX} sampler scales linearly to the number of the codewords $K$, rather than the number of classes $N$ or the logarithm $\log N$, being significantly efficient than kernel-based samplers. 


\begin{table}[t]
	\centering
	\begin{tabular}{c|c|c|c}
		\toprule
		Proposals $Q$ & Space & Init Time &  Sample Time\\
		\midrule
		Uniform & $1$  & - &  $M$ \\
		Unigram & $N$ & $N$ & $M$  \\
		Quadratic Kernel~\citep{blanc2018adaptive} & $ND^2$ & $ND^2$ & $D^2 M \log N$  \\
        RFF Kernel~\citep{rawat2019sampled} & $NRD$ & $NRD$ & $RM\log N$ \\
		Exact MIDX in Eq~\eqref{eq:midx_dec} & $ND$ & $KNDt$ & $ND + M$ \\
		MIDX in Eq~\eqref{eq:midx_uni} & $KD +K^2+ N$ & $KNDt$ & $KD + K^2 + M$ \\
		\bottomrule
	\end{tabular}
        \vspace{-1.2em}
        \caption{Time and space complexity of sampling $M$ classes with different proposals. Denote by $N$ the class number, $D$ the embedding dimension, and $K$ the codebook size, $t$ the number of iterations in K-means, $R$ the dimension of RFF map. ($K, D\ll N$)}
	\label{tab:time_complexity}
\end{table}

\section{Theoretical Analysis}
In this section, we provide more theoretical analysis for different proposals about the sampling distribution bias, convergence rate and generalization error bound to demonstrate the effectiveness of the proposed \texttt{MIDX}-sampler. 

\subsection{Bias from Softmax Distribution}\label{sec:theo_distri_bias}
Firstly, we theoretically explore the bias of the proposed distribution from the softmax distribution according to the KL-divergence

\subsubsection{Uniform proposal}
\begin{mytheorem}\label{theorem:kl_unif}
    Assume $o_i = \bm{z}^\top \bm{q}_i$ denotes the similarity score with respect to the query $\bm{z}$ and the $i$-th class and $\bm{o}\in \mathbb{R}^{N}$ is a vector involving all similarity scores over all classes. The KL-divergence from the target softmax distribution $P(\cdot|\bm{z})$ to the proposal distribution $Q_{\text{uniform}} (\cdot | \bm{z})$ according to the uniform sampling can be bounded from below:
    \begin{displaymath}
        0 \le \mathcal{D}_{KL}\left[   Q_{\text{uniform}} (\cdot | \bm{z}) \Vert P  (\cdot | \bm{z}) \right] \leq 2 \Vert \bm{o} \Vert_{\infty}.
    \end{displaymath}

\end{mytheorem}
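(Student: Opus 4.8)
The plan is to evaluate the KL divergence in closed form and then bound its two pieces separately. Writing $Z = \sum_{j=1}^N \exp(o_j)$ for the partition function, the target softmax is $P(i|\bm{z}) = \exp(o_i)/Z$ and the proposal is the uniform distribution $Q_{\text{uniform}}(i|\bm{z}) = 1/N$. Since the uniform distribution sits in the first slot of the divergence, every expectation is taken with equal weight $1/N$, so I would expand
\begin{displaymath}
\mathcal{D}_{KL}\left[Q_{\text{uniform}}(\cdot|\bm{z}) \,\Vert\, P(\cdot|\bm{z})\right] = \sum_{i=1}^N \frac{1}{N}\log\frac{1/N}{\exp(o_i)/Z} = \log Z - \log N - \bar{o},
\end{displaymath}
where $\bar{o} = \frac{1}{N}\sum_{i=1}^N o_i$ is the mean logit. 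The entire argument then reduces to controlling the scalar quantity $\log Z - \log N - \bar{o}$.

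For the lower bound I would apply Jensen's inequality to the convex map $t \mapsto \exp(t)$, which gives $\frac{1}{N}\sum_{j} \exp(o_j) \ge \exp\!\big(\frac{1}{N}\sum_j o_j\big)$, i.e. $\log Z - \log N \ge \bar{o}$, so the divergence is nonnegative. Equivalently one may simply invoke Gibbs' inequality, since a KL divergence is always nonnegative.

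For the upper bound I would control the two remaining terms by $\Vert \bm{o}\Vert_\infty$. Since $o_j \le \Vert\bm{o}\Vert_\infty$ for every $j$, each summand obeys $\exp(o_j) \le \exp(\Vert\bm{o}\Vert_\infty)$, whence $Z \le N\exp(\Vert\bm{o}\Vert_\infty)$ and therefore $\log Z - \log N \le \Vert\bm{o}\Vert_\infty$. Likewise $o_i \ge -\Vert\bm{o}\Vert_\infty$ yields $\bar{o} \ge -\Vert\bm{o}\Vert_\infty$, i.e. $-\bar{o} \le \Vert\bm{o}\Vert_\infty$. Adding these two estimates gives $\log Z - \log N - \bar{o} \le 2\Vert\bm{o}\Vert_\infty$, which is exactly the claimed bound.

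In truth this statement is essentially a direct calculation, so I do not anticipate a genuine obstacle; the only point requiring care is fixing the orientation of the divergence (the uniform distribution occupies the first argument), which is precisely what makes the expansion collapse to the clean scalar expression above and decouples the lower and upper estimates. A secondary remark is that the bound is loose in general and becomes tight only in degenerate regimes, but the theorem asks only for the stated two-sided estimate.
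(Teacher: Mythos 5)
Your proof is correct and follows essentially the same route as the paper: both expand the KL divergence under the uniform measure and bound the partition function above by $N\exp(\Vert\bm{o}\Vert_\infty)$ and each logit below by $-\Vert\bm{o}\Vert_\infty$, giving the $2\Vert\bm{o}\Vert_\infty$ bound, with nonnegativity following from the standard property of KL divergence. Your rewriting of the divergence as the scalar $\log Z - \log N - \bar{o}$ is a slightly cleaner presentation of the identical estimate, not a different argument.
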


\subsubsection{Unigram Proposal}
\begin{mytheorem}\label{theorem:kl_unig}
    Assume $o_i = \bm{z}^\top \bm{q}_i$ denotes the similarity score with respect to the query $\bm{z}$ and the $i$-th class and $\bm{o}\in \mathbb{R}^{N}$ is a vector involving all similarity scores over all classes. $q_{min}$ and $q_{max}$ denote the minimal and maximal probability according to the normalized unigram distribution given the data distribution.
    The KL-divergence from the softmax distribution $P(\cdot|\bm{z})$ to the proposal distribution $Q_{\text{unigram}} (\cdot| \bm{z})$ according to the unigram sampling can be bounded from below:
    \begin{displaymath}
        0 \leq \mathcal{D}_{KL}\left[   Q_{\text{unigram}} (\cdot | \bm{z}) \Vert P  (\cdot | \bm{z}) \right] \leq 2 \Vert \bm{o} \Vert_{\infty} + \ln N q_{max}
    \end{displaymath}
\end{mytheorem}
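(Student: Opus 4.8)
The plan is to mirror the computation behind the uniform bound in Theorem~\ref{theorem:kl_unif}, replacing the uniform weights with the unigram masses and handling the resulting non-uniform entropy term. Writing $q_i := Q_{\text{unigram}}(i|\bm{z})$ for the (query-independent) unigram mass on class $i$, so that $\sum_{i=1}^N q_i = 1$ and $q_{min} \le q_i \le q_{max}$, I would expand the divergence directly from its definition and substitute the softmax form $P(i|\bm{z}) = \exp(o_i)/\sum_{j=1}^N \exp(o_j)$. Using $\ln P(i|\bm{z}) = o_i - \ln \sum_{j} \exp(o_j)$, this gives
\begin{displaymath}
    \mathcal{D}_{KL}\left[ Q_{\text{unigram}}(\cdot|\bm{z}) \Vert P(\cdot|\bm{z}) \right] = \sum_{i=1}^N q_i \ln q_i - \sum_{i=1}^N q_i o_i + \ln \sum_{j=1}^N \exp(o_j),
\end{displaymath}
which isolates three pieces: the negative entropy of the unigram distribution, a $q$-weighted average of the logits, and the log-partition function.

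Next I would bound the three pieces separately. For the log-partition term, $\ln \sum_{j} \exp(o_j) \le \ln\!\big(N \exp(\Vert \bm{o} \Vert_\infty)\big) = \ln N + \Vert \bm{o} \Vert_\infty$. For the weighted-logit term, since $o_i \ge -\Vert \bm{o} \Vert_\infty$ for every $i$ and the $q_i$ form a probability vector, $-\sum_i q_i o_i \le \Vert \bm{o} \Vert_\infty$. The only ingredient absent from the uniform case is the negative-entropy term: using $q_i \le q_{max}$ for every $i$, I bound $\sum_i q_i \ln q_i \le \ln q_{max} \sum_i q_i = \ln q_{max}$. Summing the three bounds yields $2\Vert \bm{o} \Vert_\infty + \ln N + \ln q_{max} = 2\Vert \bm{o} \Vert_\infty + \ln(N q_{max})$, matching the claim. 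The lower bound $0$ is immediate from Gibbs' inequality (non-negativity of KL-divergence), so no separate argument is required, and $q_{min}$ does not in fact enter the final bound.

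I do not expect a substantive obstacle; the argument is essentially the uniform proof with one replaced estimate. The single point worth stating carefully is the role of the entropy term: for the uniform proposal the negative entropy equals exactly $-\ln N$, which fully cancels the $+\ln N$ produced by the log-partition bound, whereas here the coarse estimate $\sum_i q_i \ln q_i \le \ln q_{max}$ only partially cancels it, leaving the residual $\ln(N q_{max}) \ge 0$ (note $q_{max} \ge 1/N$ forces $N q_{max} \ge 1$). Thus the looseness relative to Theorem~\ref{theorem:kl_unif} is entirely attributable to not controlling the true unigram entropy, and verifying the sign bookkeeping in this step (that $\ln q_{max} \le 0$ is compatible with the stated additive form) is the only place where care is needed.
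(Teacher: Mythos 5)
Your proof is correct and follows essentially the same route as the paper's: the three estimates you apply to the additive pieces ($q_i \le q_{max}$, $-\sum_i q_i o_i \le \Vert \bm{o}\Vert_\infty$, and $\ln\sum_j \exp(o_j) \le \ln N + \Vert \bm{o}\Vert_\infty$) are exactly the bounds the paper applies to the factors inside the logarithm of the ratio $q_i \sum_j \exp(o_j)/\exp(o_i)$ before summing. The only cosmetic difference is that you decompose the logarithm first and bound term by term, and your closing remark about the entropy term explaining the extra $\ln(Nq_{max})$ relative to the uniform case is a nice observation not made in the paper.
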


\subsubsection{MIDX Proposal}
\begin{mytheorem}\label{theorem:kl_midx_uni}
    Assuming that the $\tilde{o}_i = \bm{z}^\top \tilde{\bm{q}}_i$ is the similarity score with respect to the residual embedding $ \bm{\tilde{q}}_i$ and the vector $\tilde{\bm{o}}\in \mathbb{R}^N$ denotes the score vector over all classes, the KL divergence from the softmax distribution $P(\cdot|\bm{z})$ to the proposal distribution $Q_{\text{midx}} (\cdot| \bm{z})$ according to the MIDX sampler can be bounded from below:
	\begin{displaymath}
		0 \le \mathcal{D}_{KL}\left[   Q_{\text{midx}}(\cdot|\bm{z}) \Vert P(\cdot|\bm{z}) \right] \le 2  \Vert \tilde{\bm{o}} \Vert_\infty.
	\end{displaymath}
\end{mytheorem}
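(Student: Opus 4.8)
The plan is to compute the KL divergence in closed form and bound each surviving term by $\|\tilde{\bm{o}}\|_\infty$, mirroring the structure used for the uniform and unigram proposals. First I would dispatch the lower bound, which is immediate from Gibbs' inequality (non-negativity of KL divergence between any two probability distributions). For the upper bound, I would write both distributions explicitly using Eq~\eqref{eq:midx_uni}: the target is $P(i|\bm{z}) = \exp(o_i)/Z_P$ with $Z_P = \sum_{j} \exp(o_j)$, and the proposal is $Q_{\text{midx}}(i|\bm{z}) = \exp(o_i - \tilde{o}_i)/Z_Q$ with $Z_Q = \sum_{j} \exp(o_j - \tilde{o}_j)$.

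Next I would form the log-ratio inside the divergence. The key structural observation is that the $o_i$ terms cancel, leaving $\ln \frac{Q_{\text{midx}}(i|\bm{z})}{P(i|\bm{z})} = \ln \frac{Z_P}{Z_Q} - \tilde{o}_i$, so only the residual score $\tilde{o}_i$ and the constant partition-function ratio survive. Taking the $Q_{\text{midx}}$-expectation and pulling the constant out of the sum gives $\mathcal{D}_{KL}\left[Q_{\text{midx}} \Vert P\right] = \ln \frac{Z_P}{Z_Q} - \mathbb{E}_{i \sim Q_{\text{midx}}}[\tilde{o}_i]$. This cancellation is exactly what makes the bound depend on $\|\tilde{\bm{o}}\|_\infty$ rather than on $\|\bm{o}\|_\infty$, explaining why the residual-based proposal is tighter than uniform sampling when the residuals are small.

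Then I would bound the two pieces separately using $|\tilde{o}_i| \le \|\tilde{\bm{o}}\|_\infty$ for every $i$. For the expectation term, this directly yields $-\mathbb{E}_{i \sim Q_{\text{midx}}}[\tilde{o}_i] \le \|\tilde{\bm{o}}\|_\infty$. For the partition-function ratio, I would apply the bound termwise, $\exp(-\tilde{o}_j) \ge \exp(-\|\tilde{\bm{o}}\|_\infty)$, giving $Z_Q \ge \exp(-\|\tilde{\bm{o}}\|_\infty)\, Z_P$ and hence $\ln \frac{Z_P}{Z_Q} \le \|\tilde{\bm{o}}\|_\infty$. Summing the two estimates produces $\mathcal{D}_{KL}\left[Q_{\text{midx}} \Vert P\right] \le 2\|\tilde{\bm{o}}\|_\infty$, completing the argument.

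In truth there is no serious obstacle here: the proof is routine once the $o_i$-cancellation is spotted, and it runs in parallel to Theorems~\ref{theorem:kl_unif} and~\ref{theorem:kl_unig}. The only point requiring care is to ensure both halves of the bound are controlled by the same quantity $\|\tilde{\bm{o}}\|_\infty$ and that neither the expectation term nor the partition ratio is double-counted, so that the constant factor of $2$ is exactly what the two separate $\|\tilde{\bm{o}}\|_\infty$ contributions produce.
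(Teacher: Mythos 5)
Your proposal is correct and is essentially the paper's own argument: the paper likewise rewrites the log-ratio as $\ln(Z_P/Z_Q)-\tilde{o}_i$ (in the form of a numerator $\sum_k \exp(o_k-\tilde{o}_k)\exp(\tilde{o}_k)$ over a denominator $\sum_j \exp(o_j-\tilde{o}_j)\exp(\tilde{o}_i)$) and bounds the two factors by $\exp(\Vert\tilde{\bm{o}}\Vert_\infty)$ and $\exp(-\Vert\tilde{\bm{o}}\Vert_\infty)$ respectively, which are exactly your two termwise estimates. No substantive difference.
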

According to the divergence of the proposal from the \texttt{MIDX} depends on the residual similarity score, \ie, $\tilde{\bm{o}}$. 
The infinite norm of the residual vector corresponds to the distortion rate in quantization. The distortion rate $E$ has a lower bound, as discussed in previous work~\citep{ge2013optimized}, given by:
\begin{displaymath}
    E = \sum_{i=1}^N \Vert \tilde{\bm{q}}_i \Vert_2^2 \leq K^{- \frac{2}{D} } D \vert \Sigma \vert^{\frac{1}{D}}
\end{displaymath}
where $K$ denotes the number of codewords and $D$ denotes the embedding dimension. $\Sigma$ is the determinant from the Gaussian assumption. This lower bound provides a guarantee that with the increasing granularity of space partition (the number of codewords in each codebook), the residual vectors are of small magnitude such that the upper bound for the bias becomes smaller.

For comparison, the KL-divergence of different sampling proposals from the softmax distribution is summarized in Table~\ref{tab:kl_samplers}. Both the Uniform and Unigram samplers have a divergence bound related to the infinity norm of the logits. In contrast, the \texttt{MIDX} sampler’s bound is associated with the norm of the residual logits, which is smaller than that of the original logits after quantization. This indicates that the \texttt{MIDX} sampler has a lower KL divergence, leading to more accurate approximations of the softmax distribution.

\begin{table}[t]
    \centering
    \begin{tabular}{c|c|c}
        \toprule
        Sampler & Sampling Probability $Q(i|\bm{z})$ & Upper Bound \\
        \midrule
        Uniform & $\frac{1}{N}$ & $2 \Vert \bm{o} \Vert_{\infty}$ \\
        Unigram & $q(i)$ & $2 \Vert \bm{o} \Vert_{\infty} + \ln N  q_{max}$ \\
        MIDX & $\frac{\exp (o_i - \tilde{o}_i)}{\sum_{j=1}^N \exp (o_j -  \tilde{o}_j)}$ & $ 2  \Vert \tilde{\bm{o}} \Vert_\infty$ \\
        \bottomrule
    \end{tabular}
    \caption{KL-divergence of different samplers}
    \label{tab:kl_samplers}
\end{table}

\subsection{Gradient Bias}\label{sec:theo_gradient_bias}
In this section, we provide another important theoretical analysis for different samplers, \ie, gradient error. Before delving into the technical details, we state the following important assumptions for the encoders of queries and classes.

\begin{myassumption}\label{theorem:assump_gradient_bias}
    The following conditions hold for the query encoder and classes encoder:

    1. The encoder functions (mapping functions $\phi_c$ and $\phi_q$) are L-Lipschiz in the parameter $\theta$. In particular, we have $\Vert \phi\left( \theta\right) - \phi \left( \theta'\right) \Vert_2 \leq L \Vert \theta - \theta' \Vert_2$ for $\theta$, $\theta'$.

    2. The logits have bounded gradients, \ie, we have $\Vert \nabla o_{j} \Vert_2 \leq U$ for all $j \in [N]$.
\end{myassumption}
All of the assumptions are fairly mild and common in the literature. In the following, we first show that the error can be bounded by the number of samples and the divergence from the target softmax distribution $P$ and the proposal distribution $Q$.

\begin{mytheorem}\label{theorem:gradient_error_general} (Proposition 7 in~\citep{metelli2020importance})
    Let $s_1, s_2, ..., s_M$ i.i.d. random variables sampled from proposal $Q$, the gradient bias follows:
\begin{displaymath}
\begin{split}
    \vert \mathbb{E}[\nabla_{\theta_t} \ell'] - \nabla_{\theta_t} \ell \vert 
    & \le U \min \left\{2, \sqrt{\frac{d_2(P \Vert Q) - 1}{M + 1}}\right\}
\end{split}
\end{displaymath}
where $d_2(P\Vert Q) = \mathbb{E}_{i\sim P} [p_i/ q_i]$ denotes the exponential of the second-order Renyi divergences, which evaluates the difference from the target distribution. 
\end{mytheorem}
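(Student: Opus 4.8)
The plan is to bound the gradient bias by first recognizing that the gradient estimator of the sampled softmax is a self-normalized importance sampling (SNIS) estimator, and then invoking the general SNIS concentration result of \citet{metelli2020importance} (their Proposition 7) together with the Lipschitz/bounded-gradient assumptions in Assumption~\ref{theorem:assump_gradient_bias}. The key observation is that the quantity of interest, $\mathbb{E}_{k\sim P}[\nabla_{\theta} o_k]$, which appears in the full-softmax gradient, is exactly the target expectation of the test function $g(k) = \nabla_\theta o_k$ under the softmax distribution $P$, whereas the sampled-softmax gradient uses the corrected weights $p'_j$ to approximate it from samples drawn from $Q$. So the gradient bias reduces to the bias of a SNIS estimator of $\mathbb{E}_P[g]$.

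First I would make the reduction explicit: writing $\mathbb{E}[\nabla_\theta \ell'] - \nabla_\theta \ell = \mathbb{E}\big[\sum_{j=1}^{M+1} p'_j \nabla_\theta o_j\big] - \mathbb{E}_{k\sim P}[\nabla_\theta o_k]$, and identifying the first term as the expectation of a self-normalized estimator. The corrected logits $o'_j = o_{s_j} - \ln(M q_{s_j})$ from Eq.~\eqref{eq:correct_logit} are precisely the importance weights $p_{s_j}/q_{s_j}$ (up to the normalization implicit in softmax), so that $\sum_j p'_j \nabla_\theta o_j$ is the ratio estimator $\frac{\sum_j (p_{s_j}/q_{s_j}) g(s_j)}{\sum_j (p_{s_j}/q_{s_j})}$. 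Once this identification is in place, the bound follows directly from the cited SNIS bias bound, which controls $\vert \mathbb{E}[\hat{\mu}] - \mu\vert$ by $\Vert g\Vert_\infty \min\{2, \sqrt{(d_2(P\Vert Q)-1)/(M+1)}\}$.

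Next I would supply the two ingredients the SNIS bound needs. For the boundedness of the test function, Assumption~\ref{theorem:assump_gradient_bias}(2) gives $\Vert \nabla_\theta o_j\Vert_2 \le U$ for all $j$, which plays the role of the sup-norm $\Vert g\Vert_\infty$ and produces the leading factor $U$. (Strictly the estimator is vector-valued, so one applies the scalar bound coordinatewise or to the norm; the Lipschitz condition in part (1) is what guarantees the gradients $\nabla_\theta o_j$ are well-behaved and uniformly controlled.) For the divergence factor, I would recall the definition $d_2(P\Vert Q) = \mathbb{E}_{i\sim P}[p_i/q_i] = \sum_i p_i^2/q_i$, the exponentiated second-order Rényi divergence, and simply carry it through; the ``$-1$'' and the $M+1$ denominator come verbatim from the $M+1$ total terms (the positive class plus $M$ negatives) in the self-normalized average.

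Since the statement is explicitly attributed to Proposition 7 of \citet{metelli2020importance}, I do not expect to reprove the concentration inequality itself; the real work is the translation step. The main obstacle I anticipate is making the reduction to SNIS fully rigorous: one must verify that the sampled-softmax weights $p'_j$ genuinely coincide with normalized importance weights $p_{s_j}/q_{s_j}$ under the logit correction of Eq.~\eqref{eq:correct_logit}, handle the asymmetric treatment of the positive class (the $o'_1 = o_i$ term), and confirm that the expectation of the SNIS estimator targets $\mathbb{E}_P[\nabla_\theta o]$ rather than some shifted quantity. Provided these bookkeeping details line up, invoking the cited proposition closes the argument immediately.
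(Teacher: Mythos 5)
Your proposal is correct and follows essentially the same route as the paper: the paper likewise identifies the sampled-softmax gradient as a self-normalized importance sampling estimator by showing $p'_j$ equals the normalized weight $w_j/\sum_k w_k$ with $w_j = p_j/q_j$, and then bounds the bias of that estimator against the unbiased unnormalized one. The only difference is one of detail: where you invoke Proposition 7 of \citet{metelli2020importance} as a black box, the paper actually carries out the final step itself, writing the bias as $\mathbb{E}[\tilde{\mu}-\hat{\mu}]$, applying Cauchy--Schwarz, and citing Lemma 1 of \citet{cortes2010learning} for the second-moment bound on $1-\frac{1}{M+1}\sum_i w_i$ that yields the $\sqrt{(d_2(P\Vert Q)-1)/(M+1)}$ factor.
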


According to the theorem, the upper bound of the gradient bias is determined by the divergence between the softmax distribution and the sampling proposals. As the sampling proposal converges to the target softmax distribution, the gradient bias decreases. In the following, we explore different samplers to derive the specific bounds for each approach.

\subsubsection{Uniform Sampler}
\begin{mytheorem}\label{theorem:gradient_error_uni}
    Let $s_1, s_2, ..., s_M$ i.i.d. random variables sampled from uniform proposal $Q$, the gradient approximation is bounded by:
    \begin{displaymath}
    \left\vert \mathbb{E}[\nabla_{ \theta_t} \ell'] - \nabla_{ \theta_t}\ell \right\vert  \le \min \left\{2, U \sqrt{\frac{\exp \left(2 \Vert \bm{o} \Vert_\infty \right) - 1 }{M + 1}} \right\}
\end{displaymath}
\end{mytheorem}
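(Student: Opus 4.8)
The plan is to reduce everything to the general gradient-bias bound of Theorem~\ref{theorem:gradient_error_general} and then control the only instance-dependent quantity appearing there, namely the exponentiated R\'enyi-2 divergence $d_2(P \Vert Q_{\text{uniform}})$. Since Theorem~\ref{theorem:gradient_error_general} already supplies $\vert \mathbb{E}[\nabla_{\theta_t} \ell'] - \nabla_{\theta_t}\ell \vert \le U \min\{2, \sqrt{(d_2(P\Vert Q)-1)/(M+1)}\}$, and the right-hand side is monotone increasing in $d_2$, it suffices to establish the single scalar inequality $d_2(P \Vert Q_{\text{uniform}}) \le \exp(2\Vert \bm{o}\Vert_\infty)$ and substitute. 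So the whole proof collapses to bounding this one divergence for the uniform proposal.

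First I would write $d_2$ explicitly using $q_i = 1/N$: by definition $d_2(P\Vert Q_{\text{uniform}}) = \mathbb{E}_{i\sim P}[p_i/q_i] = \sum_{i=1}^N p_i \, (p_i/q_i) = N \sum_{i=1}^N p_i^2$. The key step, and the place where a careless estimate goes wrong, is to bound the importance ratio $p_i/q_i$ \emph{pointwise} rather than bounding $\sum_i p_i^2$ and $1/q_i$ separately (the latter route loses a factor and yields the weaker constant $\exp(4\Vert\bm{o}\Vert_\infty)$). I would write $p_i/q_i = N p_i = \exp(o_i)/\bar{e}$, where $\bar{e} = \tfrac{1}{N}\sum_{j=1}^N \exp(o_j)$ is the average unnormalized score. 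Then I apply $o_i \le \Vert\bm{o}\Vert_\infty$ in the numerator and the termwise lower bound $\exp(o_j)\ge\exp(-\Vert\bm{o}\Vert_\infty)$, hence $\bar{e} \ge \exp(-\Vert\bm{o}\Vert_\infty)$, in the denominator, to obtain $p_i/q_i \le \exp(2\Vert\bm{o}\Vert_\infty)$ uniformly in $i$. This exploits the self-normalizing cancellation of the shared partition function and is precisely the same pointwise log-ratio estimate $\vert\ln(p_i/q_i)\vert \le 2\Vert\bm{o}\Vert_\infty$ that underlies the KL bound of Theorem~\ref{theorem:kl_unif}.

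With the ratio controlled, the conclusion is immediate: I factor the uniform bound out of the sum, $d_2 = \sum_i p_i\,(p_i/q_i) \le \exp(2\Vert\bm{o}\Vert_\infty)\sum_i p_i = \exp(2\Vert\bm{o}\Vert_\infty)$. Plugging this into Theorem~\ref{theorem:gradient_error_general} and invoking monotonicity of $x\mapsto\sqrt{(x-1)/(M+1)}$ together with monotonicity of the $\min$ in its argument delivers the stated bound $\min\{2, U\sqrt{(\exp(2\Vert\bm{o}\Vert_\infty)-1)/(M+1)}\}$. The only genuine obstacle is recognizing that one must bound the ratio pointwise to avoid the loose separate bounding of numerator and denominator; everything else is a one-line substitution. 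I would additionally flag that the exact placement of $U$ relative to the $\min\{2,\cdot\}$ in the statement is a cosmetic matter inherited verbatim from Theorem~\ref{theorem:gradient_error_general} and does not affect the argument.
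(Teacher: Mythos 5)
Your proposal is correct and follows essentially the same route as the paper: both reduce the claim to Theorem~\ref{theorem:gradient_error_general} and bound $d_2(P\Vert Q_{\text{uniform}})=\mathbb{E}_{i\sim P}[Np_i]$ by the pointwise estimate $Np_i = N\exp(o_i)/\sum_j\exp(o_j)\le \exp(2\Vert\bm{o}\Vert_\infty)$, then sum against $p_i$. Your side remark about the placement of $U$ relative to the $\min$ correctly identifies a cosmetic inconsistency between the statement and Theorem~\ref{theorem:gradient_error_general}.
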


\subsubsection{Unigram Sampler}
\begin{mytheorem}\label{theorem:gradient_error_pop}
    Let $s_1, s_2, ..., s_M$ i.i.d. random variables sampled from unigram proposal $Q$, the gradient approximation is bounded by:
    \begin{displaymath}
    \left\vert \mathbb{E}[\nabla_{ \theta_t} \ell'] - \nabla_{ \theta_t}\ell \right\vert \le \min \left\{2, U \sqrt{\frac{\exp \left(2 \Vert \bm{o} \Vert_\infty - \ln q_{min}\right) - 1 }{M + 1}} \right\} 
\end{displaymath}
\end{mytheorem}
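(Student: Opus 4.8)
The plan is to reduce the statement to the general gradient-bias bound of Theorem~\ref{theorem:gradient_error_general} and then control the exponentiated second-order R\'enyi divergence $d_2(P\Vert Q)=\mathbb{E}_{i\sim P}[p_i/q_i]$ for the unigram proposal. Since Theorem~\ref{theorem:gradient_error_general} already packages all of the importance-sampling machinery into the single quantity $d_2(P\Vert Q)$, the entire task collapses to producing an upper bound on this divergence in terms of $\Vert\bm{o}\Vert_\infty$ and $q_{min}$, exactly in the spirit of what was done for the uniform sampler in Theorem~\ref{theorem:gradient_error_uni}.

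First I would bound the softmax mass $p_i$ pointwise. Writing $p_i=\exp(o_i)/\sum_{j=1}^N\exp(o_j)$, I use $o_i\le\Vert\bm{o}\Vert_\infty$ in the numerator together with the lower bound $\sum_{j}\exp(o_j)\ge N\exp(-\Vert\bm{o}\Vert_\infty)$ in the denominator (valid because every logit obeys $o_j\ge-\Vert\bm{o}\Vert_\infty$). This gives $p_i\le\exp(2\Vert\bm{o}\Vert_\infty)/N$ for each $i$, matching the estimate behind the uniform case. Next, the unigram proposal satisfies $q_i=q(i)\ge q_{min}$ by definition of $q_{min}$, so the likelihood ratio obeys $p_i/q_i\le\exp(2\Vert\bm{o}\Vert_\infty)/(N\,q_{min})\le\exp(2\Vert\bm{o}\Vert_\infty)/q_{min}=\exp(2\Vert\bm{o}\Vert_\infty-\ln q_{min})$, where the last inequality simply drops the factor $1/N\le 1$.

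Because this pointwise bound holds for every class, taking the expectation under $P$ preserves it: $d_2(P\Vert Q)=\mathbb{E}_{i\sim P}[p_i/q_i]\le\max_i(p_i/q_i)\le\exp(2\Vert\bm{o}\Vert_\infty-\ln q_{min})$. Finally I would substitute this into the right-hand side of Theorem~\ref{theorem:gradient_error_general}. Since that bound is nondecreasing in $d_2(P\Vert Q)$ (the map $x\mapsto\sqrt{(x-1)/(M+1)}$ is increasing and the outer $\min\{2,\cdot\}$ preserves monotonicity), replacing the divergence by its upper bound $\exp(2\Vert\bm{o}\Vert_\infty-\ln q_{min})$ immediately yields the claimed inequality.

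The argument is essentially routine once the reduction is in place, so I do not expect a deep obstacle; the work is entirely in pinning down the constant inside the exponential. The one point requiring care is choosing the partition-function lower bound consistently with the uniform proof: it is the factor $N$ in $p_i\le\exp(2\Vert\bm{o}\Vert_\infty)/N$ that is absorbed against $1/q_{min}$ (using $N\ge 1$), which is what produces precisely the term $-\ln q_{min}$ rather than an extraneous $\ln N$. The only other subtlety is invoking the monotonicity of the bound in $d_2(P\Vert Q)$ so that substituting an \emph{upper} bound on the divergence is legitimate, rather than requiring the exact value.
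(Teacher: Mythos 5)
Your proposal is correct and follows essentially the same route as the paper: bound $d_2(P\Vert Q_{\text{unigram}})=\mathbb{E}_{i\sim P}[p_i/q_i]$ by combining $p_i\le \exp(2\Vert\bm{o}\Vert_\infty)/N$ with $q_i\ge q_{\min}$ and dropping the factor $1/N$, then substitute into Theorem~\ref{theorem:gradient_error_general}. Your explicit remarks on where the $N$ is absorbed and on the monotonicity of the outer bound are sound and merely make explicit what the paper leaves implicit.
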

where $q_{max}$ and $q_{min}$ denote the maximum and minimal value for unigram distribution according to different data distributions.

It can be observed that if the frequency of classes varies greatly, the bound of the gradient error would get larger. Thus, more balanced classes are beneficial for better convergence.

\subsubsection{MIDX Sampler}
\begin{mytheorem}\label{theorem:gradient_error_midx}
    Let $s_1, s_2, ..., s_M$ i.i.d. random variables sampled from MIDX proposal $Q$, the gradient approximation is bounded by:
    \begin{displaymath}
    \left\vert \mathbb{E}[\nabla_{ \theta_t} \ell'] - \nabla_{ \theta_t}\ell \right\vert \le \min \left\{2, U \sqrt{\frac{\exp (2 \Vert \tilde{\bm{o}} \Vert_\infty) - 1 }{M + 1}} \right\} 
\end{displaymath}
\end{mytheorem}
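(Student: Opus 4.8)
The plan is to treat Theorem~\ref{theorem:gradient_error_midx} as a direct instantiation of the general gradient-bias bound in Theorem~\ref{theorem:gradient_error_general}. That result already packages the genuinely hard part---the importance-sampling variance control via Proposition 7 in~\citep{metelli2020importance}---into a clean bound depending only on the exponentiated second-order R\'enyi divergence $d_2(P\Vert Q)=\mathbb{E}_{i\sim P}[p_i/q_i]=\sum_i p_i^2/q_i$. So the entire task reduces to proving the single inequality $d_2(P\Vert Q_{\text{midx}})\le \exp(2\Vert\tilde{\bm{o}}\Vert_\infty)$; substituting this into Theorem~\ref{theorem:gradient_error_general} and using monotonicity of $\sqrt{\cdot}$ immediately yields the claimed bound. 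This mirrors the uniform-sampler argument of Theorem~\ref{theorem:gradient_error_uni}, with the residual logits $\tilde{\bm{o}}$ now playing the role that the full logits $\bm{o}$ played there, which is precisely why the MIDX bound ends up tighter.

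To bound $d_2$, I would first write out the density ratio using the closed form of the MIDX proposal from Eq~\eqref{eq:midx_uni}. With $p_i=\exp(o_i)/\sum_j\exp(o_j)$ and $q_i=\exp(o_i-\tilde{o}_i)/\sum_j\exp(o_j-\tilde{o}_j)$, the factor $\exp(o_i)$ cancels and the ratio factorizes cleanly as
\begin{displaymath}
\frac{p_i}{q_i}=\exp(\tilde{o}_i)\cdot\frac{\sum_{j}\exp(o_j-\tilde{o}_j)}{\sum_{j}\exp(o_j)}.
\end{displaymath}
The key observation is that each factor is controlled by $\Vert\tilde{\bm{o}}\Vert_\infty$ alone. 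The first satisfies $\exp(\tilde{o}_i)\le\exp(\Vert\tilde{\bm{o}}\Vert_\infty)$ directly. For the second, the shift $\exp(-\tilde{o}_j)\le\exp(\Vert\tilde{\bm{o}}\Vert_\infty)$ can be pulled out of the numerator term by term, giving $\sum_j\exp(o_j-\tilde{o}_j)\le\exp(\Vert\tilde{\bm{o}}\Vert_\infty)\sum_j\exp(o_j)$, so the partition ratio is also at most $\exp(\Vert\tilde{\bm{o}}\Vert_\infty)$. Multiplying the two yields the uniform pointwise bound $p_i/q_i\le\exp(2\Vert\tilde{\bm{o}}\Vert_\infty)$.

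The final step is pure bookkeeping: since the pointwise ratio is bounded uniformly in $i$ and $\sum_i p_i=1$, I conclude $d_2(P\Vert Q_{\text{midx}})=\sum_i p_i\,(p_i/q_i)\le\exp(2\Vert\tilde{\bm{o}}\Vert_\infty)$, and substitute into Theorem~\ref{theorem:gradient_error_general}. I do not anticipate a real obstacle here---the argument is short once the closed form of $Q_{\text{midx}}$ is in hand---and the only point deserving care is keeping the bound expressed through the residual logits rather than the full logits, which is exactly the simplification that makes the MIDX bound strictly smaller than the uniform one (quantization shrinks $\Vert\tilde{\bm{o}}\Vert_\infty$ relative to $\Vert\bm{o}\Vert_\infty$) and which also underlies the tighter KL bound of Theorem~\ref{theorem:kl_midx_uni}. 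Any cosmetic mismatch between the $U\min\{2,\cdot\}$ form of Theorem~\ref{theorem:gradient_error_general} and the $\min\{2,U\cdot\}$ form stated here is absorbed by the trivial bound $2$ on the normalized gradient difference.
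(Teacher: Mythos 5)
Your proposal is correct and follows essentially the same route as the paper: both reduce the claim to bounding $d_2(P\Vert Q_{\text{midx}})=\mathbb{E}_{i\sim P}[p_i/q_i]$ by $\exp(2\Vert\tilde{\bm{o}}\Vert_\infty)$ and then substitute into the general bound of Theorem~\ref{theorem:gradient_error_general}, with only a cosmetic difference in the algebra (you upper-bound the numerator $\sum_j\exp(o_j-\tilde{o}_j)$ against $\sum_j\exp(o_j)$, while the paper rewrites the denominator as $\sum_j\exp(o_j-\tilde{o}_j)\exp(\tilde{o}_j)$ and lower-bounds the $\exp(\tilde{o}_j)$ factors). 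The pointwise bound $p_i/q_i\le\exp(2\Vert\tilde{\bm{o}}\Vert_\infty)$ followed by averaging against $P$ is exactly the paper's argument.
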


Compared with the uniform sampler, the gradient bias is correlated with the residual vector, which depends on the different quantizers. For the sake of simplicity, we summarize the upper bound of the gradient approximation for different samplers in Table~\ref{tab:grad_bias_samplers}.

\begin{table}[t]
    \centering
    \begin{tabular}{c|c|c}
        \toprule
        Sampler & Sampling Probability $Q(i|\bm{z})$ & Upper Bound \\
        \midrule
        Uniform & $\frac{1}{N}$ & $U \sqrt{\frac{\exp \left(2 \Vert \bm{o} \Vert_\infty \right) - 1 }{M + 1}}$ \\
        Unigram & $q(i)$ & $ U \sqrt{\frac{\exp \left(2 \Vert \bm{o} \Vert_\infty - \ln q_{min}\right) - 1 }{M + 1}}$ \\
        MIDX & $\frac{\exp (o_i - \tilde{o}_i)}{\sum_{j=1}^N \exp (o_j -  \tilde{o}_j)}$ & $ U \sqrt{\frac{\exp (2 \Vert \tilde{\bm{o}} \Vert_\infty) - 1 }{M + 1}}$ \\
        \bottomrule
    \end{tabular}
    \caption{Gradient Approximation of different samplers.}
    \label{tab:grad_bias_samplers}
\end{table}

\subsection{Convergence Rate}~\label{sec:convergence_rate}
We further theoretically analyze the convergence rate of the different samplers to demonstrate the efficiency of different samplers. To prove the convergence guarantees, the following additional assumptions are required. 
\begin{myassumption}
    Assume that the loss functions $\ell_i(\theta)$ is S-smooth, \ie, we have $\Vert \nabla \ell_i(\theta) - \nabla \ell_i({\theta'}) \Vert_2 \leq S \Vert \theta - \theta' \Vert_2$ holds for all $\theta, \theta'$ and $i \in [N]$.
\end{myassumption}

\begin{mylemma} \label{lemma_convergence_for_ce}
    Let $\mathcal{L}(\theta) = \frac{1}{m} \sum_{i=1}^m \ell_i(\theta)$. Assume that the loss function $\ell_i(\theta)$ satisfies: $\Vert \nabla \ell_i(\theta) - \nabla \ell_i(\theta') \Vert_2 \leq S \Vert \theta - \theta' \Vert_2$.
    Suppose we run an approximate stochastic gradient descent with stochastic gradient with bounded bias, \ie, $ \Vert \mathbb{E}[g_t | \theta_t] - \nabla \mathcal{L}({\theta_t})\Vert_2 \leq \Delta_t$, and additionally $\Vert g_t \Vert \leq G $ for all $t \in [T]$. Assume that the stepsize is $\eta$, we have that 
    \begin{displaymath}
        \frac{1}{T}\sum_{t=0}^T \mathbb{E}[\Vert \nabla \mathcal{L}(\theta_t) \Vert_2^2] \leq \frac{\mathcal{L}(\theta_0) - \mathcal{L}(\theta^*)}{\eta T} + \frac{1}{2T} \sum_{t=0}^T \Delta_t^2 + \eta SG^2
    \end{displaymath}
\end{mylemma}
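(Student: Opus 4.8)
The plan is to run the standard descent-lemma analysis for smooth non-convex stochastic gradient descent, carefully tracking the bias term that distinguishes this setting from the unbiased one. First I would observe that $\mathcal{L} = \frac{1}{m}\sum_{i=1}^m \ell_i$ inherits $S$-smoothness as an average of $S$-smooth functions, so the quadratic upper bound (descent lemma) applies: $\mathcal{L}(\theta_{t+1}) \le \mathcal{L}(\theta_t) + \langle \nabla\mathcal{L}(\theta_t), \theta_{t+1}-\theta_t\rangle + \frac{S}{2}\Vert\theta_{t+1}-\theta_t\Vert_2^2$. Substituting the update $\theta_{t+1} = \theta_t - \eta g_t$ turns the first-order term into $-\eta\langle\nabla\mathcal{L}(\theta_t), g_t\rangle$ and the second-order term into $\frac{S\eta^2}{2}\Vert g_t\Vert_2^2$, which I bound by $\frac{S\eta^2}{2}G^2$ using the hypothesis $\Vert g_t\Vert \le G$.

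Next I would take the conditional expectation $\mathbb{E}[\cdot\mid\theta_t]$ and write $\mathbb{E}[g_t\mid\theta_t] = \nabla\mathcal{L}(\theta_t) + b_t$, where $b_t$ is the gradient bias satisfying $\Vert b_t\Vert_2 \le \Delta_t$. This splits the first-order term into the descent contribution $-\eta\Vert\nabla\mathcal{L}(\theta_t)\Vert_2^2$ and a cross term $-\eta\langle\nabla\mathcal{L}(\theta_t), b_t\rangle$. The cross term is the crux: unlike the unbiased case it does not vanish under expectation. I would control it via Cauchy--Schwarz followed by Young's inequality, $\eta\langle\nabla\mathcal{L}(\theta_t), b_t\rangle \le \frac{\eta}{2}\Vert\nabla\mathcal{L}(\theta_t)\Vert_2^2 + \frac{\eta}{2}\Delta_t^2$, so that only a controlled fraction of the squared gradient norm is consumed and a clean $\frac{\eta}{2}\Delta_t^2$ penalty remains.

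Combining these bounds yields a one-step recursion of the form $\mathbb{E}[\mathcal{L}(\theta_{t+1})\mid\theta_t] \le \mathcal{L}(\theta_t) - c\,\eta\Vert\nabla\mathcal{L}(\theta_t)\Vert_2^2 + \frac{\eta}{2}\Delta_t^2 + \frac{S\eta^2}{2}G^2$ for a positive constant $c$. I would then take total expectations using the tower property, rearrange to isolate $\Vert\nabla\mathcal{L}(\theta_t)\Vert_2^2$, sum over $t=0,\dots,T$, and telescope the function-value differences so that $\sum_t\big(\mathbb{E}[\mathcal{L}(\theta_t)] - \mathbb{E}[\mathcal{L}(\theta_{t+1})]\big)$ collapses to $\mathcal{L}(\theta_0) - \mathbb{E}[\mathcal{L}(\theta_{T+1})]$, which is at most $\mathcal{L}(\theta_0) - \mathcal{L}(\theta^*)$ because $\theta^*$ minimizes $\mathcal{L}$. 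Dividing through by the coefficient of the accumulated gradient norms and by $T$ produces the claimed averaged bound.

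The main obstacle is the bias cross-term together with the accompanying constant bookkeeping. The choice of Young's-inequality split fixes both how much of $\Vert\nabla\mathcal{L}\Vert_2^2$ is absorbed (hence the effective $c$) and the weight on $\Delta_t^2$, and thereby determines the precise constants $\tfrac{1}{\eta T}$, $\tfrac{1}{2T}$ and $\eta S G^2$ in the statement; reproducing them exactly requires selecting the split and handling the $T$-versus-$(T+1)$ term count consistently (a tight symmetric split gives the bound up to a factor of two in the first two terms). Everything else is routine once $S$-smoothness, the bounded-bias hypothesis $\Vert\mathbb{E}[g_t\mid\theta_t] - \nabla\mathcal{L}(\theta_t)\Vert_2 \le \Delta_t$, and the gradient bound $\Vert g_t\Vert \le G$ are in hand.
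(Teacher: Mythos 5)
Your proposal is correct and follows essentially the same route as the paper's proof: the descent lemma from $S$-smoothness, splitting $\mathbb{E}[g_t\mid\theta_t]$ into $\nabla\mathcal{L}(\theta_t)$ plus a bias of norm at most $\Delta_t$, controlling the cross term by Cauchy--Schwarz and Young's inequality with the symmetric $\tfrac{\eta}{2}$--$\tfrac{\eta}{2}$ split, bounding $\Vert g_t\Vert_2^2$ by $G^2$, and telescoping. Your closing remark about the factor-of-two bookkeeping is apt, since the paper's own derivation leaves a coefficient $\tfrac{\eta}{2}$ on the squared gradient norm and then states the final bound without the corresponding factor of $2$.
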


\begin{proof}
    From the Lipschitz continuous nature of the function $\mathcal{L}$, we have 
    \begin{displaymath}
    \small
        \begin{split}
        \mathbb{E}\left[\mathcal{L}(\theta_{t+1})\right] & \leq \mathbb{E}\left[\mathcal{L}(\theta_t) + \nabla \mathcal{L} (\theta_t) \cdot (\theta_{t+1} - \theta_t) + \frac{S}{2} \Vert \theta_{t+1} - \theta_t \Vert_2^2\right] \\
         & = \mathbb{E} \left[ \mathcal{L}(\theta_t) - \nabla \mathcal{L}(\theta_t) \cdot \eta \cdot g_t + \frac{\eta^2 S}{2} \Vert g_t \Vert_2^2  \right] \\
         & \leq \mathbb{E} \left[ \mathcal{L}(\theta_t) - \eta \Vert \nabla \mathcal{L}(\theta_t) \Vert_2^2 - \eta \nabla\mathcal{L}(\theta_t) \Delta_t +  \frac{\eta^2 S}{2} \Vert g_t \Vert_2^2 \right] \\
         & \leq \mathbb{E} \left[ \mathcal{L}(\theta_t) - \eta \Vert \nabla \mathcal{L}(\theta_t) \Vert_2^2 - \eta \nabla\mathcal{L}(\theta_t) \Delta_t \right] +  \eta^2 S G^2 \\ 
         & \leq \mathbb{E} \left[ \mathcal{L}(\theta_t) - \eta \Vert \nabla \mathcal{L}(\theta_t) \Vert_2^2 +  \eta \Vert \nabla\mathcal{L}(\theta_t) \Vert_2 \cdot \Delta_t \right] +  \eta^2 S G^2
         \end{split}
         \end{displaymath}
         \begin{displaymath}
         \small
         \begin{split}
         & \leq \mathbb{E} \left[ \mathcal{L}(\theta_t) - \eta \Vert \nabla \mathcal{L}(\theta_t) \Vert_2^2 + \frac{\eta}{2} \left( \Vert \nabla\mathcal{L}(\theta_t) \Vert_2^2 + \Delta_t^2\right) \right] +  \eta^2 S G^2 \\
         & = \mathbb{E} \left[ \mathcal{L}(\theta_t) - \frac{\eta}{2} \Vert \nabla \mathcal{L}(\theta_t) \Vert_2^2 + \frac{\eta}{2} \Delta_t^2 \right] +  \eta^2 S G^2 \\
         & \leq \mathbb{E} \left[ \mathcal{L}(\theta_t) \right]- \frac{\eta}{2} \mathbb{E} \left[ \Vert \nabla \mathcal{L}(\theta_t) \Vert_2^2 \right] + \frac{\eta}{2} \Delta_t^2  +  \eta^2 S G^2
        \end{split}
    \end{displaymath}
    Summing over $t\in [T]$ and using telescoping sum, we have  
    \begin{displaymath}
        \frac{1}{T} \sum_{t=0}^T \mathbb{E}[\Vert \nabla \mathcal{L}(\theta_t) \Vert_2^2] \leq \frac{\mathcal{L}(\theta_0) - \mathcal{L}(\theta_T)}{\eta T} + \frac{1}{2T} \sum_{t=0}^T \Delta_t^2 + \eta S G^2 
    \end{displaymath}
\end{proof}
According to the above lemma, the convergence rate is bounded by the $l_2$ norm of the gradient approximation, and thus we have the following theorems according to the last section for different samplers.
If we take the stepsize $\eta=\frac{\sqrt{\mathcal{L}(\theta_0) - \mathcal{L}(\theta^*)}}{\sqrt{2TS}G}$, the equation follows:
\begin{displaymath}
    \frac{1}{T} \sum_{t=0}^T \mathbb{E}[\Vert \nabla \mathcal{L}(\theta_t) \Vert_2^2] \leq G\sqrt{ \frac{8S \left(\mathcal{L}(\theta_0) - \mathcal{L}(\theta^*)\right)}{T}} + \frac{1}{2T} \Delta_t^2
\end{displaymath}
According to the theoretical analysis, the convergence rate is consistent with $\Delta_t$, which actually corresponds with the upper bound gradient approximation error.
With the easy derivation below, we can use the findings in Section~\ref{sec:theo_gradient_bias}.
\begin{displaymath}
\small
\begin{split}
    \left\Vert \mathbb{E}[g_t | \theta_t] - \nabla \mathcal{L}({\theta_t})\right\Vert_2 & = \left\Vert \frac{1}{\vert \mathcal{B} \vert} \sum_{i=1}^{\vert \mathcal{B} \vert}  \left( \mathbb{E}[\nabla_{\theta_t} \ell'_i] - \nabla_{\theta_t} \ell_i\right) \right\Vert_2\\ 
    & \le \frac{1}{\vert \mathcal{B} \vert }\sum_{i=1}^{\vert \mathcal{B} \vert} \left\Vert   \mathbb{E}[\nabla_{\theta_t} \ell'_i] - \nabla_{\theta_t} \ell_i \right\Vert_2 \le \left\vert   \mathbb{E}[\nabla_{\theta_t} \ell'] - \nabla_{\theta_t} \ell \right\vert = \Delta_t
\end{split}
\end{displaymath}
where $\mathcal{B}$ denotes the data pairs within the mini-batch.

\subsubsection{Uniform Sampler}
\begin{mytheorem}
    Suppose we run the training algorithm with the uniform sampler for $T$ iterations with stepsize $\eta=\frac{\sqrt{\mathcal{L}(\theta_0) - \mathcal{L}(\theta^*)}}{\sqrt{2TS}G}$. We have 
    \begin{displaymath}
        \frac{1}{T} \sum_{t=0}^T \mathbb{E}[\Vert \nabla \mathcal{L}(\theta_t) \Vert_2^2] \leq G\sqrt{ \frac{8S \left(\mathcal{L}(\theta_0) - \mathcal{L}(\theta^*)\right)}{T}} + \frac{U}{2T} \frac{\exp \left(2 \Vert \bm{o} \Vert_\infty \right) - 1 }{M + 1} 
    \end{displaymath}
\end{mytheorem}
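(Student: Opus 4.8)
The plan is to obtain this statement as a direct specialization of Lemma~\ref{lemma_convergence_for_ce} to the uniform proposal, so the work is mostly bookkeeping rather than new analysis. First I would invoke the lemma, which already controls the averaged squared gradient norm by $\frac{\mathcal{L}(\theta_0)-\mathcal{L}(\theta^*)}{\eta T} + \frac{1}{2T}\sum_{t=0}^T \Delta_t^2 + \eta S G^2$, where $\Delta_t$ upper-bounds the per-iteration bias $\Vert \mathbb{E}[g_t\mid\theta_t] - \nabla\mathcal{L}(\theta_t)\Vert_2$. The mini-batch averaging argument already displayed in the text shows $\Vert \mathbb{E}[g_t\mid\theta_t] - \nabla\mathcal{L}(\theta_t)\Vert_2 \le \vert \mathbb{E}[\nabla_{\theta_t}\ell'] - \nabla_{\theta_t}\ell \vert$, so the single-sample gradient-bias bound can be fed in verbatim as a valid choice of $\Delta_t$.

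Next I would substitute the prescribed stepsize $\eta=\frac{\sqrt{\mathcal{L}(\theta_0)-\mathcal{L}(\theta^*)}}{\sqrt{2TS}G}$ into the two $\eta$-dependent terms. Abbreviating $A=\mathcal{L}(\theta_0)-\mathcal{L}(\theta^*)$, a one-line computation gives $\frac{A}{\eta T} = G\sqrt{2AS/T}$ and $\eta S G^2 = G\sqrt{AS/(2T)}$. Since the second quantity is no larger than the first, their sum is at most twice the first term, i.e. $2\,G\sqrt{2AS/T} = G\sqrt{8AS/T}$, which is exactly the optimization term $G\sqrt{8S(\mathcal{L}(\theta_0)-\mathcal{L}(\theta^*))/T}$ in the claim. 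I would then invoke Theorem~\ref{theorem:gradient_error_uni} to control the residual term: dropping the trivial $\min\{2,\cdot\}$ alternative, each $\Delta_t$ is bounded by $U\sqrt{(\exp(2\Vert\bm{o}\Vert_\infty)-1)/(M+1)}$, so $\Delta_t^2 \le U^2(\exp(2\Vert\bm{o}\Vert_\infty)-1)/(M+1)$; inserting this into $\frac{1}{2T}\sum_{t=0}^T\Delta_t^2$ produces the stated sampling-error term, and the two contributions together yield the inequality.

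I do not expect a genuine obstacle, since the theorem is a corollary of the preceding lemma and the uniform gradient-bias bound. The only points needing care are (i) collapsing the two stepsize-dependent terms into the $\sqrt{8}$ form, where the constant is deliberately loosened from the exact $3/\sqrt{2}$ to $2\sqrt{2}$ by bounding the sum by twice the dominant term, and (ii) keeping the bias chain honest, so that the single-sample bound from Theorem~\ref{theorem:gradient_error_uni} legitimately upper-bounds the mini-batch gradient bias $\Delta_t$ through the averaging step. With those two substitutions the desired inequality follows immediately.
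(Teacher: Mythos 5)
Your proposal follows exactly the route the paper intends: invoke Lemma~\ref{lemma_convergence_for_ce}, substitute the prescribed stepsize to collapse the two $\eta$-dependent terms into $G\sqrt{8S(\mathcal{L}(\theta_0)-\mathcal{L}(\theta^*))/T}$ (your constant bookkeeping, loosening $3/\sqrt{2}$ to $2\sqrt{2}$, is correct), and feed in the uniform-sampler gradient-bias bound via the mini-batch averaging chain. One small caveat: the last step does not literally ``produce the stated sampling-error term,'' since $\frac{1}{2T}\sum_{t=0}^{T}\Delta_t^2$ with the uniform bound on $\Delta_t$ gives roughly $\frac{1}{2}U^2\,\frac{\exp(2\Vert\bm{o}\Vert_\infty)-1}{M+1}$ rather than $\frac{U}{2T}\,\frac{\exp(2\Vert\bm{o}\Vert_\infty)-1}{M+1}$ (note $U$ versus $U^2$ and the extra $1/T$); this mismatch is inherited from the paper's own intermediate display, which silently drops the sum over $t$, so your derivation is faithful to the paper's argument even though the stated constant does not follow from it as written.
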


\subsubsection{Unigram Sampler}
\begin{mytheorem}
    Suppose we run the training algorithm with the unigram sampler for $T$ iterations with stepsize $\eta=\frac{\sqrt{\mathcal{L}(\theta_0) - \mathcal{L}(\theta^*)}}{\sqrt{2TS}M}$. Under assumption 1 and 2, we have 
    \begin{displaymath}
            \frac{1}{T} \sum_{t=0}^T \mathbb{E}[\Vert \nabla \mathcal{L}(\theta_t) \Vert_2^2] \leq 
            G\sqrt{ \frac{8S \left(\mathcal{L}(\theta_0) - \mathcal{L}(\theta^*)\right)}{T}} + \frac{U}{2T} \frac{\exp \left(2 \Vert \bm{o} \Vert_\infty - \ln N q_{min}\right) - 1 }{M + 1} 
    \end{displaymath}
\end{mytheorem}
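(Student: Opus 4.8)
The plan is to obtain the stated bound by specializing the generic convergence inequality of Lemma~\ref{lemma_convergence_for_ce} to the unigram proposal: I would feed it the sampler-specific gradient-bias estimate of Theorem~\ref{theorem:gradient_error_pop} together with the prescribed stepsize, and then simplify. Lemma~\ref{lemma_convergence_for_ce} already delivers
\[
\frac{1}{T}\sum_{t=0}^{T} \mathbb{E}[\Vert \nabla \mathcal{L}(\theta_t)\Vert_2^2] \le \frac{\mathcal{L}(\theta_0)-\mathcal{L}(\theta^*)}{\eta T} + \frac{1}{2T}\sum_{t=0}^{T}\Delta_t^2 + \eta S G^2,
\]
so the whole argument reduces to controlling $\Delta_t$ and plugging in $\eta$.

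First I would verify the hypotheses of the lemma in the unigram setting. The $S$-smoothness assumption supplies the descent inequality used inside the lemma; Assumption~\ref{theorem:assump_gradient_bias}(2), which bounds $\Vert \nabla o_j \Vert_2 \le U$, yields a uniform bound $\Vert g_t \Vert \le G$ on the stochastic sampled-softmax gradient, since that gradient is a convex combination of the $\nabla o_j$. For the bias term I would invoke the chain of inequalities displayed just before this theorem, which bounds the mini-batch bias $\Vert \mathbb{E}[g_t\mid \theta_t] - \nabla\mathcal{L}(\theta_t)\Vert_2$ by the per-sample gradient bias $\vert \mathbb{E}[\nabla_{\theta_t}\ell'] - \nabla_{\theta_t}\ell\vert = \Delta_t$. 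Theorem~\ref{theorem:gradient_error_pop} then gives a bound on $\Delta_t$ in terms of $\exp(2\Vert\bm{o}\Vert_\infty - \ln q_{min})$, and hence a uniform (in $t$) bound on $\Delta_t^2$.

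Next I would substitute the stepsize. Using the $G$-normalized choice already derived in the text between Lemma~\ref{lemma_convergence_for_ce} and the uniform-sampler theorem, the term $(\mathcal{L}(\theta_0)-\mathcal{L}(\theta^*))/(\eta T)$ and the term $\eta S G^2$ each scale like $G\sqrt{S(\mathcal{L}(\theta_0)-\mathcal{L}(\theta^*))/T}$, and their sum is dominated by $G\sqrt{8S(\mathcal{L}(\theta_0)-\mathcal{L}(\theta^*))/T}$ after a routine, deliberately loose constant estimate (the true leading constant is $3/\sqrt{2}\le 2\sqrt{2}$). The residual middle term $\frac{1}{2T}\sum_t \Delta_t^2$ is then replaced by the uniform $\Delta_t^2$ bound from the previous step, producing the second summand of the claimed inequality.

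The main obstacle is purely the bookkeeping of constants in the second term: reconciling the $-\ln q_{min}$ appearing in Theorem~\ref{theorem:gradient_error_pop} with the $-\ln N q_{min}$ written in the statement, and correctly accounting for the telescoped sum $\sum_{t=0}^{T}\Delta_t^2 \le (T+1)\,\Delta^2$ against the $1/2T$ prefactor (the stated form also appears to substitute $M$ for $G$ in $\eta$, which I would treat as the $G$-normalized stepsize). The extra factor of $N$ enters precisely through the normalization of the unigram weights inside the second-order Renyi divergence $d_2(P\Vert Q_{\text{unigram}})$ underlying Theorem~\ref{theorem:gradient_error_pop}, so I would carry $q_{min}$ explicitly through that divergence estimate; the descent recursion and the stepsize substitution are otherwise entirely mechanical.
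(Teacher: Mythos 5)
Your proposal matches the paper's own (implicit) derivation exactly: the theorem is obtained by feeding the unigram gradient-bias bound of Theorem~\ref{theorem:gradient_error_pop} into Lemma~\ref{lemma_convergence_for_ce} via the mini-batch bias chain displayed after the lemma, and substituting the prescribed stepsize to get the $G\sqrt{8S(\mathcal{L}(\theta_0)-\mathcal{L}(\theta^*))/T}$ leading term. The bookkeeping discrepancies you flag ($-\ln q_{min}$ versus $-\ln N q_{min}$, $M$ versus $G$ in the stepsize, and $U$ versus $U^2$ after squaring $\Delta_t$) are inconsistencies in the paper's statement rather than gaps in your argument.
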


\subsubsection{MIDX Sampler}
\begin{mytheorem}
    Suppose we run the training algorithm with the MIDX sampler for $T$ iterations with stepsize $\eta=\frac{\sqrt{\mathcal{L}(\theta_0) - \mathcal{L}(\theta^*)}}{\sqrt{2TS}M}$. Under assumption 1 and 2, we have 
    \begin{displaymath}
        \frac{1}{T} \sum_{t=0}^T \mathbb{E}[\Vert \nabla \mathcal{L}(\theta_t) \Vert_2^2] \leq G\sqrt{ \frac{8S \left(\mathcal{L}(\theta_0) - \mathcal{L}(\theta^*)\right)}{T}} + \frac{U}{2T} \frac{\exp \left(2 \Vert \tilde{\bm{o}} \Vert_\infty \right) - 1 }{M + 1} 
    \end{displaymath}
\end{mytheorem}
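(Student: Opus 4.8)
The plan is to specialize the generic convergence guarantee of Lemma~\ref{lemma_convergence_for_ce} to the MIDX sampler by feeding in the MIDX-specific gradient-bias bound from Theorem~\ref{theorem:gradient_error_midx}. First I would check that MIDX-driven stochastic gradient descent matches the abstract hypotheses of the lemma: the $S$-smoothness assumption on each $\ell_i$ provides the descent inequality $\mathbb{E}[\mathcal{L}(\theta_{t+1})] \le \mathbb{E}[\mathcal{L}(\theta_t)] - \tfrac{\eta}{2}\mathbb{E}[\Vert \nabla\mathcal{L}(\theta_t)\Vert_2^2] + \tfrac{\eta}{2}\Delta_t^2 + \eta^2 S G^2$, while Assumption~\ref{theorem:assump_gradient_bias} (Lipschitz encoders together with $\Vert\nabla o_j\Vert_2 \le U$) guarantees that the sampled-softmax mini-batch gradient $g_t$ has bounded norm $\Vert g_t\Vert \le G$ and a controllable bias $\Delta_t$.

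Next I would pin down the bias term $\Delta_t = \Vert\mathbb{E}[g_t \mid \theta_t] - \nabla\mathcal{L}(\theta_t)\Vert_2$. The short derivation already recorded just before the convergence theorems shows, via the triangle inequality over the mini-batch, that this batch-level bias is dominated by the per-example bias $\vert \mathbb{E}[\nabla_{\theta_t}\ell'] - \nabla_{\theta_t}\ell\vert$. I would then invoke Theorem~\ref{theorem:gradient_error_midx} to bound this quantity by $U\sqrt{(\exp(2\Vert\tilde{\bm{o}}\Vert_\infty)-1)/(M+1)}$, so that $\Delta_t^2 \le U^2 (\exp(2\Vert\tilde{\bm{o}}\Vert_\infty)-1)/(M+1)$. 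This is exactly the place where the MIDX structure pays off: the bound is governed by the \emph{residual} logits $\tilde{\bm{o}}$ rather than the full logits $\bm{o}$, and hence shrinks as the quantizer granularity (codebook size $K$) increases.

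Finally I would substitute the prescribed stepsize into the telescoped bound of Lemma~\ref{lemma_convergence_for_ce}. Balancing the two deterministic terms $\tfrac{\mathcal{L}(\theta_0)-\mathcal{L}(\theta^*)}{\eta T}$ and $\eta S G^2$ --- the optimization already carried out generically in the excerpt --- collapses them into the single rate $G\sqrt{8S(\mathcal{L}(\theta_0)-\mathcal{L}(\theta^*))/T}$, leaving the averaged bias contribution $\tfrac{1}{2T}\sum_t \Delta_t^2$. Replacing each $\Delta_t^2$ by the MIDX bound from the previous paragraph then yields the stated residual-logit term, completing the argument.

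The main obstacle is bookkeeping rather than anything conceptual. One must handle the $\min\{2,\cdot\}$ truncation in Theorem~\ref{theorem:gradient_error_midx} consistently (the statement simply retains the non-trivial branch), reconcile the constants --- in particular the factor $U$ versus $U^2$ arising from squaring $\Delta_t$, and whether $\tfrac{1}{2T}\sum_t\Delta_t^2$ is left as a sum or absorbed into a uniform-in-$t$ bound --- and deal with the fact that $\Vert\tilde{\bm{o}}\Vert_\infty$ may itself vary along the trajectory $\theta_t$. I would resolve the last point by taking a uniform bound on the residual-logit norm over the iterates (or interpreting $\tilde{\bm{o}}$ as its worst case), so that a single quantity controls the entire average.
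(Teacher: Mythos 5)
Your proposal is correct and follows essentially the same route as the paper: the result is obtained by specializing Lemma~\ref{lemma_convergence_for_ce} with the prescribed stepsize, using the mini-batch triangle-inequality argument to identify $\Delta_t$ with the per-example gradient bias, and substituting the MIDX bound from Theorem~\ref{theorem:gradient_error_midx}. The bookkeeping issues you flag (the $U$ versus $U^2$ factor after squaring $\Delta_t$, and the dropped sum over $t$) are present in the paper's own statement as well, so your reading is consistent with the intended argument.
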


\subsection{Generalization Error Bound}\label{sec:generalization_error_bound}
In this section, we derive the generalization error bound for different proposals to estimate the full log-softmax loss. 

Considering \( m \) data points, $(x_1, y_1), \dots,  (x_m, y_m) \stackrel{\text{i.i.d.}}{\sim} \mathcal{D}$, the log-softmax loss follows as 
$$ 
\ell_{\text{softmax}}(x, y)= -\log\left(\frac{\exp(o_y)}{ \sum_{n=1}^N \exp(o_n)}\right),
$$
and the sampled softmax loss follows
$$ 
\ell_{\text{sampled\_softmax}}(x, y, \mathcal{S}_M(x)) = 
-\log\left(\frac{\exp(o_y^\prime)}{ \exp(o_y^\prime) + \sum_{n \in \mathcal{S}_M(x)} \exp(o_n^\prime)}\right).
$$
In this section, we focus on the expected loss function, \ie, the generalization error, and its discrepancy with the actual empirical loss
$$
\hat{\ell}_{empirical} = \frac{1}{m} \sum_{i=1}^m\ell_{\text{sampled\_softmax}}(x_i, y_i, \mathcal{S}_M(x_i)).
$$
Specifically, we aim to estimate the upper bound of the following:
$$
\left| \mathbb{E}_{(x,y)\sim D}\left[\ell_{\text{softmax}}(x,y) \right] - \hat{\ell}_{empirical} \right|.
$$

\begin{mytheorem}
\label{thm:generalization}
Suppose $\|\eqbf{o}\|_{\infty} \leq B_o$. 
The following inequality holds with a probability of at least $1 - \delta$:
$$
\begin{aligned}
\mathbb{E}\left[\ell_{\text{softmax}}(x,y)\right] 
& \leq \hat{\ell}_{empirical} + \mathbb{E}\left[ KL(Q||P) \right] + \\
& \inf_{\gamma > 0 }\left(4\gamma + \left( 2 B_o + \log \left(1+\frac{1}{q_{\min }}\right) \right) \sqrt{\frac{\log \mathcal{N}_{\infty}\left(\mathcal{F}, \gamma \right)+\log \frac{2}{\delta}}{2 m}}
\right), 
\end{aligned}
$$
\end{mytheorem}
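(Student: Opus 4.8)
The plan is to bound the generalization gap by splitting it into three conceptually distinct pieces: a sampling-bias term that converts the full softmax loss into its sampled counterpart in expectation, an empirical-process (uniform concentration) term that controls the deviation between the expected sampled loss and its empirical average, and the residual that captures the finite-sample covering behaviour of the function class $\mathcal{F}$. Concretely, I would start by writing
\begin{displaymath}
\mathbb{E}\left[\ell_{\text{softmax}}(x,y)\right] - \hat{\ell}_{empirical}
= \underbrace{\left(\mathbb{E}[\ell_{\text{softmax}}] - \mathbb{E}[\ell_{\text{sampled\_softmax}}]\right)}_{\text{(I) bias}}
+ \underbrace{\left(\mathbb{E}[\ell_{\text{sampled\_softmax}}] - \hat{\ell}_{empirical}\right)}_{\text{(II) concentration}},
\end{displaymath}
and treat the two pieces separately.

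For term (I), the key observation is that the sampled softmax partition function, $\exp(o_y') + \sum_{n \in \mathcal{S}_M} \exp(o_n')$, is constructed via self-normalized importance weights so that in expectation over $\mathcal{S}_M \sim Q$ it targets the true partition function. I would apply Jensen's inequality to the concave $\log$ to push the expectation inside, and identify the leftover discrepancy as a Kullback--Leibler divergence between the proposal $Q$ and the target softmax $P$. This is precisely where the $\mathbb{E}[KL(Q\|P)]$ term in the bound originates, and it dovetails with the KL-divergence results already established in Theorems~\ref{theorem:kl_unif}--\ref{theorem:kl_midx_uni}; a smaller proposal-to-softmax divergence directly shrinks this contribution.

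For term (II), I would invoke a standard uniform-convergence argument over the loss class induced by $\mathcal{F}$. The plan is to use a covering-number bound in the $\ell_\infty$ metric: for any fixed resolution $\gamma > 0$, discretize $\mathcal{F}$ to an $\gamma$-net of size $\mathcal{N}_\infty(\mathcal{F}, \gamma)$, control the approximation error on the net by $4\gamma$ (using the Lipschitzness of the log-softmax loss in the logits), and apply a union bound with Hoeffding's inequality over the net entries. The boundedness assumption $\|\bm{o}\|_\infty \le B_o$ enters to bound the range of the loss: each sampled loss lies in an interval of width governed by $2B_o + \log(1 + 1/q_{\min})$, which is exactly the multiplicative factor appearing in front of the square root. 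Taking the infimum over $\gamma$ optimizes the covering-resolution tradeoff and produces the $\inf_{\gamma>0}$ expression. The union-bound/Hoeffding step is what delivers the $\sqrt{(\log \mathcal{N}_\infty + \log(2/\delta))/2m}$ high-probability rate.

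\textbf{The main obstacle} I anticipate is term (I): rigorously converting the Jensen gap between the expected sampled loss and the full loss into a clean $KL(Q\|P)$ bound, while correctly handling the self-normalization so that the importance weights' expectation matches the true partition function without introducing uncontrolled higher-order bias. The delicate point is that the sampled partition function is a random normalizer, so the expectation of its logarithm is not the logarithm of its expectation; pinning down that the residual is exactly (or upper-bounded by) the KL divergence, and verifying that the range bound $\log(1 + 1/q_{\min})$ legitimately captures the worst-case importance weight, is the technically demanding step. The concentration piece (II), by contrast, is fairly standard once the loss range and Lipschitz constant are in hand.
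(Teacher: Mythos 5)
Your proposal matches the paper's proof essentially step for step: the same decomposition into a sampling-bias term bounded by $\mathbb{E}[KL(Q\|P)]$ and a uniform-concentration term handled via a $\gamma$-cover of $\mathcal{F}$ (giving the $4\gamma$ from the two $2\gamma$ approximation errors), the loss-range bound $2B_o + \log(1+1/q_{\min})$, and Hoeffding plus a union bound over the finite net. The one point to make precise is the direction of the convexity argument in term (I): pushing the expectation over $\mathcal{S}_M$ inside the log goes the wrong way, and the correct move (the paper's HM--GM step) is to apply concavity of $\log$ to the \emph{empirical average over the $M$ drawn indices}, i.e.\ $-\log\bigl(\tfrac{1}{M}\sum_{j}\tfrac{p_j+p_yq_j}{q_j}\bigr)\le \tfrac{1}{M}\sum_j\log\tfrac{q_j}{p_j+p_yq_j}\le\tfrac{1}{M}\sum_j\log\tfrac{q_j}{p_j}$, after which taking expectations over $j\sim Q$ yields the KL term — exactly the delicate step you flagged.
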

where \( \mathcal{F} \) represents the score function space:
$
\mathcal{F} := \{o: \mathcal{X} \times \mathcal{Y} \rightarrow \mathbb{R} \ | \ (x, y) \rightarrow o(x, y) \}.
$

\begin{proof}
We begin with the bias estimation of the sampled softmax. For any given $(x, y)$,
    \begin{equation}
    \small
        \label{eq:sample_softmax_bias}
    \begin{aligned}
    & \ell_{\text{softmax}}(x,y) -  \mathbb{E}_{\mathcal{S}_M}[\ell_{\text{sampled\_softmax}}(x, y, \mathcal{S}_M(x))] = - \log \left(p_y\right) + \mathbb{E}_{\mathcal{S}_M}\left[\log \left(p_y^{\prime}\right)\right] \\
    & = \mathbb{E}_{\mathcal{S}_M}\left[\log \left(\sum_{j=1}^N \exp \left(o_j\right)\right) - \log \left(\exp \left(o_y\right)+\sum_{j \in \mathcal{S}_M} \frac{\exp \left(o_j\right)}{M q_j}\right)\right] \\ 
    & =-\mathbb{E}_{\mathcal{S}_M}\left[\log \left(p_y + \frac{1}{M} \sum_{j \in \mathcal{S}_M} \frac{p_j}{q_j}\right)\right] 
    = \mathbb{E}_{\mathcal{S}_M}\left[\log \frac{M} {\sum_{j \in \mathcal{S}_M} \left(\frac{q_j}{p_j + p_y q_j}\right)^{-1}}\right] \\
    & \overset{(a)}{\leq} \mathbb{E}_{\mathcal{S}_M}\left[\log \left(\Pi_{j\in \mathcal{S}_M}\frac{q_j}{p_j + p_y q_j}\right)^{1/M}\right] \\
    & = \frac{1}{M} \sum_{j\in \mathcal{S}_M} \mathbb{E}_{j \sim Q}\left[\log \left(\frac{q_j}{p_j + p_y q_j}\right)\right] \leq \frac{1}{M} \sum_{j\in \mathcal{S}_M} \mathbb{E}_{j \sim Q}\left[\log \left(\frac{q_j}{p_j}\right)\right]   
    = KL(Q||P), 
    \end{aligned}
    \end{equation}
    where 
    (a) use the 
    Harmonic Mean-Geometric Mean (HM-GM)
    inequality.
    We denote 
    $$ 
    C_{\mathcal{D}}(o) = \mathbb{E}_{\left(x, y \right) \sim \mathcal{D}} 
    \mathbb{E}_{\mathcal{S}_M}[\ell_{\text{sampled\_softmax}}(x, y, \mathcal{S}_M(x))],    
    $$
    and its empirical version
    $$
    {C}_{S}(o) = \frac{1}{m} \sum_{i=1}^m\ell_{\text{sampled\_softmax}}(x_i, y_i, \mathcal{S}_M(x_i)),
    $$
    where $\forall i \in [m], \left(x_i, y_i, \mathcal{J}_{M}(x_i)\right) \in S $.

    Let $\mathcal{G}$ be a $\gamma$-cover of $\mathcal{\mathcal{F}}$  under the infinity norm with the size denoted by $\mathcal{N}_{\infty}\left(\mathcal{F}, \gamma \right)$. 
    Then $\forall \ o \in \mathcal{F}, \exists \ g \in \mathcal{G}$ such that
    $|o(x, y) - g(x, y)| \leq \gamma$.
    We have
    \begin{equation} \small
        \label{eq:gen_cover}
    \begin{aligned}
    & C_{D}(o) - C_{D}(g) \\ 
    & = \mathbb{E}\Big[
    - \log\left(\frac{\exp(o_y)}{ \exp(o_y) + \sum_{j \in \mathcal{S}_M(x)} \exp \left(o_j\right) / (M q_j) }\right)  
    + \log\left(\frac{\exp(g_y)}{ \exp(g_y) + \sum_{j \in \mathcal{S}_M(x)} \exp \left(g_j \right) / (M q_j) }\right) \Big] \\
     & = \mathbb{E}\left[ g_y - o_y  
     + 
     \log \left(\frac{ \exp(o_y) + \sum_{j \in \mathcal{S}_M(x)} \exp \left(o_j\right) / (M q_j) }{ \exp(g_y) + \sum_{j \in \mathcal{S}_M(x)} \exp \left(g_j \right) / (M q_j) }\right)\right] \\ 
    & {\ }{\leq} \gamma + \mathbb{E}\left[ \log \left(\frac{ \exp(o_y - g_y) \exp(g_y) + \sum_{j \in \mathcal{S}_M(x)} \exp \left(o_j -g_j \right) \exp(g_j) / (M q_j) }{ \exp(g_y) + \sum_{j \in \mathcal{S}_M(x)} \exp \left(g_j \right) / (M q_j) }\right) \right]  \overset{(a)}{\leq} 2 \gamma,
    \end{aligned}
    \end{equation}
    where (a) holds since $\forall j \in \mathcal{Y}, |o_j - g_j| \leq \gamma $.
    Using the same analysis, we can obtain: 
    \begin{equation}
        \label{eq:emp_cover}
        C_{S}(g) - C_{S}(o) \leq 2 \gamma.
    \end{equation}

Now, we provide an estimate for an upper bound of the loss function.
$$
\begin{aligned}
\ell_{\text{sampled\_softmax}}(x, y, \mathcal{S}_M(x)) 
& = - \log\left(\frac{\exp(o_y)}{ \exp(o_y) + \sum_{j \in \mathcal{S}_M(x)} \exp(o_j) /(M q_j)}\right) \\
& = - o _y + \log\left({ \exp(o_y) + \sum_{j \in \mathcal{S}_M(x)} \exp(o_j) /(M q_j)}\right) \\
& \leq 2 B_o + \log\left(1 + \frac{1}{q_\text{min}}\right).
\end{aligned}
$$
Then, applying the generalization error bound on finite function classes, with at least \(1 - \delta\) probability, we have:
\begin{equation}
    \label{eq:finite_generalize}
    \forall g \in \mathcal{G}, C_{D}(g) - C_{S}(g) \leq  
    \left( 2 B_o + \log \left(1+\frac{1}{q_{\min }}\right) \right) \sqrt{\frac{\log \mathcal{N}_{\infty}\left(\mathcal{F}, \gamma \right)+\log \frac{2}{\delta}}{2 m}}.
\end{equation}


Combining Equations \eqref{eq:gen_cover}, \eqref{eq:emp_cover} and \eqref{eq:finite_generalize},  with at least \(1 - \delta\) probability, we have:
\begin{equation}
    \label{eq:gen_p2}
    \begin{aligned}
    \mathbb{E}[\ell_{\text{sampled\_softmax}}(x, y, \mathcal{S}_M(x))] 
    & \leq \hat{\ell}_{empirical} +  \\
    & 4\gamma + \left( 2 B_o + \log \left(1+\frac{1}{q_{\min }}\right) \right) \sqrt{\frac{\log \mathcal{N}_{\infty}\left(\mathcal{F}, \gamma \right)+\log \frac{2}{\delta}}{2 m}}.
    \end{aligned}
\end{equation}
Combining Equation \eqref{eq:sample_softmax_bias} and taking the infimum over \(\gamma\), we obtain the desired result.
    
\end{proof}
According to the theorem, the generalization error is bounded by the Kullback-Leibler (KL) divergence between the sampling distribution and the ideal softmax distribution. By reducing the bias in the sampling distribution relative to the softmax distribution, we can lower the generalization error and improve model performance. This insight motivates the design of the codeword learning process. The native inverted multi-index relies on product quantization and the K-means clustering algorithm to learn the codewords. Building on this, we aim to improve both the quantization functions and the codeword learning strategy.
Regarding the quantization functions, we adopt residual quantization, which reduces the distortion error and generates smaller residual vectors. This results in a lower bias from the target softmax distribution, as explained in Theorem~\ref{theorem:kl_midx_uni}. For the codeword learning strategy, we propose a learning-based approach by introducing the KL divergence as an objective function, as detailed in Section~\ref{sec:effect_quantizer}.

\section{Experiments} \label{sec:exp}
In this section, we conduct experiments on three typical but different tasks, \ie,  \textit{Language Models}, \textit{Sequential Recommenders} and \textit{Extreme Classification} to validate the utility of the proposed \texttt{MIDX}-based samplers. We will introduce the tasks one by one. 

\subsection{Baseline Samplers}
To sufficiently validate the effectiveness of the proposed \texttt{MIDX}-based samplers, we compare them with both static and adaptive samplers with the log-softmax loss as the objective function. Furthermore, we compare the performance with the native log-softmax loss where all classes contribute to the gradient, with \texttt{Full} as notation.

\noindent \textbf{Static Samplers} refer to those samplers having a fixed sampling distribution over the training process, which is independent of different queries. 
\begin{itemize}[itemsep=1pt, parsep=2pt]
    \item \textbf{Uniform Sampler}, which selects a class according to the uniform distribution. The sampling probability follows as $p(i)=\frac{1}{N}$, where $N$ denotes the number of classes.
    \item \textbf{Unigram Sampler}, which samples a class according to the frequency, \ie, $p(i)=q(i)= \frac{f_i}{\sum_j f_j}$, where $f_i$ represents the frequency w.r.t. the class $i$ in the training data. 
\end{itemize}

\noindent \textbf{Adaptive Samplers} stand for those samplers being changeable during training and varying for different queries. 
Although there are various adaptive samplers, such as DNS, the sampling distribution is intractable, unable to fit in the probability-required sampled softmax. Furthermore, we do not adjust the logits with temperatures in the softmax function.
\begin{itemize}[itemsep=1pt, parsep=2pt]
    \item \textbf{LSH Sampler}~\citep{spring2017new} relies on the collision probability of Local Sensitive Hashing sampling to estimate the partition function in \texttt{log-softmax}. It requires the preprocessing steps to generate the multiple hash tables and then returns the nearest neighbors for the query. By default, there are 16 hash tables and each table has 4 hash functions. 
    \item \textbf{Sphere Sampler}~\citep{blanc2018adaptive} adopts the sphere kernel to approximate the quadratic kernel, whose sampling probability is proportional to $\alpha \cdot s(z,i)^2 + 1$. $s(z, i)$ denotes the logit for the query $\bm{z}$ and the class $i$. The specific value of $\alpha$ is tuned over different data sets, and usually goes greater than 100. Following~\cite{blanc2018adaptive}, the sphere kernel is an alternative for the quadratic kernel, which estimates the softmax of the absolute value, \ie, $\frac{\exp |o_i|}{\sum_{j=1}^N \exp |o_j|}$.
    \item \textbf{Rff Sampler}~\citep{rawat2019sampled} further chooses the random Fourier Features to approximate the softmax, which has a $D$-dimensional RFF map to estimate the kernel. Each class is sampled with the probability $p(i|\bm{z}) \propto \phi(\bm{z})^\top \phi(\bm{q}_i) $, where $$\footnotesize \phi(\bm{x}) = \frac{1}{\sqrt{D}} \left[ \cos(\bm{w}_1^\top \bm{x}),...,\cos (\bm{w}_D^\top \bm{x}), \sin(\bm{w}_1^\top \bm{x}),...,\sin (\bm{w}_D^\top \bm{x})\right].$$ The dimension of the RFF map is always set to 32 by default and the temperature is set to 4. Different from other samplers, the embeddings of query and classes would be normalized before inputting into the function to estimate the Gaussian kernel, \ie, ${\exp\left(\tau \bm{z}^\top \bm{q}_i \right) = \exp \tau \exp (- \tau \Vert \bm{z} - \bm{q}_i \Vert_2^2 / 2 )}$. During our experiments, the performance of the RFF sampler is extremely sensitive to the temperature $\tau$, indicating its less satisfactory for the robustness of the method.
\end{itemize} 
\textbf{Our Samplers} include the dynamic samplers with the inverted multi-index. Concerning the different structures of quantization, we have the following variants:
\begin{itemize}[itemsep=1pt]
    \item \texttt{MIDX-pq} constructs the inverted multi-index depending on the product quantization methods. The overall embedding space is split into two subspaces and the codebooks are built within each subspace.
    \item \texttt{MIDX-rq} builds the index through the residual quantization method and we adopt the two levels for the residual vectors to obtain the final codebooks.
\end{itemize}

\subsection{Language Model Task}
The objective of a language model is to maximize the probability of predicting the target token across the entire vocabulary, given the input sequence of tokens. After encoding the sequence, the resulting representation is treated as the query embedding, while the embeddings corresponding to the vocabulary are treated as class embeddings.

\subsubsection{Experimental Settings}
\textbf{Data set.} There are two data sets to be evaluated. \textbf{Penntreebank}~\citep{marcus-etal-1993-building} data set, a popular benchmark for NLP tasks, consists of English text from Wall Street Journal (WSJ) articles, with a vocabulary size of 10,000. \textbf{Wikitext-2}~\citep{merity2016pointer} data set contains over 30,000 tokens derived from verified Good and Featured articles on Wikipedia, making it approximately twice as large as the Penn Treebank data set. The maximum sequence length is set to 35 for Penn Treebank and 50 for Wikitext-2.

\noindent \textbf{Metric.} Perplexity is used to evaluate the performance of the language models. A lower perplexity score indicates better performance in predicting the next token.

\noindent \textbf{Implementation Details.} We train the models using both LSTM and Transformer architectures. For the LSTM models, the hidden size is set to 128, and the number of layers is 2. For the Transformer models, the number of attention heads is 4, the number of layers is 2, and the feedforward dimension is 1024. The embedding dimension is fixed at 200 for all models. Each approach is trained for 100 epochs, with early stopping based on perplexity performance. The number of sampled classes is fixed at 20 for all evaluated samplers. The default number of codewords within each codebook for \texttt{MIDX-pq} and \texttt{MIDX-rq} is set to 32.

\begin{table}[t]
    \centering
    \begin{tabular}{c|c|c|c|c}
        \toprule
        Data set & \multicolumn{2}{|c|}{Penntreebank} & \multicolumn{2}{|c}{Wikitext-2} \\ \midrule
        Sampler & LSTM & Transformer & LSTM & Tranformer\\ \midrule
        Full          & {109.1965} & 143.8422    & {123.3047} & 180.8331   \\
        Uniform       & {159.9701} & 181.5720    & {211.5420} & 259.4951   \\
        Unigram       & {139.7837} & 166.4322    & {171.6996} & 218.4348   \\
        LSH           & {145.8054} & 167.9671    & {176.8901} & 221.4062   \\
        Sphere        & {143.2146} & 179.2362    & {162.4147} & 273.8121   \\
        RFF           & {145.5703} & 189.1259    & {232.0854} & 278.9223   \\
        MIDX-pq       & {121.5477} & 149.6586    & {136.6786} & 199.7429   \\
        MIDX-rq       & {117.8317} & 147.3405    & {132.2591} & 180.9055   \\
        \bottomrule
    \end{tabular}
    \vspace{-0.8em}
    \caption{Perplexity of Language Model}
    \label{tab:language_model_cp_baselines}
\end{table}
\vspace{-0.5em}

\subsubsection{Performance of Various Samplers}
We begin by evaluating the performance of the language models. In Table~\ref{tab:language_model_cp_baselines}, we compare the perplexity of these models using various samplers. The results demonstrate that the proposed \texttt{MIDX}-samplers, including both the Product-Quantization-based and Residual Quantization-based variants, outperform the other samplers across both data sets.

Among the static samplers, the \textit{Unigram} sampler consistently outperforms the \textit{Uniform} sampler. This suggests that more frequent classes tend to have higher logits during training, which makes the unigram proposal more likely to select informative samples. Among the dynamic samplers, the \textit{Sphere} sampler performs poorly, particularly on the Wikitext-2 data set. This indicates that using an estimation based on the absolute value may not be an effective choice. In contrast, the \texttt{MIDX}-based samplers consistently deliver strong performance, with the Residual Quantization-based variant outperforming the Product Quantization-based variant. This aligns with the smaller distortion error achieved by the residual quantization approach, which contributes to better sampling accuracy.

Furthermore, we evaluate the convergence performance of different samplers by comparing their perplexity changes on the validation data set throughout the training process, as shown in Figure~\ref{fig:exp_change_nlp_samplers}. Within approximately 50 epochs, all samplers converge to low perplexity values close to their final performance. The LSH sampler shows faster convergence but ultimately reaches a higher perplexity compared to the \texttt{MIDX}-samplers. In contrast, the curves for the \texttt{MIDX}-samplers remain consistently closer to the optimal perplexity, indicating more stable and efficient convergence.

\begin{figure}[t]
    \begin{minipage}[t]{0.48\linewidth}
        \centering
        \includegraphics[width=0.95\textwidth]{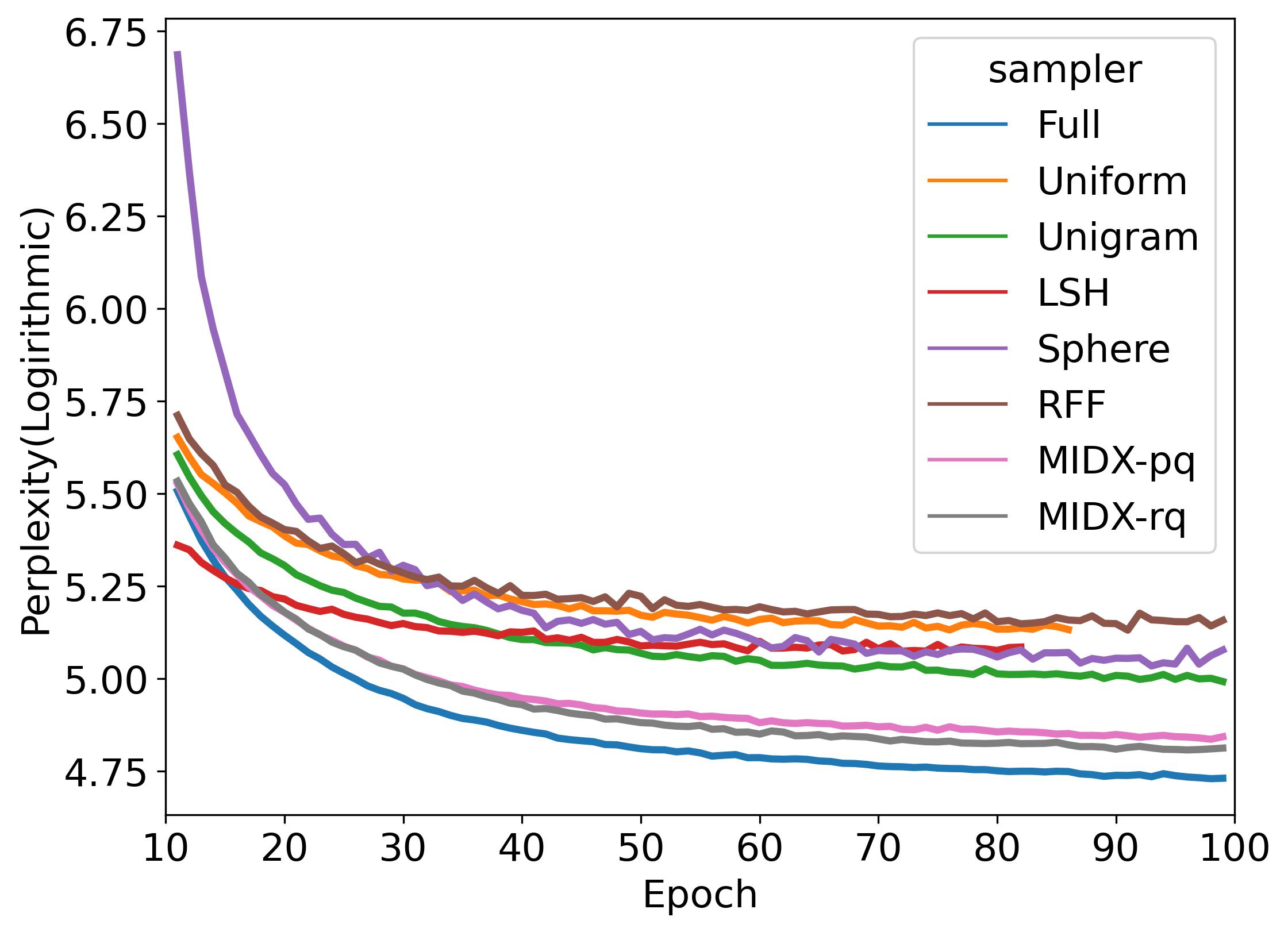}
        \vspace{-0.7em}
        \caption{Comparison with samplers}
        \label{fig:exp_change_nlp_samplers}
    \end{minipage}
    \begin{minipage}[t]{0.48\linewidth}
        \centering
        \includegraphics[width=0.95\textwidth]{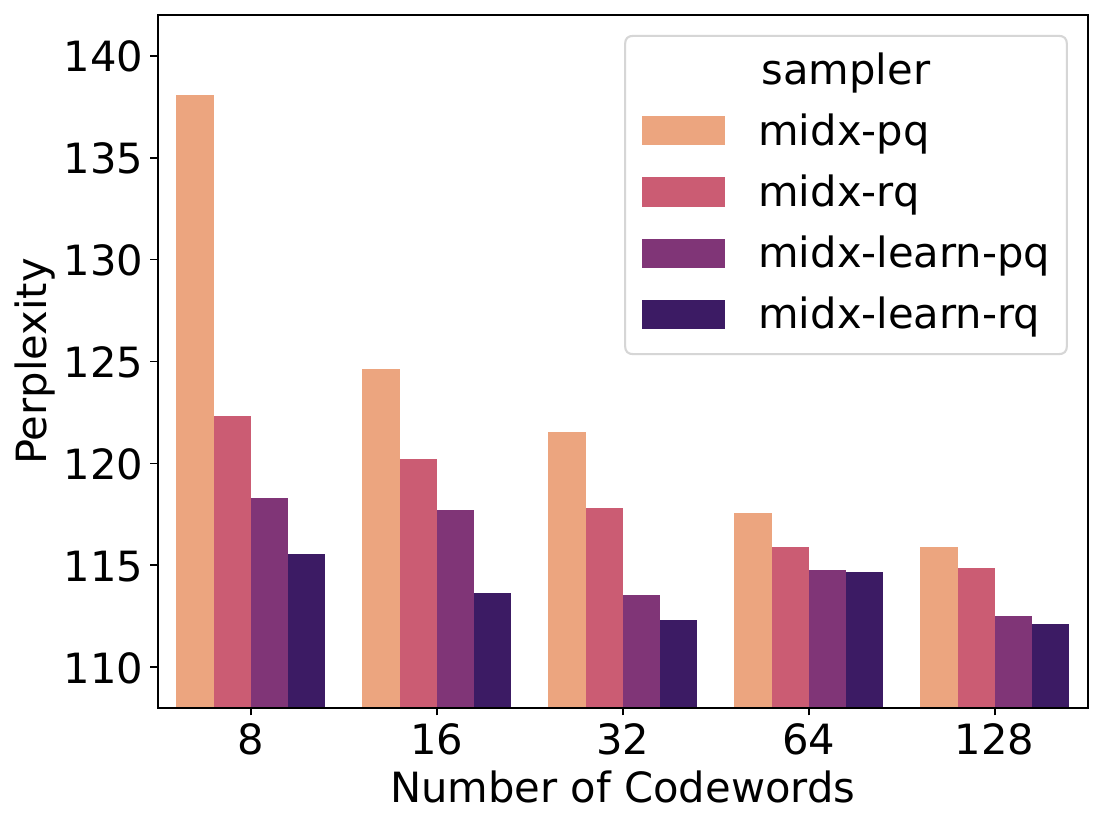}
        \vspace{-0.7em}
        \caption{Effect of codeword numbers}
        \label{fig:penntree_cluster}
    \end{minipage}
\end{figure}

\subsubsection{Effect of Quatization} \label{sec:effect_quantizer}
As suggested by our theoretical analysis, the sampling bias from the softmax distribution is bounded by the norm of the residual vectors, which in turn is determined by the quantizers. This aligns with the distortion error observed in clustering. Table~\ref{tab:language_model_cp_baselines} demonstrates that the residual quantizer outperforms the product quantizer, which is consistent with the fact that the residual quantizer results in a lower distortion error. To gain deeper insights into the effect of different quantizers, we conduct a more detailed analysis in the following discussion.

\noindent \textbf{Effect of the Codewords Numbers.} Firstly, we compare the performance with various numbers of codewords of the codebook. The number of codewords ranges among \{8,16,32,64,128\} for \texttt{MIDX}-based samplers over the \textbf{Penntreebank} data set as shown in Fig~\ref{fig:penntree_cluster}. 
As the number of codewords increases, all \texttt{MIDX}-based samplers demonstrate improved performance. This can be attributed to the fact that the upper bound of the distortion error is inversely proportional to the number of codewords. With more codewords, the residual vector's upper bound in the $\ell_2$ norm decreases, leading to a more accurate sampling distribution that closely approximates the softmax distribution with less bias.

\noindent \textbf{Learnable Codebooks.} 
As we discussed in Section~\ref{sec:generalization_error_bound},  we delved into the pivotal role of minimizing KL divergence from the ideal softmax distribution of samplers in enhancing model generalization ability. This observation drives our endeavor to establish a more consistent learning framework for the codewords. While the conventional inverted multi-index technique employs product quantization with K-Means clustering to refine the codewords based on Euclidean distance optimization, our novel approach emphasizes the automatic acquisition of codewords guided by the KL loss. This method facilitates the automatic updating of codewords, introducing a reconstruction loss as an additional element.

Essentially, we consider the codewords as model parameters, enabling their concurrent optimization with the encoders via gradient descent. The objective function governing the learning of codebooks comprises two key components: the reconstruction loss and the KL-divergence loss. The reconstruction loss follows as:
\begin{displaymath} \small
    \mathcal{L}_{\text{recon}} = \sum_{i=1}^N \left \Vert \hat{\bm{q}}_i - \bm{q}_i  \right\Vert_2^2, \quad \hat{\bm{q}}_i = \left[ \left(\sum_{k=1}^K w_{k}^1 \bm{c}_{k}^1 \right)  \oplus \left(\sum_{k^\prime=1}^K w_{k^\prime}^2 \bm{c}_{k^\prime}^2 \right) \right]
\end{displaymath}
where $\hat{\bm{q}}_i$ represents the encoded embedding. $w_{k}^i$ denotes the weight for the codeword $\bm{c}_k^i$, which is computed by normalizing the inner product between the embedding $\bm{q}_i$ and the codeword $\bm{c}_k^i$:
$w_{k}^i = \frac{\exp (\bm{q}_i^\top \bm{c}_k^i)}{\sum_{k^\prime = 1}^ K \exp (\bm{q}_i^\top \bm{c}_{k^\prime}^i)}$.
The reconstruction loss encourages the encoded embedding $\hat{\bm{q}}_i$ to be closer to the original embedding $\bm{q}_i$.
The KL-divergence loss follows:
\begin{displaymath} \small
    \mathcal{L}_{\text{KL}} = \log \sum_{i=1}^N p_i \cdot \frac{p_i}{ p^{\prime}_i}, \quad p^{\prime}_i = \frac{\exp (\bm{z}^\top \hat{\bm{q}}_i)}{\sum_{j=1}^N \exp (\bm{z}^\top \hat{\bm{q}}_j)}
\end{displaymath}
where $p_i$ denotes the target softmax probability, \ie, $p_i = \frac{\exp (\bm{z}^\top \bm{q}_i)}{\sum_{j=1}^N \exp (\bm{z}^\top \bm{q}_j)}$. The probability $p^{\prime}$  represents the probability derived from the encoded embeddings.
The KL loss helps reduce the bias in the KL-divergence between the target softmax distribution and the sampling distribution produced by the \texttt{MIDX} sampler. The learnable parameters include the codewords, denoted as $\Theta = \{ \mathcal{C}^1, \mathcal{C}^2\}$, which are updated at each step.

We conducted a comparison on the Penntreebank data set using the LSTM model, with the results presented in Table~\ref{tab:leanable_codewords_penn}. We report the performance in terms of perplexity (PPL) on the test data and the KL-loss on the training data from the final epoch. For better comparison, we also calculate the KL-loss for the original \texttt{MIDX} sampler. The results indicate that by replacing the K-means clustering algorithm with the synchronous learnable codewords, the KL-loss decreases significantly, demonstrating a less biased sampling distribution. This, in turn, leads to a lower perplexity for the language model.

Furthermore, we investigate the effect of varying the number of codewords within the codebooks, as shown in Figure~\ref{fig:penntree_cluster}. Compared to the asynchronous update of codewords, the learnable approach consistently outperforms with a smaller number of codebooks. Specifically, the learnable approach shows an improvement of approximately 14\% in performance with just 8 codewords. This demonstrates the learnable approach’s superior ability to group the classes effectively. With the aid of the KL-loss, the divergence from the softmax distribution is minimized. However, as the number of codewords increases, the performance gap between the two approaches narrows.

\begin{table}[t]
    \centering
    \begin{tabular}{c|c|c|c|c|c}
        \toprule
        Sampler & KL-Loss & PPL    & Sampler       & KL-Loss & PPL    \\ \midrule
        MIDX-pq & 2.2421  & 4.8003 & MIDX-Learn-pq & 1.6625  & 4.7320 \\
        MIDX-rq & 1.5551  & 4.7693 & MIDX-Learn-rq & 1.2905  & 4.7212 \\ 
        \bottomrule
    \end{tabular}
    \vspace{-0.5em}
    \caption{Analysis for Learnable Codebooks on Penntreebank Data set}
    \label{tab:leanable_codewords_penn}
\end{table}

\subsubsection{Sampling Distribution Analysis}
In Section~\ref{sec:theo_distri_bias}, we theoretically compare the different sampling proposals. To provide a more direct comparison between these samplers, we approximate the sampling distribution based on the sampling frequency of each sampler. Specifically, we save the intermediate embeddings for both queries and classes every 10 epochs during the training process. We then apply the different samplers to select a total of $5 \times 10^8$ samples, using the normalized frequency as the probability distribution. For the \textit{softmax} distribution, we directly compute the probabilities. The classes are ordered according to their frequency, and we plot the cumulative probability distribution for a clearer representation.

\begin{figure}[t]
    \begin{minipage}[t]{0.48\linewidth}
        \centering
        \includegraphics[width=0.95\textwidth]{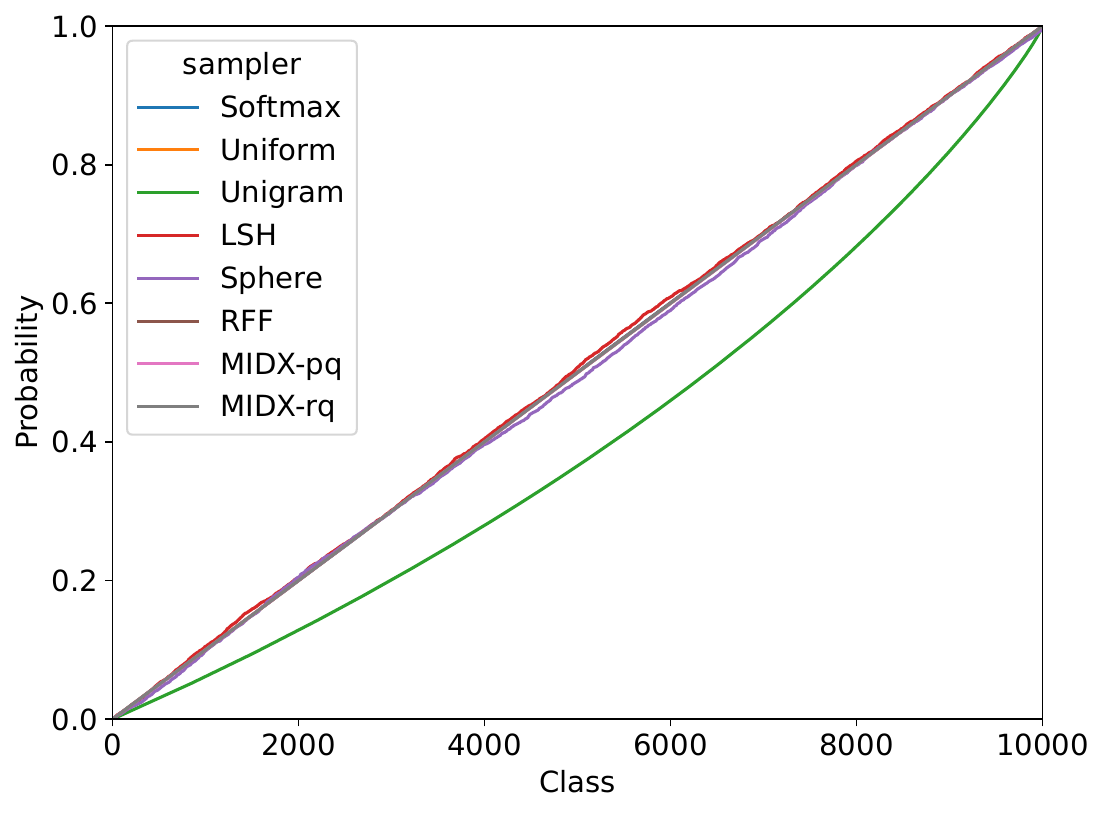}
        \vspace{-0.75em}
        \caption{Sampling probabilities with random initialization.}
        \label{fig:exp_sampling_0_epoch}
    \end{minipage}
    \hspace{0.2cm}
    \begin{minipage}[t]{0.48\linewidth}
        \centering
        \includegraphics[width=0.95\textwidth]{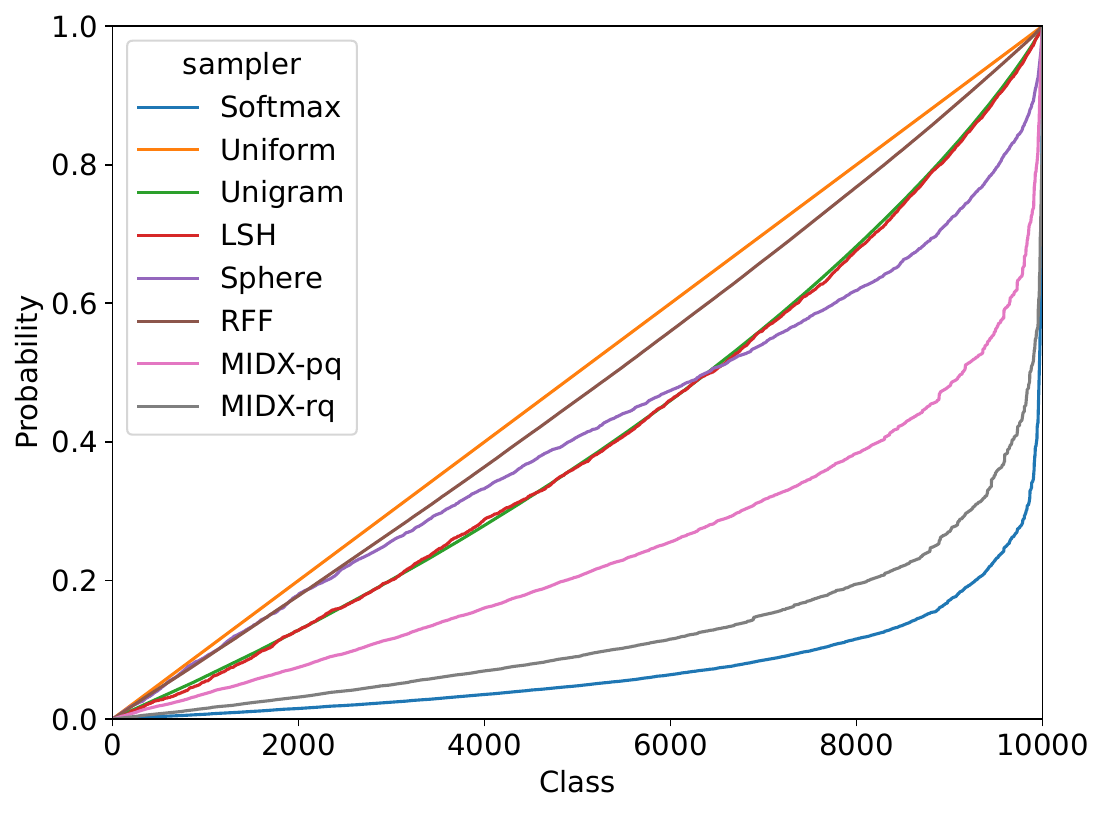}
        \vspace{-0.75em}
        \caption{Sampling probabilities with well-trained embeddings.}
        \label{fig:exp_sampling_40_epoch}
    \end{minipage}
\end{figure}

We first compare the sampling probabilities at the beginning and after training using randomly initialized embeddings (Figure~\ref{fig:exp_sampling_0_epoch}) and well-trained embeddings (Figure~\ref{fig:exp_sampling_40_epoch}). At the start of the training, each embedding is randomly initialized with values drawn from a 0-1 Gaussian distribution, resulting in the softmax probabilities being nearly uniform across all classes. As a result, all adaptive samplers initially behave similarly to the uniform sampler.
However, as training progresses, the softmax probabilities for different classes begin to diverge significantly. The static samplers, Uniform and Unigram, maintain the same distribution as at the beginning. In contrast, the adaptive samplers exhibit different behaviors: the RFF sampler converges towards the Uniform sampler, while the LSH sampler gravitates towards the Unigram sampler. Among the adaptive samplers, the Sphere, \texttt{MIDX}-pq, and \texttt{MIDX}-rq samplers demonstrate better approximations to the softmax distribution, with \texttt{MIDX}-rq showing the closest match to the target distribution.

\begin{figure}[t]
    \begin{minipage}[t]{0.48\linewidth}
        \centering
        \includegraphics[width=0.98\textwidth]{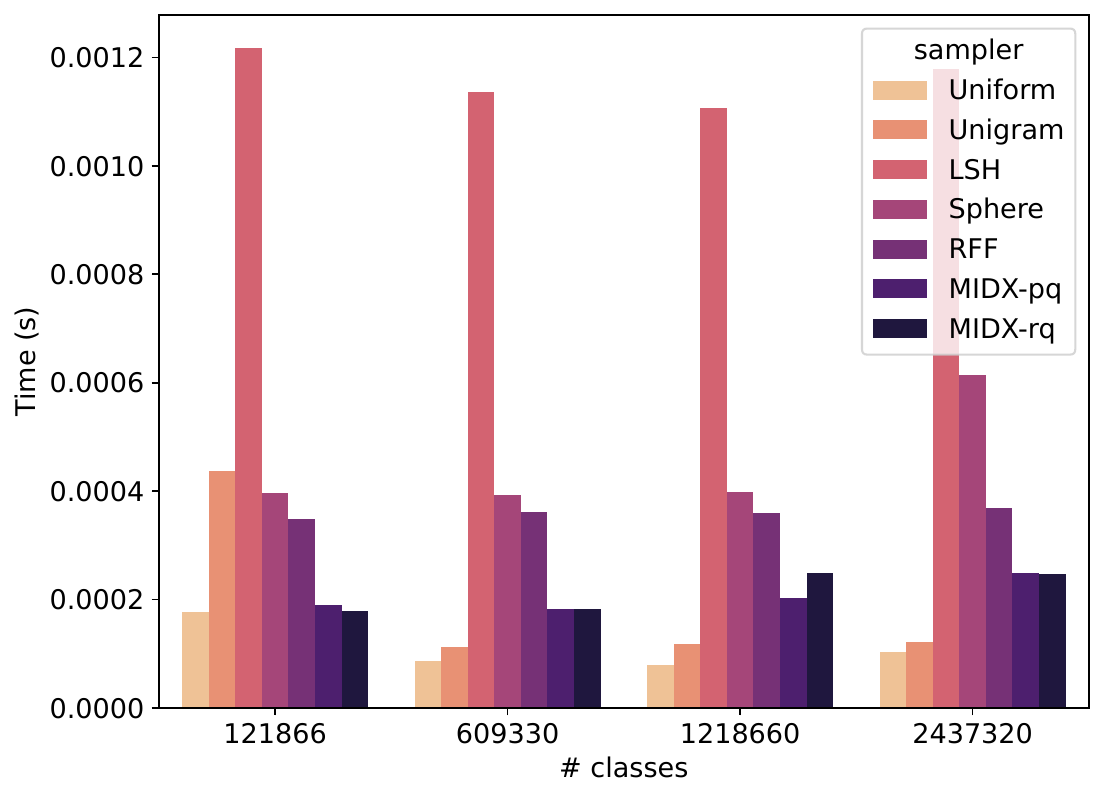}
        \vspace{-0.75em}
    \caption{Sampling Time vs. \#Class}
    \label{fig:exp_samples_sampling_time}
    \end{minipage}
    \begin{minipage}[t]{0.46\linewidth}
        \centering
        \includegraphics[width=0.98\textwidth]{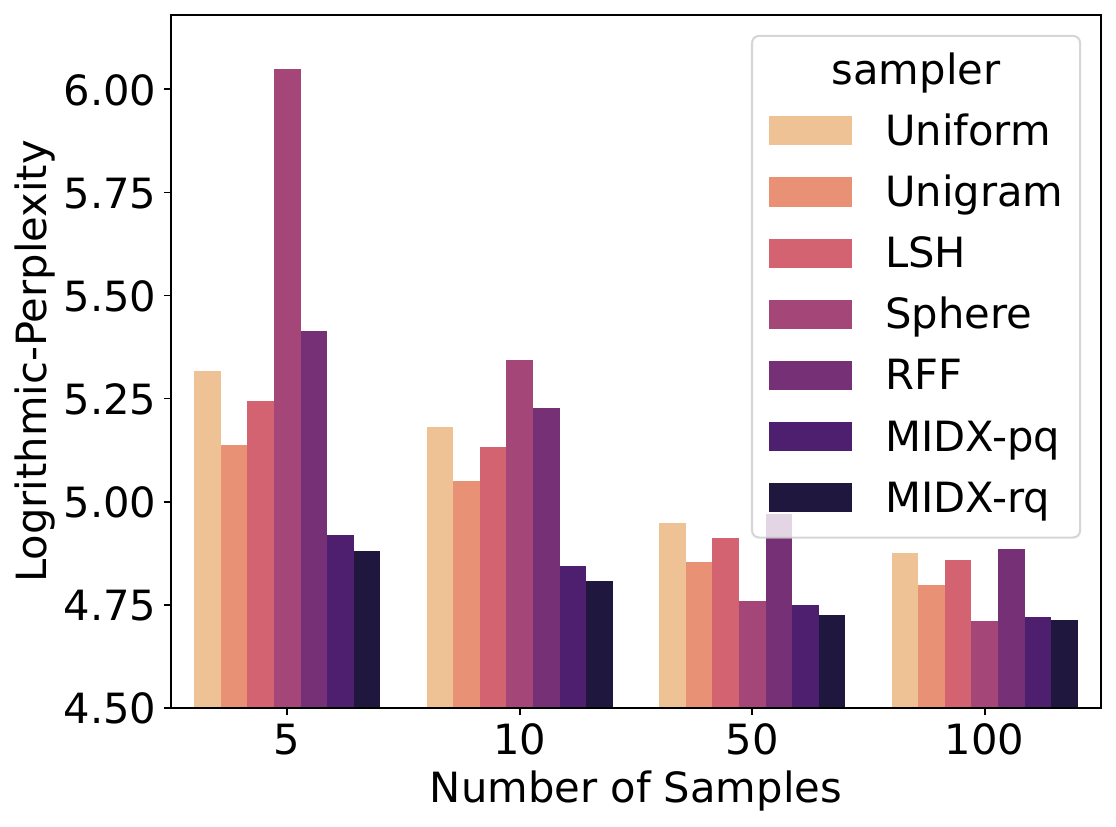}
        \vspace{-0.75em}
        \caption{Effect of \#Samples}
        \label{fig:nlp_num_negs}
    \end{minipage}
\end{figure}

\subsubsection{Effect of Sample Size}
Considering that the gradient bias is inversely proportional to the size of the sampling pool, we conduct experiments to explore the effect of sample size on the Penntreebank data set, as illustrated in Figure~\ref{fig:nlp_num_negs}. We vary the number of samples in the set ${5, 10, 50, 100}$. As expected, all samplers perform better with an increased number of samples in terms of perplexity. Even with a limited number of samples, the \texttt{MIDX}-samplers still demonstrate competitive performance, achieving a log-perplexity smaller than 5 and outperforming all other samplers. An interesting observation regarding the Sphere sampler is that its performance improves significantly as the number of samples increases, which suggests that this method for approximating the absolute value of softmax has a substantial bias.


\subsubsection{Sampling Time}
We report the sampling time for various samplers as the number of classes increases, as shown in Figure~\ref{fig:exp_samples_sampling_time}. Note that this analysis does not include initialization time, such as the frequency preparation for the Unigram sampler and the K-means clustering for \texttt{MIDX}-based samplers. We sample 100 classes for a batch of 256 users and calculate the average time on GPUs over repeated 10 trials.

The results show that the static samplers, including the Uniform and Unigram samplers, maintain consistent sampling times as the number of classes increases, demonstrating superior sampling efficiency. In contrast, among the adaptive samplers, the LSH sampler incurs the highest time cost, while other samplers perform more efficiently. For a large number of classes, \texttt{MIDX}-based samplers continue to perform well, maintaining nearly the same sampling time even with approximately 100,000 classes. This is expected, as the time complexity of \texttt{MIDX}-based samplers scales linearly with the number of codewords (set to 64 in our experiments), rather than the number of classes. In contrast, the kernel-based samplers show an increasing time cost as the number of classes grows. This performance degradation may be attributed to the specific GPU implementation we employed, which does not uses tree structures as described in the original approach. As a result, the time complexity of the kernel-based samplers is more sensitive to the number of classes.

\subsection{Sequential Recommendation}

\subsubsection{Experimental Settings}
\noindent \textbf{Data set.} Three ground-truth data sets are used for the experiments, as summarized in Table~\ref{tab:data_seqrec}. \textbf{MovieLens-10M} contains approximately 10,000 user ratings for various movies, with ratings above 3 retained in the data set. \textbf{Gowalla} records user check-ins, while \textbf{Amazon-books} includes user ratings for books, with ratings greater than 3 kept in the data set. Users and items with fewer than 10 interactions are filtered out. User sequences are constructed based on the timestamp of each record, with a maximum sequence length of 20. The data is split into training, validation, and test sets with a ratio of 8:1:1.

\begin{table}[ht]
    \centering
    \begin{tabular}{c|r|r|r|r}
        \toprule
        Data set & \# User & \# Item & \# Interaction & Density\\
        \midrule
        MovieLens-10M & 69,585  & 9,176 &  8,233,567 & 0.0129 \\
        Gowalla & 52,986 & 121,867 & 3,301,571 & 0.0005\\
        Amazon-books & 68,498 & 65,549 & 2,954,716 & 0.0007\\
        \bottomrule
    \end{tabular}
    \vspace{-0.5em}
    \caption{Data Statistics for sequential recommendation}
    \label{tab:data_seqrec}
\end{table}

\noindent \textbf{Metric.} Two metrics, NDCG@k and Recall@k, are used to evaluate the performance of different samplers. Here, $k$ denotes the cutoff for the predicted ranking list, with values set to ${10, 20, 50}$. NDCG (Normalized Discounted Cumulative Gain) assigns higher scores to items ranked higher in the list, while Recall measures the ratio of relevant items that appear in the top-$k$ rankings. In both metrics, a higher value indicates better performance. Although these metrics do not directly assess classification tasks, previous research has shown that ranking-oriented metrics are strongly correlated with softmax loss~\citep{menon2019multilabel}.

\noindent \textbf{Implementation Details.} 
The model is trained using the Adam optimizer, with both model parameters and hyperparameters (\eg, learning rate, weight decay) tuned on the validation set. Early stopping is applied, with the stopping criterion based on NDCG@10, using a patience of 10 epochs. The maximum number of epochs is set to 50. The embedding dimension is set to 64. For batch size, we use 4096 for the MovieLens-10M and Gowalla data sets, and 2048 for the Amazon-books data set. The number of sampled items for optimizing the softmax loss is set to 90.

We evaluate two typical sequential recommendation models to encode user behavior: SASRec~\citep{kang2018self} and GRU4Rec~\citep{jannach2017recurrent}. For SASRec, we use 2 attention layers, each with 2 heads and 128 hidden units. For GRU4Rec, the hidden size is set to 128.

\subsubsection{Experimental Results With Various Samplers}

\newcommand{\tabincell}[2]{\begin{tabular}{@{}#1@{}}#2\end{tabular}} 
\begin{table}[t]
\begin{tabular}{c|c|cccc|cccc}
\toprule
                         &         & \multicolumn{4}{|c|}{SASRec}        & \multicolumn{4}{|c}{GRURec}        \\ \midrule
Data set                 & Sampler & N@10   & N@50   & R@10   & R@50   & N@10   & N@50   & R@10   & R@50   \\ \midrule
\multirow{8}{*}{ML-10m}  & Full    & 0.0922 & 0.1440 & 0.1738 & 0.4114 & 0.1358 & 0.1892 & 0.2365 & 0.4808 \\
                         & Uniform & 0.0840 & 0.1371 & 0.1623 & 0.4058 & 0.1224 & 0.1797 & 0.2270 & 0.4882 \\
                         & Unigram & 0.0885 & 0.1406 & 0.1705 & 0.4100 & 0.1261 & 0.1818 & 0.2304 & 0.4847 \\
                         & LSH     & 0.0822 & 0.1338 & 0.1601 & 0.3977 & 0.1142 & 0.1684 & 0.2120 & 0.4602 \\
                         & Sphere  & 0.0916 & 0.1431 & 0.1744 & 0.4110 & 0.1334 & 0.1870 & 0.2374 & 0.4819 \\
                         & RFF     & 0.0871 & 0.1400 & 0.1684 & 0.4108 & 0.1224 & 0.1791 & 0.2275 & 0.4864 \\
                         & MIDX-pq & 0.0899 & 0.1419 & 0.1721 & 0.4102 & 0.1336 & 0.1873 & 0.2355 & 0.4802 \\
                         & MIDX-rq & 0.0916 & 0.1433 & 0.1752 & 0.4125 & 0.1337 & 0.1877 & 0.2355 & 0.4817 \\ \midrule \midrule
\multirow{8}{*}{Gowalla} & Full    & 0.0347 & 0.0514 & 0.0624 & 0.1393 & 0.0318 & 0.047  & 0.0571 & 0.1273 \\
                         & Uniform & 0.0265 & 0.0416 & 0.0483 & 0.1176 & 0.0236 & 0.0387 & 0.0441 & 0.1138 \\
                         & Unigram & 0.0271 & 0.0421 & 0.0495 & 0.1190 & 0.0234 & 0.0385 & 0.0444 & 0.1140 \\
                         & LSH     & 0.0284 & 0.0442 & 0.0524 & 0.1252 & 0.0250 & 0.0407 & 0.0470 & 0.1195 \\
                         & Sphere  & 0.0312 & 0.0476 & 0.0561 & 0.1315 & 0.0281 & 0.0435 & 0.0519 & 0.1226 \\
                         & RFF     & 0.0277 & 0.0429 & 0.0507 & 0.1206 & 0.0241 & 0.0389 & 0.0454 & 0.1144 \\
                         & MIDX-pq & 0.0337 & 0.0500 & 0.0605 & 0.1356 & 0.0293 & 0.0447 & 0.0534 & 0.1245 \\
                         & MIDX-rq & 0.0332 & 0.0495 & 0.0596 & 0.1350 & 0.0308 & 0.0463 & 0.0563 & 0.1281 \\ \midrule \midrule
\multirow{8}{*}{Amazon}  & Full    & 0.0644 & 0.0881 & 0.1035 & 0.2126 & 0.0608 & 0.0849 & 0.0995 & 0.2108 \\
                         & Uniform & 0.0467 & 0.0700 & 0.0819 & 0.1898 & 0.0379 & 0.0610 & 0.0699 & 0.1769 \\
                         & Unigram & 0.0473 & 0.0704 & 0.0824 & 0.1897 & 0.0401 & 0.0629 & 0.0738 & 0.1798 \\
                         & LSH     & 0.0497 & 0.0736 & 0.0857 & 0.1963 & 0.0440 & 0.0670 & 0.0793 & 0.1859 \\
                         & Sphere  & 0.0549 & 0.0788 & 0.0934 & 0.2038 & 0.0549 & 0.0794 & 0.0949 & 0.2081 \\
                         & RFF     & 0.0471 & 0.0700 & 0.0828 & 0.1889 & 0.0386 & 0.0616 & 0.0714 & 0.1779 \\
                         & MIDX-pq & 0.0608 & 0.0846 & 0.0998 & 0.2096 & 0.0530 & 0.0771 & 0.0924 & 0.2041 \\
                         & MIDX-rq & 0.0622 & 0.0863 & 0.1020 & 0.2134 & 0.0547 & 0.0791 & 0.0948 & 0.2075 \\ \bottomrule
\end{tabular}
\vspace{-0.75em}
\caption{Comparison of Sequential Recommenders. (N: NDCG and R: Recall)}
    \label{tab:seqrec_cp_baselines}
\end{table}

The experimental results in terms of NDCG and Recall at cutoffs 10 and 50 are reported in Tables~\ref{tab:seqrec_cp_baselines}. Overall, \texttt{MIDX} samplers consistently outperform other samplers across all data sets and metrics, especially at higher cutoffs of 50. 
Our key findings are as follows:

\textbf{Finding 1: Static samplers, Uniform and Unigram samplers, consistently lag behind dynamic samplers.} On all data sets, Uniform and Unigram samplers consistently show poorer performance, especially at higher cutoffs (N@50 and R@50). For example, on MovieLens-10M, Unigram achieves 0.1406 for N@50 and 0.4100 for R@50, which are significantly lower than \texttt{MIDX}-based samplers.

\textbf{Finding 2: The sparse Gowalla data set benefits most from \texttt{MIDX} samplers.} Gowalla (with a density of 0.0005) shows the largest improvement with \texttt{MIDX} samplers compared to Uniform. \texttt{MIDX}-rq achieves a 14.8\% improvement in Recall@50 and 22.3\% improvement in NDCG@50 over Uniform. The sparse nature of Gowalla means data points are rare and dispersed, making it difficult for static samplers like Uniform to capture meaningful patterns. In contrast, \texttt{MIDX} samplers reduce sampling bias, especially in regions with sparse interactions, leading to better recommendations, which is important for sparse scenarios.

\textbf{Finding 3: \texttt{MIDX}-rq (Residual Quantization) consistently outperforms \texttt{MIDX}-pq (Product Quantization), } demonstrating the advantage of residual quantization in reducing distortion errors. Residual Quantization encodes residuals (differences between the target distribution and the quantized representation) more accurately, leading to better compression and improved softmax approximation. This results in better ranking and recommendation performance for sequential tasks.

\subsection{Extreme Classification Task}

\subsubsection{Experimental Settings}
We evaluate the samplers on two widely used public data sets: AmazonCat-13K and WikiLSHTC-325K. The statistics of these data sets are summarized in Table~\ref{tab:data_ec}. Each data point is represented as a sparse Bag-of-Words (BOW) feature vector, with each having at least one associated label. Following prior work~\citep{rawat2019sampled}, we represent each data entry as a 128-dimensional vector, $\bm{z} \in \mathbb{R}^{d}$, where $d$ is set to 128. Similarly, each class label is mapped to a 128-dimensional vector $\bm{q} \in \mathbb{R}^{d}$.
We use the original train-test splits provided by the data sets and evaluate performance using Precision@$k$ for $k = 1, 3, 5$. For each data point, we sample 1000 classes across all three data sets, using the specified sampler, to compute the log-sampled softmax. All models are trained for 50 epochs.

\begin{table}[t]
    \centering
    \begin{tabular}{c|r|r|r}
        \toprule
        Data set & \# Class & \# Samples & Dimensionality\\
        \midrule
        AmazonCat-13K & 13,330 & 1,186,239& 203,882\\
        WikiLSHTC-325K & 325,056& 1,778,351& 1,617,899\\
        \bottomrule
    \end{tabular}
    \vspace{-0.7em}
    \caption{Data Statistics for Extreme Classification}
    \label{tab:data_ec}
\end{table}

\subsubsection{Experimental Results With Various Samplers}
To verify the performance of different samplers, we compare the different samplers on the different data sets, as illustrated in Table~\ref{tab:ec_cp_baselines}. Both \texttt{MIDX}-pq and \texttt{MIDX}-rq demonstrate strong performance across the two datasets. Among the other baseline samplers, the Sphere sampler exhibits comparable results, particularly on the WikiLSHTC dataset. However, for larger cutoffs, such as precision@3 and precision@5, \texttt{MIDX}-based samplers outperform the others, highlighting their advantage in predicting overall probabilities. In extreme classification tasks, where class embedding dimensionality is significantly large, the consistent performance of \texttt{MIDX}-based samplers indicates their effectiveness even in high-dimensional settings, which are commonly encountered in deep learning models.


\begin{table}[t]
    \centering
    \begin{tabular}{c|c|c|c|c|c|c|c|c}
    \toprule
    Data set                    & Sampler & P@1    & P@3    & P@5    & Data set                    & P@1    & P@3    & P@5    \\
    \midrule
    \multirow{8}{*}{AmazonCat} & Full    & 0.8478 & 0.7169 & 0.5770 & \multirow{8}{*}{WikiLSHTC} & 0.1805 & 0.0867 & 0.0596 \\
                               & Uniform & 0.7242 & 0.6284 & 0.5152 &                            & 0.1006 & 0.0495 & 0.0356 \\
                               & Unigram & 0.8105 & 0.6819 & 0.5502 &                            & 0.1504 & 0.0676 & 0.0457 \\
                               & LSH     & 0.7936 & 0.6704 & 0.5405 &                            & 0.1462 & 0.0659 & 0.0447 \\
                               & Sphere  & 0.8176  & 0.6950 & 0.5602 &                            & 0.1662 & 0.0744 & 0.0501 \\
                               & RFF     & 0.7484 & 0.6441 & 0.5285 &                            & 0.1455 & 0.0652 & 0.0445 \\
                               & MIDX-pq & 0.8352 & 0.7055 & 0.5652 &                            & 0.1661 & 0.0779 & 0.0531 \\
                               & MIDX-rq & 0.8478 & 0.7166 & 0.5739 &                            & 0.1593 & 0.0758 & 0.0518 \\ \bottomrule
    \end{tabular}
    \vspace{-0.7em}
    \caption{Comparison of classification performance}
    \label{tab:ec_cp_baselines}
\end{table}

\subsection{Overall Experimental Analysis}
The \texttt{MIDX} samplers consistently outperform other methods across all tasks, demonstrating superior performance in various scenarios. When compared to other samplers, \texttt{MIDX} exhibits several notable advantages:

\textbf{Kernel-based Samplers (Sphere and RFF)}:
The Sphere sampler performs well in sequential and extreme classification tasks but struggles with language modeling, indicating its less unsatisfactory in generalization. Similarly, the RFF sampler, although competitive in some cases, often delivers results comparable to the Unigram sampler. Both of these samplers rely on kernel-based estimators, which have inherent limitations in capturing the true softmax distribution. While kernel methods are proportional to the logits, they are unable to fully align with the actual sampling distribution. This results in a discrepancy between the computed probabilities and the true distribution, which can cause issues when estimating gradients. In contrast, \texttt{MIDX} directly estimates the softmax distribution by considering both the numerator and denominator, which allows for more accurate approximations and better integration with self-normalized importance sampling.

\textbf{LSH (Locality Sensitive Hashing)}:
LSH underperforms relative to dynamic samplers and is highly sensitive to hyperparameters, such as the number of tables and hash functions. These parameters significantly influence the allocation of data points and the sampler’s overall effectiveness. On the other hand, \texttt{MIDX} is more efficient, requiring only a single hyperparameter—the number of codewords in each book. This simplicity allows \texttt{MIDX} to perform effectively, even with a small number of codewords.

\textbf{Static Samplers (Unigram and Uniform)}:
The Unigram sampler consistently outperforms the Uniform sampler, but both are static methods that cannot adapt to changes during model training. This inability to adapt to evolving patterns in the data limits their performance, especially in dynamic environments. In contrast, \texttt{MIDX}, being a dynamic sampler, is more capable of capturing changes in the data distribution and adapting its sampling strategy accordingly, leading to improved performance across tasks.

\section{Conclusion}
In this paper, we introduced the \texttt{MIDX}-Sampler, a novel solution to the computational challenges of the softmax function in large-scale multi-class classification tasks. By exploiting the inverted multi-index structure, the softmax probability can be decomposed into multiple multinomial distributions, improving approximation accuracy. We further optimize sampling efficiency by replacing the query-aware residual probability with a uniform distribution, achieving faster sampling without compromising performance. From a theoretical perspective, we conduct a comprehensive analysis, addressing key aspects such as the KL divergence between the sampling proposals and the target softmax distribution, gradient bias compared to the full softmax, convergence rate, and generalization error bounds. The theoretical results show that reducing the bias in the sampling distribution relative to the softmax leads to faster convergence and improved generalization performance. This insight motivates the design of quantizers to minimize the KL divergence to the softmax distribution.
Finally, we present extensive experiments on large-scale language models, sequential recommendation systems, and extreme multi-class classification tasks. The results demonstrate that the MIDX-Sampler consistently outperforms existing methods in both efficiency and effectiveness. Our work offers a powerful alternative to softmax-based approaches, providing a scalable solution for large-scale multi-class problems and paving the way for future advancements in efficient machine learning applications.


\acks{The work was supported by grants from the National Natural Science Foundation of China (No. U24A20253).}


\newpage

\appendix
\section*{Appendix A. Summary of Notations}
We summarize the notations within this paper in the following table for reference:
\begin{table}[ht]
    \centering
    \begin{tabular}{c|c}
    \toprule
    Notation     & Description  \\ \midrule
      $N$   & Number of Classes, $\vert \mathcal{Y} \vert$ \\
      $M$   & Number of Sampled Negative Classes \\
      $\mathcal{Y}$ & Class Set \\
      $\mathbb{S}_M$ & Set of sampled classes \\
      $\bm{z}$ & $D$-dimensional Query Embedding after encoder\\ 
      $ \bm{q}_i $ & $D$-dimensional Class Embedding after encoder \\ 
      $\tilde{\bm{q}}$ & $D$-dimensional Residual Vector after quantizer\\ 
      $o_i$ & $\bm{z}^\top \bm{q}_i$, Logit for the class $i$ \\
      $ \tilde{o}_i $ & $\bm{z}^\top \tilde{\bm{q}}_i$, Residual score for the class $i$ \\
      $p_i$ & Softmax probability \\
      $q_i$ & Sampling probability \\
      $D$ & Embedding Dimension \\
      $ \mathcal{C}_l $ & $l$-th Codebook \\ 
      $ \bm{c}_j^l$ & The $j$-th codeword in the codebook $ \mathcal{C}_l $ \\
      $B$ & Number of Codebooks\\
      $K$ & Number of codewords in each Codebook \\
    \bottomrule
    \end{tabular}
    \caption{Notations}
    \label{tab:notations}
\end{table}

\section*{Appendix B. Proof Details}

\subsection*{Appendix B.1 Proof in Section~\ref{sec:theo_distri_bias}}
\noindent
{\bf Theorem~\ref{theorem:kl_unif}} {\it Assume $o_i = \bm{z}^\top \bm{q}_i$ denotes the similarity score with respect to the query $\bm{z}$ and the $i$-th class and $\bm{o}\in \mathbb{R}^{N}$ is a vector involving all similarity scores over all classes. The KL-divergence from the target softmax distribution $P(\cdot|\bm{z})$ to the proposal distribution $Q_{\text{uniform}} (\cdot | \bm{z})$ according to the uniform sampling can be bounded from below:
    \begin{displaymath}
        0 \le \mathcal{D}_{KL}\left[   Q_{\text{uniform}} (\cdot | \bm{z}) \Vert P  (\cdot | \bm{z}) \right] \leq 2 \Vert \bm{o} \Vert_{\infty}.
    \end{displaymath}
}

\begin{proof}
Given the softmax distribution 
$$ P (i | \bm{z}) = \frac{\exp (o_i)}{\sum_{j=1}^N \exp (o_j)} $$ and the sampling distribution of Uniform sampler $$ Q_{\text{uniform}} (i | \bm{z}) = \frac{1}{N} $$ the KL-divergence has the following bound:
    \begin{displaymath}
        \begin{split}
        \mathcal{D}_{KL}\left[   Q_{\text{uniform}} (\cdot | \bm{z}) \Vert P  (\cdot | \bm{z}) \right] & = \sum_{i=1}^N Q_{\text{uniform}} (i | \bm{z}) \cdot \log \frac{Q_{\text{uniform}} (i | \bm{z})}{P (i | \bm{z})} \\ 
        & = \sum_{i=1}^N \frac{1}{N} \cdot \log \frac{\sum_{j=1}^N \exp (o_j)}{N \cdot \exp (o_i)} \\
        & \le \sum_{i=1}^N \frac{1}{N} \cdot \log \frac{\sum_{j=1}^N  \exp ( \Vert \bm{o} \Vert_{\infty})}{N \exp( - \Vert \bm{o} \Vert_{\infty})} \\
        & = \sum_{i=1}^N \frac{1}{N} \cdot \log \exp (2\Vert \bm{o} \Vert_{\infty}) =  2\Vert \bm{o} \Vert_{\infty}\\ 
    \end{split}
    \end{displaymath}
    $\mathcal{D}_{KL}\left[  Q_{\text{uniform}} (\cdot | \bm{z})  \Vert P (\cdot | \bm{z})  \right]  > 0 $ holds due to the non-negativity of the KL-divergence.
\end{proof}

\noindent
{\bf Theorem~\ref{theorem:kl_unig}} {\it
Assume $o_i = \bm{z}^\top \bm{q}_i$ denotes the similarity score with respect to the query $\bm{z}$ and the $i$-th class and $\bm{o}\in \mathbb{R}^{N}$ is a vector involving all similarity scores over all classes. $q_{min}$ and $q_{max}$ denote the minimal and maximal probability according to the normalized unigram distribution given the data distribution.
The KL-divergence from the softmax distribution $P(\cdot|\bm{z})$ to the proposal distribution $Q_{\text{unigram}} (\cdot| \bm{z})$ according to the unigram sampling can be bounded from below:
    \begin{displaymath}
        0 \leq \mathcal{D}_{KL}\left[   Q_{\text{unigram}} (\cdot | \bm{z}) \Vert P  (\cdot | \bm{z}) \right] \leq 2 \Vert \bm{o} \Vert_{\infty} + \ln N q_{max}
    \end{displaymath}
}

\begin{proof}
Given the sampling distribution of the Unigram sampler $$ Q_{\text{unigram}} (i | \bm{z}) = q_i $$
the KL-divergence has the following bound:
\begin{displaymath}
    \begin{split}
        \mathcal{D}_{KL}\left[   Q_{\text{unigram}} (\cdot | \bm{z}) \Vert P  (\cdot | \bm{z}) \right] & = \sum_{i=1}^N Q_{\text{unigram}} (i | \bm{z}) \cdot \log \frac{Q_{\text{unigram}} (i | \bm{z})}{P (i | \bm{z})} \\
        & = \sum_{i=1}^N Q_{\text{unigram}} (i | \bm{z}) \cdot \log \frac{q_i \sum_{j=1}^N \exp (o_j) }{\exp (o_i) } \\
        & \le \sum_{i=1}^N Q_{\text{unigram}} (i | \bm{z}) \cdot \log \frac{q_{max} \sum_{j=1}^N \exp (\Vert \bm{o} \Vert_{\infty})}{\exp (- \Vert \bm{o} \Vert_{\infty})} \\ 
        & = \sum_{i=1}^N Q_{\text{unigram}} (i | \bm{z}) \cdot \log \exp ( 2 \Vert \bm{o} \Vert_{\infty} + \ln N q_{max}) \\
        & = 2 \Vert \bm{o} \Vert_{\infty} + \ln N q_{max}
    \end{split}
\end{displaymath}
where $q_{max}$ denotes the maximum frequency given the data set.
Considering the non-negativity of the KL-divergence, the lower bound of the KL-divergence should be greater than zero values.
\end{proof}

\noindent
{\bf Theorem~\ref{theorem:kl_midx_uni}} {\it
Assuming that the $\tilde{o}_i = \bm{z}^\top \tilde{\bm{q}}_i$ is the similarity score with respect to the residual embedding $ \bm{\tilde{q}}_i$ and the vector $\tilde{\bm{o}}\in \mathbb{R}^N$ denotes score vector over all classes, the KL divergence from the softmax distribution $P(\cdot|\bm{z})$ to the proposal distribution $Q_{\text{midx}} (\cdot| \bm{z})$ according to the midx sampler can be bounded from below:
	\begin{displaymath}
		0 \le \mathcal{D}_{KL}\left[   Q_{\text{midx}}(\cdot|\bm{z}) \Vert P(\cdot|\bm{z}) \right] \le 2  \Vert \tilde{\bm{o}} \Vert_\infty.
	\end{displaymath}
}

\begin{proof}
Given the sampling distribution of the \texttt{MIDX} sampler 
$$
Q_{\text{midx}} (i | \bm{z})  = \frac{\exp (o_i - \tilde{o}_i)}{\sum_{j=1}^N \exp (o_j - \tilde{o}_j)}
$$
KL divergence has the following bound:
\begin{displaymath}
    \begin{split}
        \mathcal{D}_{KL}\left[   Q_{\text{midx}}(\cdot|\bm{z}) \Vert P(\cdot|\bm{z}) \right] & = \sum_{j=1}^N Q_{\text{midx}}(i | \bm{z}) \cdot \log \frac{Q_{\text{midx}}(i | \bm{z})}{P(i|\bm{z})} \\
        & = \sum_{j=1}^N Q_{\text{midx}}(i | \bm{z}) \cdot \log \frac{\exp (o_i - \tilde{o}_i)}{\sum_{j=1}^N \exp (o_j - \tilde{o}_j)} \cdot \frac{\sum_{k=1}^N \exp (o_k)}{\exp (o_i)}  \\ 
        & = \sum_{j=1}^N Q_{\text{midx}}(i | \bm{z}) \cdot \log \frac{\sum_{k=1}^N \exp (o_k - \tilde{o}_k) \exp (\tilde{o}_k)}{\sum_{j=1}^N \exp (o_j - \tilde{o}_j) \exp (\tilde{o}_i)} \\ 
        & \le \sum_{j=1}^N Q_{\text{midx}}(i | \bm{z}) \cdot \log \frac{\sum_{k=1}^N \exp (o_k - \tilde{o}_k) }{\sum_{j=1}^N \exp (o_j - \tilde{o}_j) } \cdot \frac{\exp (\Vert \tilde{\bm{o}} \Vert_\infty)}{\exp (- \Vert \tilde{\bm{o}} \Vert_\infty)} \\ 
        & = \le \sum_{j=1}^N Q_{\text{midx}}(i | \bm{z}) \cdot \log \exp (2 \Vert \tilde{\bm{o}} \Vert_\infty) = 2 \Vert \tilde{\bm{o}} \Vert_\infty
    \end{split}
\end{displaymath}
$\mathcal{D}_{KL}\left[   Q_{\text{midx}}(\cdot|\bm{z}) \Vert P(\cdot|\bm{z}) \right] \ge 0 $ holds due to the non-negativity of the KL-divergence. 
\end{proof}

\subsection*{Appendix B.2 Proof in Section~\ref{sec:theo_gradient_bias}}

In this appendix, we prove the following theorem from
Section~\ref{sec:theo_gradient_bias}:

\noindent
{\bf Assumption~\ref{theorem:assump_gradient_bias}}
{\it The following conditions hold for the query encoder and classes encoder:

1. The encoder functions (mapping functions $\phi_c$ and $\phi_q$) are L-Lipschiz in the parameter $\theta$. In particular, we have $\Vert \phi\left( \theta\right) - \phi \left( \theta'\right) \Vert_2 \leq L \Vert \theta - \theta' \Vert_2$ for $\theta$, $\theta'$.

2. The similarity score functions have bounded gradients, i.e., we have $\Vert \nabla o_{j} \Vert_\infty \leq U$ for all $j \in [N]$.
}

\noindent
{\bf Theorem~\ref{theorem:gradient_error_general} }
{\it Let $s_1, s_2, ..., s_M$ i.i.d. random variables sampled from proposal $Q$, the gradient bias follows (\textbf{Proposition 7 in~\citep{metelli2020importance})}:} 
\begin{displaymath}
\begin{split}
    \vert \mathbb{E}[\nabla_{\theta_t} \ell'] - \nabla_{\theta_t} \ell \vert 
    & \le U \min \left\{2, \sqrt{\frac{d_2(P \Vert Q) - 1}{M + 1}}\right\}
\end{split}
\end{displaymath}
where $d_2(P\Vert Q) = \mathbb{E}_{i\sim P} [p_i/ q_i]$ denotes the exponential in base 2 of the Renyi divergences with the order of 2, which evaluates the difference from the target distribution. 
\begin{proof}
Recall that the \texttt{Full} softmax loss follows as:
    \begin{displaymath}
        \ell (\bm{y}, \bm{p}) = - \log \frac{\exp o_i}{ \sum_{j=1}^N\exp o_j}
    \end{displaymath}
    The gradient for the softmax loss follows:
    \begin{displaymath}
        \nabla_{\theta} \ell(\bm{y}, \bm{p}) = - \nabla_{\theta} o_i + \sum_{j=1}^N p_j \cdot \nabla_{\theta} o_j = - \nabla_{\theta} o_i + \mathbb{E}_{j \sim P} [\nabla_{\theta} o_j]
    \end{displaymath}
    where $p_j = \frac{\exp o_j }{\sum_{k=1}^N \exp o_k}$ denotes the softmax probability of class $j$. To efficiently estimate the gradient of the full softmax loss, the technique of importance sampling is used here to estimate the gradient. With the proposal distribution $Q$ including $M$ sampled classes and the one positive class, the importance weight is $w_i = p_i / q_i$. The estimator is:
    \begin{displaymath}
        \hat{u} = \frac{1}{M+1} \sum_{k=1}^{M+1} \frac{p_k}{q_k} \nabla_{\theta} o_k = \frac{1}{M+1} \sum_{k=1}^{M+1} w_k \nabla_{\theta} o_k
    \end{displaymath}
    where the samples $k$ are sampled from $Q$ independently. According to the properties in importance sampling, this estimator is unbiased, i.e., $\mathbb{E}_{i\sim Q}[\hat{\mu}] = \mathbb{E}_{i \sim P} [\nabla_{\theta} o_i]$. The self-normalized estimator is further adopted to mitigate the variance problem, which uses the normalized weight $\tilde{w}_i$:
    \begin{displaymath}
    \begin{split}
        \tilde{w}_i & = \frac{w_i}{\sum_{j=1}^{M+1} w_j} = \frac{p_i  / q_i }{ \sum_{j=1}^{M+1} p_j / q_j} \\ 
        & = \frac{\exp (o_i - \ln q_i)}{\sum_{k}^{N} \exp o_j} \cdot \frac{\sum_{l}^{N} \exp o_l}{ \sum_{j=1}^{M+1} \exp (o_j - \ln q_j) } \\
        & = \frac{\exp (o_i - \ln q_i)}{\sum_{j=1}^{M+1} \exp (o_j - \ln q_j)} = p'_i
    \end{split}
\end{displaymath}
where $p'_i$ denotes the softmax probability according to the corrected logits, i.e., $p'_i = \frac{\exp o'_i}{\sum_{j=1}^N \exp o'_j}$. The corrected logit $o'_j$ refers to the definition in Eq~\eqref{eq:correct_logit}. The corresponding estimator follows as:
\begin{displaymath}
    \begin{split}
        \tilde{\mu} = \sum_{k=1}^{M+1} \tilde{w}_k \nabla_{\theta_t} o_k = \sum_{k=1}^{M+1} p'_k \nabla_{\theta_t} o_k 
    \end{split}
\end{displaymath}
Let's review the sampled softmax and its gradient.
\begin{displaymath}
    \ell'(\bm{y}', \bm{p}') = - \log \frac{ \exp o'_i}{\sum_{j=1}^{M+1} \exp o'_j}, \nabla_{\theta} \ell'(\bm{y}', \bm{p}') = - \nabla_{\bm{\theta}} o_i + \sum_{j=1}^{M+1} p'_j \cdot \nabla_{\bm{\theta}} o_j = - \nabla_{\bm{\theta}} o_i + \tilde{\mu}
\end{displaymath}
where $M$ denotes the number of sampled negative classes. 
Different from $\hat{\mu}$, $\tilde{\mu}$ is biased but consistent. Obviously, $\vert \tilde{\mu}\vert \le \Vert \nabla_{\theta_t} o_j \Vert\infty$ holds. 

To quantify the discrepancy between the proposal distribution $Q$ and the target distribution $P$, here we will use the Renyi divergence with second-order and its exponential. 
\begin{displaymath}
    D_2 \left( P \Vert Q \right) = \log \sum_{i=1}^N p_i \cdot \frac{p_i}{q_i}, \quad d_2 \left( P \Vert Q \right) = \sum_{i=1}^N p_i \cdot \frac{p_i}{q_i}
\end{displaymath}
The gradient bias for the sampled softmax with the proposal distribution follows:
\begin{align}
    \left\vert \mathbb{E}_{i\sim Q}[\nabla_{\theta} \ell' ] - \nabla_{\theta}\ell \right \vert 
    & = \left\vert \mathbb{E}_{i\sim Q}\left[\tilde{\mu} \right] -  \mathbb{E}_{j\sim P}\left[\nabla_{\theta} o_j\right]\right \vert \nonumber\\ 
    & = \left\vert \mathbb{E}_{i \sim Q} \left[ \tilde{\mu} - \mathbb{E}_{j\sim P}\left[\nabla_{\theta} o_j\right]\right] \right\vert \nonumber\\
    & = \left\vert \mathbb{E}_{i\sim Q} \left[ \tilde{\mu} - \mathbb{E}_{i\sim Q} [\hat{\mu}]\right] \right\vert = \left\vert \mathbb{E}_{i\sim Q} \left[ \tilde{\mu} - \hat{\mu}\right] \right\vert \nonumber \\ 
    & \le \mathbb{E}_{i \sim Q} \left[ \left\vert \tilde{\mu} - \hat{\mu} \right\vert \right] \nonumber \\ 
    & = \mathbb{E}_{i \sim Q} \left[ \left\vert \frac{\sum_{i=1}^{M+1} w_i \nabla_{\theta_t} o_i}{\sum_{j=1}^{M+1} w_j} - \frac{\sum_{i=1}^{M+1} w_i \nabla_{\theta_t} o_i}{M+1}\right\vert \right] \nonumber\\ 
    & = \mathbb{E}_{i \sim Q} \left[ \left\vert \frac{\sum_{i=1}^{M+1} w_i \nabla_{\theta_t} o_i}{\sum_{j=1}^{M+1} w_j}\right\vert \left\vert 1 - \frac{\sum_{i=1}^{M+1} w_i}{M+1}\right\vert \right] \nonumber\\ 
    & \overset{(A)}{\le} \mathbb{E}_{i \sim Q} \left[ \left( \frac{\sum_{i=1}^{M+1} w_i \nabla_{\theta_t} o_i}{\sum_{j=1}^{M+1} w_j}\right)^2 \right]^{\frac{1}{2}} \cdot \mathbb{E}_{i \sim Q} \left[ \left( 1 - \frac{\sum_{i=1}^{M+1} w_i}{M+1} \right)^2 \right]^{\frac{1}{2}}  \nonumber\\ 
    & \overset{(B)}{\le} \Vert \nabla_{\theta_t} o_j \Vert\infty  \cdot \mathbb{E}_{i \sim Q} \left[ \left( 1 - \frac{\sum_{i=1}^{M+1} w_i}{M+1} \right)^2 \right]^{\frac{1}{2}}\nonumber\\
    & \overset{(C)}{\le} U \sqrt{\frac{d_2(P \Vert Q) - 1}{M + 1}} \label{eq:gradient_bias_overall}
\end{align}
Step $(A)$ follows by applying Cauchy–Schwartz inequality. Step $(B)$ follows depending on the observation that $\left( \tilde{w}_i \nabla_{\theta_t} o_j \right)^2 \le \Vert \nabla_{\theta_t} o_j \Vert^2_\infty $.  $(C)$ holds following Lemma 1 in~\citep{cortes2010learning}.
\end{proof}

According to the above theorem, the gradient bias is bounded by the distribution divergence and the number of samples. With a greater divergence between the softmax distribution and the proposal distribution, the upper bound of the gradient bias would get greater. With more classes sampled, the estimator will be less-biased. For the different proposals, we will derive the divergence from the softmax distribution and get the corresponding results. 

\noindent
{\bf Theorem~\ref{theorem:gradient_error_uni}}
{\it Let $s_1, s_2, ..., s_M$ i.i.d. random variables sampled from uniform proposal $Q$, the gradient approximation is bounded by:}
\begin{displaymath}
    \left\vert \mathbb{E}[\nabla_{ \theta_t} \ell'] - \nabla_{ \theta_t}\ell \right\vert  \le \min \left\{2, U \sqrt{\frac{\exp \left(2 \Vert \bm{o} \Vert_\infty \right) - 1 }{M + 1}} \right\}
\end{displaymath}
\begin{proof}
With the uniform sampler, the divergence follows:
\begin{displaymath}
\begin{split}
    d_2(P\Vert Q_{uni}) & = \mathbb{E}_{i\sim P} \left[ \frac{p_i}{q_i}\right] = \mathbb{E}_{i\sim P}\left[  \frac{N \exp o_i}{\sum_{j=1}^N \exp o_j} \right] \\ 
    & \le \sum_{i=1}^N \frac{\exp o_i}{\sum_{j=1}^N \exp o_j}  \exp \left(2 \Vert \bm{o} \Vert_\infty \right) = \exp \left(2 \Vert \bm{o} \Vert_\infty \right)
\end{split}
\end{displaymath}
Putting the results in Theorem~\ref{theorem:gradient_error_general}, we can get the approximation for the uniform sampler:
\begin{displaymath}
    \left\vert \mathbb{E}[\nabla_{ \theta_t} \ell'] - \nabla_{ \theta_t}\ell \right\vert  \le \min \left\{2, U \sqrt{\frac{\exp \left(2 \Vert \bm{o} \Vert_\infty \right) - 1 }{M + 1}} \right\}
\end{displaymath}
\end{proof}

\noindent
{\bf Theorem~\ref{theorem:gradient_error_pop}}
{\it Let $s_1, s_2, ..., s_M$ i.i.d. random variables sampled from unigram proposal $Q$, the gradient approximation is bounded by:}
\begin{displaymath}
    \left\vert \mathbb{E}[\nabla_{ \theta_t} \ell'] - \nabla_{ \theta_t}\ell \right\vert \le \min \left\{2, U \sqrt{\frac{\exp \left(2 \Vert \bm{o} \Vert_\infty - \ln q_{min}\right) - 1 }{M + 1}} \right\} 
\end{displaymath}
\begin{proof}
With the unigram sampler, the divergence follows:
\begin{displaymath}
\begin{split}
    d_2(P\Vert Q_{unigram}) & = \mathbb{E}_{i\sim P} \left[ \frac{p_i}{q_i}\right] = \mathbb{E}_{i\sim P}\left[   \frac{\exp o_i}{q_i \sum_{j=1}^N \exp o_j} \right] \\ 
    & \le \sum_{i=1}^N \frac{\exp o_i}{\sum_{j=1}^N \exp o_j}  \exp \left(2 \Vert \bm{o} \Vert_\infty - \ln q_{min} \right) = \exp \left(2 \Vert \bm{o} \Vert_\infty - \ln q_{min}\right)
\end{split}
\end{displaymath}
where $q_{min}$ denotes the minimum unigram probability over the class.
Putting the results in Theorem~\ref{theorem:gradient_error_general}, we can get the approximation for the unigram sampler:
\begin{displaymath}
    \left\vert \mathbb{E}[\nabla_{ \theta_t} \ell'] - \nabla_{ \theta_t}\ell \right\vert  \le \min \left\{2, U \sqrt{\frac{\exp \left(2 \Vert \bm{o} \Vert_\infty - \ln q_{min}\right) - 1 }{M + 1}} \right\}
\end{displaymath}
\end{proof}

\noindent
{\bf Theorem~\ref{theorem:gradient_error_midx}}
{\it Let $s_1, s_2, ..., s_M$ i.i.d. random variables sampled from \texttt{MIDX} proposal $Q$, the gradient approximation is bounded by:}
\begin{displaymath}
    \left\vert \mathbb{E}[\nabla_{ \theta_t} \ell'] - \nabla_{ \theta_t}\ell \right\vert \le \min \left\{2, U \sqrt{\frac{\exp (2 \Vert \tilde{\bm{o}} \Vert_\infty) - 1 }{M + 1}} \right\} 
\end{displaymath}
\begin{proof}
With the \texttt{MIDX} sampler, the divergence follows:
\begin{displaymath}
\begin{split}
    d_2(P\Vert Q_{midx}) & = \mathbb{E}_{i\sim P} \left[ \frac{p_i}{q_i}\right] = \mathbb{E}_{i\sim P}\left[   \frac{\exp o_i}{ \sum_{j=1}^N \exp o_j}  \frac{\sum_{k-1}^N \exp (o_k - \tilde{o}_k)}{\exp (o_i - \tilde{o_i})}\right] \\ 
    & = \mathbb{E}_{i\sim P}\left[   \frac{\exp \tilde{o}_i  \sum_{k-1}^N \exp (o_k - \tilde{o}_k)}{ \sum_{j=1}^N \exp (o_j - \tilde{o}_j) \exp \tilde{o}_j } \right] \\ 
    & \le \mathbb{E}_{i\sim P}\left[ \exp (2 \Vert \tilde{\bm{o}} \Vert_\infty) \frac{ \sum_{k-1}^N \exp (o_k - \tilde{o}_k)}{ \sum_{j=1}^N \exp (o_j - \tilde{o}_j)  } \right] = \exp (2 \Vert \tilde{\bm{o}} \Vert_\infty)
\end{split}
\end{displaymath}
Putting the results in Theorem~\ref{theorem:gradient_error_general}, we can get the approximation for the \texttt{MIDX} sampler:
\begin{displaymath}
    \left\vert \mathbb{E}[\nabla_{ \theta_t} \ell'] - \nabla_{ \theta_t}\ell \right\vert  \le \min \left\{2, U \sqrt{\frac{\exp (2 \Vert \tilde{\bm{o}} \Vert_\infty) - 1 }{M + 1}} \right\}
\end{displaymath}
\end{proof}

\vskip 0.2in

\bibliography{sample}

\begin{thebibliography}{44}
\providecommand{\natexlab}[1]{#1}
\providecommand{\url}[1]{\texttt{#1}}
\expandafter\ifx\csname urlstyle\endcsname\relax
  \providecommand{\doi}[1]{doi: #1}\else
  \providecommand{\doi}{doi: \begingroup \urlstyle{rm}\Url}\fi

\bibitem[Aly(2005)]{aly2005survey}
M.~Aly.
\newblock Survey on multiclass classification methods.
\newblock \emph{Neural Netw}, 19\penalty0 (1-9):\penalty0 2, 2005.

\bibitem[Andoni and Indyk(2008)]{andoni2008near}
A.~Andoni and P.~Indyk.
\newblock Near-optimal hashing algorithms for approximate nearest neighbor in
  high dimensions.
\newblock \emph{Communications of the ACM}, 51\penalty0 (1):\penalty0 117--122,
  2008.

\bibitem[Babenko and Lempitsky(2014)]{babenko2014inverted}
A.~Babenko and V.~Lempitsky.
\newblock The inverted multi-index.
\newblock \emph{IEEE transactions on pattern analysis and machine
  intelligence}, 37\penalty0 (6):\penalty0 1247--1260, 2014.

\bibitem[Bengio and Sen{\'e}cal(2008)]{bengio2008adaptive}
Y.~Bengio and J.-S. Sen{\'e}cal.
\newblock Adaptive importance sampling to accelerate training of a neural
  probabilistic language model.
\newblock \emph{IEEE Transactions on Neural Networks}, 19\penalty0
  (4):\penalty0 713--722, 2008.

\bibitem[Bentley(1975)]{bentley1975multidimensional}
J.~L. Bentley.
\newblock Multidimensional binary search trees used for associative searching.
\newblock \emph{Communications of the ACM}, 18\penalty0 (9):\penalty0 509--517,
  1975.

\bibitem[Bishop and Nasrabadi(2006)]{bishop2006pattern}
C.~M. Bishop and N.~M. Nasrabadi.
\newblock \emph{Pattern recognition and machine learning}, volume~4.
\newblock Springer, 2006.

\bibitem[Blanc and Rendle(2018)]{blanc2018adaptive}
G.~Blanc and S.~Rendle.
\newblock Adaptive sampled softmax with kernel based sampling.
\newblock In \emph{International conference on machine learning}, pages
  590--599. PMLR, 2018.

\bibitem[Cortes et~al.(2010)Cortes, Mansour, and Mohri]{cortes2010learning}
C.~Cortes, Y.~Mansour, and M.~Mohri.
\newblock Learning bounds for importance weighting.
\newblock \emph{Advances in neural information processing systems}, 23, 2010.

\bibitem[Douze et~al.(2024)Douze, Guzhva, Deng, Johnson, Szilvasy, Mazar{\'e},
  Lomeli, Hosseini, and J{\'e}gou]{douze2024faiss}
M.~Douze, A.~Guzhva, C.~Deng, J.~Johnson, G.~Szilvasy, P.-E. Mazar{\'e},
  M.~Lomeli, L.~Hosseini, and H.~J{\'e}gou.
\newblock The faiss library.
\newblock \emph{arXiv preprint arXiv:2401.08281}, 2024.

\bibitem[Fu et~al.()Fu, Xiang, Wang, and Cai]{fu12fast}
C.~Fu, C.~Xiang, C.~Wang, and D.~Cai.
\newblock Fast approximate nearest neighbor search with the navigating
  spreading-out graph.
\newblock \emph{Proceedings of the VLDB Endowment}, 12\penalty0 (5).

\bibitem[Ge et~al.(2013)Ge, He, Ke, and Sun]{ge2013optimized}
T.~Ge, K.~He, Q.~Ke, and J.~Sun.
\newblock Optimized product quantization.
\newblock \emph{IEEE transactions on pattern analysis and machine
  intelligence}, 36\penalty0 (4):\penalty0 744--755, 2013.

\bibitem[Gersho and Gray(2012)]{gersho2012vector}
A.~Gersho and R.~M. Gray.
\newblock \emph{Vector quantization and signal compression}, volume 159.
\newblock Springer Science \& Business Media, 2012.

\bibitem[Goodman(2001)]{goodman2001classes}
J.~Goodman.
\newblock Classes for fast maximum entropy training.
\newblock In \emph{2001 IEEE International Conference on Acoustics, Speech, and
  Signal Processing. Proceedings (Cat. No. 01CH37221)}, volume~1, pages
  561--564. IEEE, 2001.

\bibitem[Gutmann and Hyv{\"a}rinen(2010)]{gutmann2010noise}
M.~Gutmann and A.~Hyv{\"a}rinen.
\newblock Noise-contrastive estimation: A new estimation principle for
  unnormalized statistical models.
\newblock In \emph{Proceedings of the thirteenth international conference on
  artificial intelligence and statistics}, pages 297--304. JMLR Workshop and
  Conference Proceedings, 2010.

\bibitem[Guttman(1984)]{guttman1984r}
A.~Guttman.
\newblock R-trees: A dynamic index structure for spatial searching.
\newblock In \emph{Proceedings of the 1984 ACM SIGMOD international conference
  on Management of data}, pages 47--57, 1984.

\bibitem[Jang et~al.(2016)Jang, Gu, and Poole]{jang2016categorical}
E.~Jang, S.~Gu, and B.~Poole.
\newblock Categorical reparameterization with gumbel-softmax.
\newblock \emph{arXiv preprint arXiv:1611.01144}, 2016.

\bibitem[Jannach and Ludewig(2017)]{jannach2017recurrent}
D.~Jannach and M.~Ludewig.
\newblock When recurrent neural networks meet the neighborhood for
  session-based recommendation.
\newblock In \emph{Proceedings of the eleventh ACM conference on recommender
  systems}, pages 306--310, 2017.

\bibitem[Jegou et~al.(2010)Jegou, Douze, and Schmid]{jegou2010product}
H.~Jegou, M.~Douze, and C.~Schmid.
\newblock Product quantization for nearest neighbor search.
\newblock \emph{IEEE transactions on pattern analysis and machine
  intelligence}, 33\penalty0 (1):\penalty0 117--128, 2010.

\bibitem[Kang and McAuley(2018)]{kang2018self}
W.-C. Kang and J.~McAuley.
\newblock Self-attentive sequential recommendation.
\newblock In \emph{2018 IEEE international conference on data mining (ICDM)},
  pages 197--206. IEEE, 2018.

\bibitem[Kulis and Darrell(2009)]{kulis2009learning}
B.~Kulis and T.~Darrell.
\newblock Learning to hash with binary reconstructive embeddings.
\newblock \emph{Advances in neural information processing systems}, 22, 2009.

\bibitem[Li et~al.(2019)Li, Zhang, Sun, Wang, Li, Zhang, and
  Lin]{li2019approximate}
W.~Li, Y.~Zhang, Y.~Sun, W.~Wang, M.~Li, W.~Zhang, and X.~Lin.
\newblock Approximate nearest neighbor search on high dimensional
  data—experiments, analyses, and improvement.
\newblock \emph{IEEE Transactions on Knowledge and Data Engineering},
  32\penalty0 (8):\penalty0 1475--1488, 2019.

\bibitem[Lindgren et~al.(2021)Lindgren, Reddi, Guo, and
  Kumar]{lindgren2021efficient}
E.~Lindgren, S.~Reddi, R.~Guo, and S.~Kumar.
\newblock Efficient training of retrieval models using negative cache.
\newblock \emph{Advances in Neural Information Processing Systems},
  34:\penalty0 4134--4146, 2021.

\bibitem[Malkov and Yashunin(2018)]{malkov2018efficient}
Y.~A. Malkov and D.~A. Yashunin.
\newblock Efficient and robust approximate nearest neighbor search using
  hierarchical navigable small world graphs.
\newblock \emph{IEEE transactions on pattern analysis and machine
  intelligence}, 42\penalty0 (4):\penalty0 824--836, 2018.

\bibitem[Marcus et~al.(1993)Marcus, Santorini, and
  Marcinkiewicz]{marcus-etal-1993-building}
M.~P. Marcus, B.~Santorini, and M.~A. Marcinkiewicz.
\newblock Building a large annotated corpus of {E}nglish: The {P}enn
  {T}reebank.
\newblock \emph{Computational Linguistics}, 19\penalty0 (2):\penalty0 313--330,
  1993.
\newblock URL \url{https://aclanthology.org/J93-2004}.

\bibitem[Martins and Astudillo(2016)]{martins2016softmax}
A.~Martins and R.~Astudillo.
\newblock From softmax to sparsemax: A sparse model of attention and
  multi-label classification.
\newblock In \emph{International conference on machine learning}, pages
  1614--1623. PMLR, 2016.

\bibitem[Menon et~al.(2019)Menon, Rawat, Reddi, and Kumar]{menon2019multilabel}
A.~K. Menon, A.~S. Rawat, S.~Reddi, and S.~Kumar.
\newblock Multilabel reductions: what is my loss optimising?
\newblock \emph{Advances in Neural Information Processing Systems}, 32, 2019.

\bibitem[Merity et~al.(2016)Merity, Xiong, Bradbury, and
  Socher]{merity2016pointer}
S.~Merity, C.~Xiong, J.~Bradbury, and R.~Socher.
\newblock Pointer sentinel mixture models.
\newblock \emph{arXiv preprint arXiv:1609.07843}, 2016.

\bibitem[Metelli et~al.(2020)Metelli, Papini, Montali, and
  Restelli]{metelli2020importance}
A.~M. Metelli, M.~Papini, N.~Montali, and M.~Restelli.
\newblock Importance sampling techniques for policy optimization.
\newblock \emph{Journal of Machine Learning Research}, 21\penalty0
  (141):\penalty0 1--75, 2020.

\bibitem[Mikolov et~al.(2013{\natexlab{a}})Mikolov, Chen, Corrado, and
  Dean]{mikolov2013efficient}
T.~Mikolov, K.~Chen, G.~Corrado, and J.~Dean.
\newblock Efficient estimation of word representations in vector space.
\newblock \emph{arXiv preprint arXiv:1301.3781}, 2013{\natexlab{a}}.

\bibitem[Mikolov et~al.(2013{\natexlab{b}})Mikolov, Sutskever, Chen, Corrado,
  and Dean]{mikolov2013distributed}
T.~Mikolov, I.~Sutskever, K.~Chen, G.~S. Corrado, and J.~Dean.
\newblock Distributed representations of words and phrases and their
  compositionality.
\newblock \emph{Advances in neural information processing systems}, 26,
  2013{\natexlab{b}}.

\bibitem[Mnih and Hinton(2008)]{mnih2008scalable}
A.~Mnih and G.~E. Hinton.
\newblock A scalable hierarchical distributed language model.
\newblock \emph{Advances in neural information processing systems}, 21, 2008.

\bibitem[Mnih and Kavukcuoglu(2013)]{mnih2013learning}
A.~Mnih and K.~Kavukcuoglu.
\newblock Learning word embeddings efficiently with noise-contrastive
  estimation.
\newblock \emph{Advances in neural information processing systems}, 26, 2013.

\bibitem[Morin and Bengio(2005)]{morin2005hierarchical}
F.~Morin and Y.~Bengio.
\newblock Hierarchical probabilistic neural network language model.
\newblock In \emph{International workshop on artificial intelligence and
  statistics}, pages 246--252. PMLR, 2005.

\bibitem[Mussmann and Ermon(2016)]{mussmann2016learning}
S.~Mussmann and S.~Ermon.
\newblock Learning and inference via maximum inner product search.
\newblock In \emph{International Conference on Machine Learning}, pages
  2587--2596. PMLR, 2016.

\bibitem[Rawat et~al.(2019)Rawat, Chen, Yu, Suresh, and
  Kumar]{rawat2019sampled}
A.~S. Rawat, J.~Chen, F.~X.~X. Yu, A.~T. Suresh, and S.~Kumar.
\newblock Sampled softmax with random fourier features.
\newblock \emph{Advances in Neural Information Processing Systems}, 32, 2019.

\bibitem[Robert(1999)]{robert1999monte}
C.~Robert.
\newblock Monte carlo statistical methods, 1999.

\bibitem[Salakhutdinov(2015)]{salakhutdinov2015learning}
R.~Salakhutdinov.
\newblock Learning deep generative models.
\newblock \emph{Annual Review of Statistics and Its Application}, 2\penalty0
  (1):\penalty0 361--385, 2015.

\bibitem[Shrivastava and Li(2014)]{shrivastava2014asymmetric}
A.~Shrivastava and P.~Li.
\newblock Asymmetric lsh (alsh) for sublinear time maximum inner product search
  (mips).
\newblock \emph{Advances in neural information processing systems}, 27, 2014.

\bibitem[Spring and Shrivastava(2017)]{spring2017new}
R.~Spring and A.~Shrivastava.
\newblock A new unbiased and efficient class of lsh-based samplers and
  estimators for partition function computation in log-linear models.
\newblock \emph{arXiv preprint arXiv:1703.05160}, 2017.

\bibitem[Vijayanarasimhan et~al.(2014)Vijayanarasimhan, Shlens, Monga, and
  Yagnik]{vijayanarasimhan2014deep}
S.~Vijayanarasimhan, J.~Shlens, R.~Monga, and J.~Yagnik.
\newblock Deep networks with large output spaces.
\newblock \emph{arXiv preprint arXiv:1412.7479}, 2014.

\bibitem[Walker(1977)]{walker1977efficient}
A.~J. Walker.
\newblock An efficient method for generating discrete random variables with
  general distributions.
\newblock \emph{ACM Transactions on Mathematical Software (TOMS)}, 3\penalty0
  (3):\penalty0 253--256, 1977.

\bibitem[Yang et~al.(2024)Yang, Ding, Huang, Cen, Song, Xu, Dong, and
  Tang]{yang2024does}
Z.~Yang, M.~Ding, T.~Huang, Y.~Cen, J.~Song, B.~Xu, Y.~Dong, and J.~Tang.
\newblock Does negative sampling matter? a review with insights into its theory
  and applications.
\newblock \emph{IEEE Transactions on Pattern Analysis and Machine
  Intelligence}, 2024.

\bibitem[Yi et~al.(2019)Yi, Yang, Hong, Cheng, Heldt, Kumthekar, Zhao, Wei, and
  Chi]{yi2019sampling}
X.~Yi, J.~Yang, L.~Hong, D.~Z. Cheng, L.~Heldt, A.~Kumthekar, Z.~Zhao, L.~Wei,
  and E.~Chi.
\newblock Sampling-bias-corrected neural modeling for large corpus item
  recommendations.
\newblock In \emph{Proceedings of the 13th ACM conference on recommender
  systems}, pages 269--277, 2019.

\bibitem[Zhang et~al.(2013)Zhang, Chen, Wang, and Yu]{zhang2013optimizing}
W.~Zhang, T.~Chen, J.~Wang, and Y.~Yu.
\newblock Optimizing top-n collaborative filtering via dynamic negative item
  sampling.
\newblock In \emph{Proceedings of the 36th international ACM SIGIR conference
  on Research and development in information retrieval}, pages 785--788, 2013.

\end{thebibliography}

\end{document}